\documentclass{article}

\usepackage[preprint]{neurips_2026}
\makeatletter
\renewcommand{\@noticestring}{}
\makeatother

\usepackage[utf8]{inputenc}
\usepackage[T1]{fontenc}
\usepackage{hyperref}
\usepackage{url}
\usepackage{booktabs}
\usepackage{amsmath}
\usepackage{amssymb}
\usepackage{amsfonts}
\usepackage{amsthm}
\usepackage{mathtools}
\usepackage{nicefrac}
\usepackage{microtype}
\usepackage{xcolor}
\usepackage{graphicx}
\usepackage{subcaption}   
\usepackage{enumitem}
\usepackage{tcolorbox}                              \tcbuselibrary{breakable,skins}  

\newtheorem{proposition}{Proposition}
\newtheorem{theorem}{Theorem}
\newtheorem{assumption}{Assumption}
\newtheorem{lemma}{Lemma}

\DeclareMathOperator{\Var}{Var}
\DeclareMathOperator{\Cov}{Cov}
\DeclareMathOperator{\AVar}{AVar}
\newcommand{\method}{\textsc{LACE}}

\AtBeginDocument{%
  \setlength{\abovedisplayskip}{6pt plus 2pt minus 2pt}%
  \setlength{\belowdisplayskip}{6pt plus 2pt minus 2pt}%
  \setlength{\abovedisplayshortskip}{3pt plus 2pt minus 1pt}%
  \setlength{\belowdisplayshortskip}{4pt plus 2pt minus 2pt}%
}

\title{Conditional Evaluation of Language Models with Cheap Auxiliary Signals}

\author{%
  Zhi Zhang\textsuperscript{\rm 1}\thanks{Equal contribution.}\quad
  Lingfeng Lyu\textsuperscript{\rm 2}\footnotemark[1]\quad
  Yue Kang\textsuperscript{\rm 3}\quad
  Doudou Zhou\textsuperscript{\rm 4}\thanks{Corresponding author:
  \texttt{doudouzhou@nus.edu.sg}}\\[6pt]
  \textsuperscript{\rm 1}Department of Statistics and Data Science,
  University of California, Los Angeles\\
  \textsuperscript{\rm 2}Department of Statistics and Finance,
  School of Management,\\ University of Science and Technology of China\\
  \textsuperscript{\rm 3}Microsoft\\
  \textsuperscript{\rm 4}Department of Statistics and Data Science,
  National University of Singapore\\
}

\begin{document}

\maketitle

\begin{abstract}
Aggregate accuracy hides where models succeed and fail. Estimating conditional performance profiles
from gold labels alone is expensive, while cheap auxiliary signals such as LLM-judge scores,
pairwise comparisons, confidence scores, and judge-disagreement features can be collected for every
benchmark item but are often biased or miscalibrated. We propose \method{} (Local Augmented Control-Variate
Evaluation), a semi-supervised estimator for conditional LLM evaluation. The key step is local
centering: after subtracting the conditional mean of a cheap signal within the target profile region,
any linear augmentation has zero conditional mean and therefore cannot change the estimand. The
augmentation coefficient is used only for efficiency, and a local ridge control variate combines a
gold-label residual mean from the labeled subset with a cheap-signal mean from the full item pool.
We prove calibration-free identification, unbiasedness for grouped profiles, local
oracle optimality within centered linear augmentations, and first-order adaptivity to the estimated
coefficient. The resulting gain formula is governed by a population local $R^2$, which
characterizes how the efficiency attainable from the cheap signals varies across profile values.
We also derive corresponding estimators for direct paired model gaps
and deployment-weighted scores. We empirically evaluate the primary
performance-profile estimator on MATH-500, ScienceQA, MMLU, WinoGrande,
HellaSwag, TruthfulQA, GSM8K, and ARC.
\end{abstract}

\section{Introduction}

Large language model evaluation is increasingly a question of conditional performance, not merely
aggregate score. A model may be strong on algebra but weak on geometry, reliable in high-school
science but brittle in elementary grades, or dominant in some subject categories while losing in
others. We index such profiles by a profiling covariate $Z\in\mathcal Z$. The profile space
$\mathcal Z$ may represent a single metadata axis, such as difficulty level or subject category, or a
multidimensional combination of attributes, such as grade, subject, and task type. Such profiles
matter for model selection, deployment targeting, and
scientific diagnosis, echoing broader calls for benchmark reports that expose performance across
tasks, metrics, and relevant subgroups \citep{mitchell2019modelcards,liang2023helm}. The target of
estimation is the conditional performance profile of a candidate model $m$ over benchmark items $i$:
$\theta_m(z)=\Pr(Y_{im}=1\mid Z_i=z)$, where $Y_{im}$ is the gold correctness of model $m$ on item
$i$ and the profiling covariate $Z_i$ is derived from benchmark metadata. Unfortunately, gold labels
are often expensive: they may require expert grading, careful rubrics, or repeated adjudication. With
limited labels, estimates of $\theta_m(z)$ can be too noisy to support fine-grained decisions.

At the same time, modern evaluation pipelines produce many inexpensive signals. LLM-as-a-judge
scores, pairwise comparisons against anchor models, self-reported confidence, and disagreement across
judge prompts are cheap enough to collect on every benchmark item
\citep{zheng2023judging,liu2023geval,dubois2024length}. These signals are useful but dangerous to
treat as calibrated probabilities. A judge can be biased, overconfident, prompt-sensitive, influenced
by output length or position, or systematically wrong on particular subpopulations
\citep{wang2024fair,wu2023style,dubois2024length}. The key question is therefore not whether cheap
evaluators are correct in an absolute sense, but whether they explain residual variation in gold
correctness locally.

\paragraph{Running example: MATH-500.}
Let $Z_i$ record the difficulty of problem $i$, let $Y_{im}$ indicate whether candidate model $m$
answers it correctly, and let $S_{im}$ collect inexpensive item-level information such as judge
scores, pairwise comparisons with anchor models, self-confidence, and judge disagreement. The
scientific target is $\Pr(Y_{im}=1\mid Z_i=z)$, the model's accuracy at difficulty $z$. The kernel
localizes items in the profile space defined by $Z$; it does not measure similarity among the cheap
signals. Centering $S$ within each $Z$-neighborhood therefore lets those signals improve precision
without changing the estimand.

This paper develops \method{} (Local Augmented Control-Variate Evaluation). We present it through
LLM benchmarking because that setting makes the problem concrete, but the statistical problem is more
general: estimate a conditional mean when the gold outcome is observed on a small random subset, while
multiple auxiliary measurements are observed on every item. The estimator starts from a simple
identity: after centering a cheap signal by its conditional mean given $Z_i=z$, any linear adjustment
has conditional expectation zero. The adjustment coefficient is therefore not needed for
identification; it is chosen only to reduce variance. Estimating this coefficient locally gives a
ridge-regularized control variate that can use unlabeled items through the local mean of cheap
auxiliary signals. This construction is related to recent work on local and conditional
prediction-powered inference (PPI) \citep{gu2024localppi,sui2026ppci}, which combines a supplied
outcome prediction with a localized labeled residual correction. \method{} addresses a different
version of the problem: $Z$ alone defines the conditional target, while an additional vector $S$ is
used only for precision. It learns the full coefficient $\beta(z)$ from the same scarce labels, so
the relative value of the cheap signals may change across the profile without adding $S$ to the
conditioning event. The construction also treats continuous, ordered discrete, and categorical
profile spaces within one estimator and provides a local theoretical characterization of the
attainable efficiency gain. Concretely, we contribute a general semi-supervised profiling formulation for
estimating $\mathbb E(Y\mid Z=z)$ with scarce gold outcomes and abundant heterogeneous auxiliary
measurements; a local control-variate estimator covering continuous, ordered discrete, and unordered
categorical profile spaces; theory establishing calibration-free identification
(Proposition~\ref{prop:identity}), group-profile unbiasedness (Theorem~\ref{thm:finite}), oracle
optimality within centered linear augmentations (Theorem~\ref{thm:oracle-optimality}), and
first-order adaptivity (Theorem~\ref{thm:adaptivity}); a local gain formula
$\operatorname{Gain}_m(z)=1/\{1-(1-\pi)R_m^2(z)\}$ that characterizes heterogeneous signal quality across
$\mathcal{Z}$; theoretical extensions to direct paired model gaps and
deployment-weighted scores; and an empirical evaluation of the primary
profile estimator, with cell-level RE ranging from $2.73\times$ to
$11.03\times$ across eight benchmarks, three models, and three label
budgets.

\section{Related Work}

\paragraph{Conditional LLM evaluation.}
Fine-grained and conditional reporting is already central to LLM evaluation. MMLU reports
subject-level performance \citep{hendrycks2021mmlu}; MATH, ScienceQA, GPQA, and MMLU-Pro expose
metadata or splits such as level, grade, subject, and domain
\citep{hendrycks2021math,lu2022learn,rein2024gpqa,wang2024mmluPro};
and evaluation suites such as BIG-bench, HELM, and the Language Model Evaluation
Harness organize results across tasks, scenarios, metrics, and subtasks
\citep{srivastava2022beyond,liang2023helm,gao2024harness}. Recent benchmarks go further by defining
fine-grained skill or instruction axes, including FLASK, IFEval, MT-Bench-101, and BiGGen Bench
\citep{ye2023flask,zhou2023ifeval,bai2024mtbench101,kim2024biggen}. Documentation frameworks such
as model cards also advocate reporting performance across relevant conditions and subgroups
\citep{mitchell2019modelcards}. Our focus is complementary: given such a profiling axis, we ask how
to estimate the corresponding conditional profile when only a small random fraction of items carry
gold labels, using cheap auxiliary signals for variance reduction rather than as replacement labels.

\paragraph{Cheap evaluators, LLM judges, and aggregation.}
LLM judges and pairwise preference evaluators are widely used because they are scalable and often
correlated with human or gold-label assessments \citep{zheng2023judging,liu2023geval}. Related
systems use LLMs to estimate factuality or response quality at far lower cost than
manual evaluation \citep{min2023factscore,dubois2024length}, and multi-agent judging has been
proposed to improve reliability \citep{chan2024chateval}. A parallel line trains or distills
dedicated judge models, including PandaLM, Prometheus, Auto-J, JudgeLM, and CritiqueLLM
\citep{wang2023pandalm,kim2023prometheus,li2024autoj,zhu2023judgelm,ke2024critiquellm}. Pairwise
outcomes are often modeled with Bradley--Terry--Luce (BTL) models
\citep{bradley1952rank}; Chatbot Arena instantiates this at scale through crowdsourced human
preferences \citep{chiang2024chatbotarena}. Pointwise confidence scores can be post-hoc calibrated
to approximate empirical accuracy \citep{guo2017calibration}. Other work aggregates noisy, sparse,
or dependent judge outputs: judge-aware ranking extends BTL models with judge-specific discrimination
parameters \citep{xu2026judgeaware}; CARE models multi-judge scores as arising from a latent
true-quality signal and shared confounding factors \citep{zhao2026care}; and tensor methods model
evaluation as low-rank completion of pairwise comparisons or cluster question--answerer--evaluator
score tensors \citep{li2026tensorcompletion,watanabe2026multiwaypam}. However, LLM judges are known
to exhibit position, verbosity, style, and self-preference biases
\citep{wang2024fair,wu2023style,koo2023cognitive,dubois2024length}. Our estimator imposes neither a
BTL model nor a calibration requirement; it only requires cheap signals to explain local variation in
gold correctness within each subgroup.

\paragraph{Control variates, semi-supervised estimation, and local PPI.}
Control variates are a classical variance-reduction tool in Monte Carlo estimation \citep{owen2013mc}.
Augmented inverse-probability weighting (AIPW), one-step estimation, and semi-supervised regression
estimators combine scarce labels with abundant covariates through fitted means and labeled residual
corrections \citep{robins1994estimation,zhang2019semisupervised,chernozhukov2018double}.
Prediction-powered inference uses machine predictions for scalar estimands
\citep{angelopoulos2023prediction}; PPI++ optimally tunes a global coefficient
\citep{angelopoulos2023ppiplus}. MultiPPI extends this global setting to multiple lower-cost
measurement sources and jointly optimizes acquisition and linear combination weights
\citep{cowenbreen2026multipppi}. Recent work has instead localized PPI to conditional targets.
Local PPI \citep{gu2024localppi} fits a local polynomial to predictions from a supplied model on an
independent unlabeled sample and subtracts a local labeled residual correction. Prediction-powered
conditional inference (PPCI) \citep{sui2026ppci} learns an RKHS localization weight for a conditional
moment at a fixed test point, then applies a prediction-plus-residual decomposition using a supplied
predictor.

\paragraph{Estimator structure in common notation.}
Let $P\in\mathbb R$ denote a supplied scalar prediction and $S\in\mathbb R^K$ a vector of auxiliary
signals. Write $\bar A_D$ for the mean of a variable over $D\in\{L,T\}$ and $\bar A_D(z)$ for its
profile-local counterpart. The four structures can then be displayed as
\[
\begin{aligned}
\text{PPI++ (global scalar):}\quad
  &\widehat\mu=\bar Y_L-\widehat\omega\{\bar P_L-\bar P_T\},\\
\text{Local PPI (conditional scalar):}\quad
  &\widehat\theta(z)=\bar Y_L(z)-\{\bar P_L(z)-\bar P_T(z)\},\\
\text{global vector control variate:}\quad
  &\widehat\mu=\bar Y_L-\widehat\beta^\top\{\bar S_L-\bar S_T\},\\
\text{\method{} (conditional vector):}\quad
  &\widehat\theta(z)=\bar Y_L(z)-\widehat\beta(z)^\top
    \{\bar S_L(z)-\bar S_T(z)\}.
\end{aligned}
\]
These displays use the nested labeled/full-pool notation only to expose estimator structure; the
original PPI++ and Local PPI methods retain their own sampling assumptions. The centered
control-variate algebra itself is classical. Our contribution is the joint construction for a
conditional target with multiple auxiliary signals, profile-dependent relevance, and a local vector
coefficient learned from the same scarce labels with first-order adaptivity.

Our formulation separates two roles that are coupled in these conditional-PPI setups. The profiling
variable $Z$ alone defines the target $\mathbb E(Y\mid Z=z)$, whereas the additional cheap-signal
vector $S$ is used only for precision and is not added to the conditioning event. This distinction is
important in evaluation: setting the conditional-PPI covariate to $Z$ leaves the item-level signals
$S$ unused, while conditioning on $(Z,S)$ changes the scientific target. \method{} instead centers
$S$ within the $Z$-profile region and learns the full vector coefficient $\beta(z)$ from the same
scarce labels. Thus both the magnitude and direction of the auxiliary combination may change with
$z$, without requiring a separately supplied outcome predictor or changing the estimand. The theory
establishes first-order adaptivity to this same-label coefficient estimate and explicitly covers the
nested finite-pool design $L\subset T$ with a non-negligible labeled fraction. The same construction
handles continuous, ordered discrete, and unordered categorical profile spaces and yields the local
gain formula $1/\{1-(1-\pi)R_m^2(z)\}$. In the LLM context, Zhou et al.\ target global win-rate estimation
\citep{zhou2025cv} and Fisch et al.\ apply stratified PPI to pre-specified discrete strata
\citep{fisch2024stratppi}; both target global or stratum-level means. Fogliato et al.\ use
empirical-Bayes shrinkage for small benchmark subgroups \citep{fogliato2024precise};
tinyBenchmarks selects IRT-curated item subsets \citep{polo2024tinybenchmarks}.

\paragraph{Local smoothing, survey estimation, and deployment weighting.}
Our continuous-$Z$ estimator uses kernel weights in the spirit of local nonparametric regression
\citep{fan1996local}; the discrete version is the corresponding group estimator, and the ordinal
version borrows strength across neighboring levels. Ratio and regression estimators in survey
statistics exploit auxiliary information to improve precision
\citep{cochran1977sampling,sarndal1992model}, post-stratification and generalized regression
estimators are discrete analogues of local smoothing, and Horvitz--Thompson weighting handles
unequal inclusion probabilities \citep{horvitz1952generalization}. When a deployment distribution
$Q$ differs from the benchmark distribution $P$, importance weighting corrects distribution shift
\citep{sugiyama2007covariate}. \method{} combines these ideas with local control variates: it targets
nonparametric conditional profiles, allows noisy and biased auxiliary signals, and embeds deployment
reweighting through $r_Q(z)=dQ_Z/dP_Z$ rather than treating it as a post-hoc adjustment to global
accuracy.

\section{\method{}: Local Augmented Control-Variate Evaluation}
\label{sec:method}

\subsection{Setup and profile weights}

We use LLM-evaluation notation, but the ingredients are only a gold outcome, auxiliary
measurements, and a conditioning covariate. Fix a candidate model $m$.
Items in the benchmark pool $T=\{1,\dots,M\}$ have $(Z_i,Y_{im},S_{im})\stackrel{\mathrm{i.i.d.}}{\sim}\mathbb P$, where $Z_i\in\mathcal Z$ is the profiling covariate (continuous, ordered discrete, or unordered categorical), $Y_{im}\in\{0,1\}$ is the gold correctness label, and $S_{im}\in\mathbb R^K$ is the cheap-signal vector. We observe $(Z_i,S_{im})$ for every $i\in T$ but $Y_{im}$ only on a
labeled subset $L\subset T$ of size $n$, drawn uniformly without replacement independently of the item data; write
$\pi=n/M$. The main target is the conditional performance profile
$\theta_m(z)=\mathbb P(Y_{im}=1\mid Z_i=z)$.

The profiling variable $Z$ is prespecified by the scientific question rather than formed by
automatically combining all available metadata. For overlapping tags, each tag may define a separate
low-dimensional query, with the same item contributing to several queries. A joint profile is needed
only when performance on the intersection is itself the target.

For any sample set $A\subseteq T$, local weights $w_{i,A}(z)$ are nonnegative and sum to one over
$A$. When the continuous profile space is a subset of $\mathbb R^q$, we use
$w_{i,A,h}(z)=K_h(Z_i-z)/\sum_{\ell\in A}K_h(Z_\ell-z)$, with
$K_h(u)=h^{-q}K(u/h)$. Ordered discrete profiles use normalized triangular ordinal weights
$w_{i,A}(g)\propto(1-|Z_i-g|/b_{\mathrm{ord}})_+\mathbf 1(i\in A)$, with within-group weights as
the special case $b_{\mathrm{ord}}=1$. Unordered categorical profiles use
$w_{i,A}(g)=\mathbf 1(Z_i=g)/\sum_{\ell\in A}\mathbf 1(Z_\ell=g)$. In what follows,
$w_{i,A}(z)$ denotes the chosen local-weight rule, with bandwidth parameters suppressed when
unambiguous. For any scalar- or vector-valued quantity $V_i$, write
$\bar V_A(z)=\sum_{i\in A}w_{i,A}(z)V_i$; model-specific means such as $\bar S_{m,L}(z)$ use
$V_i=S_{im}$.

\subsection{Centered augmentation principle}

Cheap auxiliary signals may be biased, prompt-sensitive, or miscalibrated, so \method{} never treats
them as substitute labels. Let $\mu_{S,m}(z)=\mathbb E(S_{im}\mid Z_i=z)$. For any deterministic
vector $b(z)$,
\[
\mathbb E\!\left[Y_{im}-b(z)^\top\{S_{im}-\mu_{S,m}(z)\}\mid Z_i=z\right]=\theta_m(z).
\]
Thus centering protects the target: the coefficient controls efficiency, not identification. The
population coefficient that removes the largest amount of local linear variation is
\[
\beta_m^\star(z)=\Sigma_{SS,m}(z)^\dagger\Sigma_{SY,m}(z),
\]
where
\[
\Sigma_{SS,m}(z)=\Var(S_{im}\mid Z_i=z),
\qquad
\Sigma_{SY,m}(z)=\Cov(S_{im},Y_{im}\mid Z_i=z).
\]
Section~\ref{sec:theory} states the formal identification and optimality results.

\subsection{The LACE estimator and local coefficient}
\label{subsec:estimator}

At a target profile value $z$, \method{} combines a labeled residual mean with a full-pool
cheap-signal mean. Using the local weights defined above, define
$\bar Y_{m,L}(z)=\sum_{i\in L}w_{i,L}(z)Y_{im}$ and
$\bar S_{m,A}(z)=\sum_{i\in A}w_{i,A}(z)S_{im}$ for $A\in\{L,T\}$. For any coefficient
$\gamma\in\mathbb R^K$, consider
\begin{equation}
\label{eq:lace-estimator}
\widehat\theta_m(z;\gamma)
=
\bar Y_{m,L}(z)
-
\gamma^\top
\left\{\bar S_{m,L}(z)-\bar S_{m,T}(z)\right\}.
\end{equation}
We set $\gamma$ to a regularized local coefficient. Let $w_{i,A,b}(z)$ denote
coefficient-fitting weights that use the same profile-space rule as $w_{i,A}(z)$, possibly with a
larger bandwidth or ordinal span for stability, and let $\bar S_{m,T,b}(z)$, $\bar S_{m,L,b}(z)$, and
$\bar Y_{m,L,b}(z)$ be the corresponding local means. Define
\[
\widehat\Sigma_{SS,m}(z)
=
\sum_{i\in T}w_{i,T,b}(z)
\{S_{im}-\bar S_{m,T,b}(z)\}\{S_{im}-\bar S_{m,T,b}(z)\}^\top,
\]
\[
\widehat\Sigma_{SY,m}(z)
=
\sum_{i\in L}w_{i,L,b}(z)
\{S_{im}-\bar S_{m,L,b}(z)\}\{Y_{im}-\bar Y_{m,L,b}(z)\},
\]
and set
\[
\widehat\beta_m(z)
=
\{\widehat\Sigma_{SS,m}(z)+\lambda I_K\}^{-1}\widehat\Sigma_{SY,m}(z),
\]
where $\lambda>0$ is a ridge penalty that stabilizes the inversion when the local covariance is
ill-conditioned or the number of labeled items is small relative to $K$.
The final estimator is
$\widehat\theta_m^{\mathrm{LACE}}(z)=\widehat\theta_m(z;\widehat\beta_m(z))$:
\begin{equation}
\label{eq:lace-feasible}
\widehat\theta_m^{\mathrm{LACE}}(z)
=
\bar Y_{m,L}(z)
-
\widehat\beta_m(z)^\top
\left\{\bar S_{m,L}(z)-\bar S_{m,T}(z)\right\}.
\end{equation}
Equivalently,
\[
\widehat\theta_m^{\mathrm{LACE}}(z)
=
\sum_{i\in L}w_{i,L}(z)\{Y_{im}-\widehat\beta_m(z)^\top S_{im}\}
+
\sum_{i\in T}w_{i,T}(z)\widehat\beta_m(z)^\top S_{im}.
\]
Thus the gold labels estimate only the local residual mean, while all items estimate the
cheap-signal component. The ridge term protects against ill-conditioned local covariances; when the
cheap signals are locally uninformative, $\widehat\beta_m(z)$ is shrunk toward zero and the
estimator reduces to the gold-label smoother. When they explain local correctness variation, the
labeled part has lower residual variance, yielding the gain quantified in Section~\ref{sec:theory}.

\subsection{Other evaluation targets}
\label{sec:gaps}

The same centered augmentation applies after changing the outcome, the auxiliary vector, or the
target sampling law. We use this template to define estimators and
derive theory for two LLM-evaluation targets beyond a single model's
profile.

\paragraph{Direct paired model gaps.}
For models $a$ and $b$, define $\Delta_{ab}(z)=\theta_a(z)-\theta_b(z)$ and the paired outcome
$D_{i,ab}=Y_{ia}-Y_{ib}$. Let
$H_{i,ab}=(S_{ia}^\top,S_{ib}^\top,(S_{ia}-S_{ib})^\top)^\top$, with any direct pairwise judge
feature appended when available. The direct gap estimator is
\begin{equation}
\label{eq:lace-gap}
\widehat\Delta_{ab}^{\mathrm{LACE}}(z)
=
\bar D_{ab,L}(z)
-
\widehat\beta_{ab}(z)^\top
\{\bar H_{ab,L}(z)-\bar H_{ab,T}(z)\}.
\end{equation}
Here $\widehat\beta_{ab}(z)$ is fitted by the same ridge covariance formula, replacing $(Y,S)$ by
$(D,H)$. Estimating the gap directly uses the within-item dependence between $Y_{ia}$ and $Y_{ib}$,
which can be more efficient than subtracting two separately estimated profiles.

\paragraph{Deployment-weighted scores.}
Let $Q_Z\ll P_Z$ be a deployment profile distribution and define the joint target law by the
covariate-shift model $Q(dz,dy,ds)=Q_Z(dz)P(dy,ds\mid z)$. Write
$r_Q(z)=dQ_Z(z)/dP_Z(z)$ and $\rho_i=r_Q(Z_i)$. For $A\in\{L,T\}$, define
\[
\bar S_{m,A}^{Q}
=
\frac{\sum_{i\in A}\rho_iS_{im}}{\sum_{i\in A}\rho_i},
\qquad
\bar Y_{m,L}^{Q}
=
\frac{\sum_{i\in L}\rho_iY_{im}}{\sum_{i\in L}\rho_i}.
\]
To estimate the variance-optimal deployment coefficient, set
\[
\widehat\phi^Q_{S,i}=\rho_i\{S_{im}-\bar S^Q_{m,T}\},
\qquad
\widehat\phi^Q_{Y,i}=\rho_i\{Y_{im}-\bar Y^Q_{m,L}\},
\]
and define the influence-moment estimators
\[
\widehat\Gamma^Q_{SS,m}
=
\frac{1}{M}\sum_{i\in T}\widehat\phi^Q_{S,i}\widehat\phi^{Q\top}_{S,i},
\qquad
\widehat\Gamma^Q_{SY,m}
=
\frac{1}{n}\sum_{i\in L}\widehat\phi^Q_{S,i}\widehat\phi^Q_{Y,i}.
\]
For a ridge level $\lambda_{Q,n}>0$, let
\[
\widehat\beta_m(Q)
=
\{\widehat\Gamma^Q_{SS,m}+\lambda_{Q,n}I_K\}^{-1}
\widehat\Gamma^Q_{SY,m}.
\]
Because these are empirical products of the estimated influence quantities, their summands contain
$\rho_i^2=r_Q(Z_i)^2$, rather than a single deployment-weight factor.
The deployment-aware score is
\begin{equation}
\label{eq:lace-deployment}
\widehat\Psi_m^{\mathrm{LACE}}(Q)
=
\bar Y_{m,L}^{Q}
-
\widehat\beta_m(Q)^\top\{\bar S_{m,L}^{Q}-\bar S_{m,T}^{Q}\},
\end{equation}
with the same self-normalized deployment means used both for centering and for the final score.

\section{Theory}
\label{sec:theory}

This section collects the formal guarantees for the estimators in Section~\ref{sec:method}; proofs
are deferred to Appendix~\ref{app:proof}. 
Throughout this section, all conditional moments are
evaluated at profile values $z$ at which the chosen versions of the corresponding conditional
moment functions are well defined.
Let
$\mu_{S,m}(z)=\mathbb E(S_{im}\mid Z_i=z)$,
$\Sigma_{SS,m}(z)=\Var(S_{im}\mid Z_i=z)$,
$\Sigma_{SY,m}(z)=\Cov(S_{im},Y_{im}\mid Z_i=z)$,
$\sigma^2_{Y,m}(z)=\Var(Y_{im}\mid Z_i=z)$, and
$R_m^2(z)=\Sigma_{SY,m}(z)^\top\Sigma_{SS,m}(z)^\dagger\Sigma_{SY,m}(z)/\sigma^2_{Y,m}(z)$ when $\sigma^2_{Y,m}(z)>0$.
Here $R_m^2(z)$ is a population (oracle) quantity; we do not estimate or report it in the real-data experiments.
Unless stated otherwise, limits take $n,M\to\infty$. We write $\rightsquigarrow$ for convergence
in distribution, $\mathcal N(0,v)$ for a normal law with variance $v$, $O_p(\cdot)$ and
$o_p(\cdot)$ under the joint benchmark and label-subsampling law, and $\AVar\{\cdot\}$ for the
leading asymptotic variance of the unscaled estimator.

\begin{proposition}[Calibration-free identification]
\label{prop:identity}
Assume $\mathbb E(\|S_{im}\|_2^2\mid Z_i=z)<\infty$.
For any deterministic vector $b(z)\in\mathbb R^K$,
$\mathbb E[Y_{im}-b(z)^\top\{S_{im}-\mu_{S,m}(z)\}\mid Z_i=z]=\theta_m(z)$.
Among all linear centered augmentations, the conditional variance is minimized by any solution of
$\Sigma_{SS,m}(z)b=\Sigma_{SY,m}(z)$; the minimum-norm solution is
$\beta_m^\star(z)=\Sigma_{SS,m}(z)^\dagger\Sigma_{SY,m}(z)$.
\end{proposition}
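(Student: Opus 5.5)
The plan is to work conditionally on $Z_i=z$ throughout and treat $b=b(z)$ as a fixed vector. The second-moment assumption $\mathbb E(\|S_{im}\|_2^2\mid Z_i=z)<\infty$ ensures that $\mu_{S,m}(z)$ and $\Sigma_{SS,m}(z)$ are finite. Because $Y_{im}\in\{0,1\}$ is bounded, Cauchy--Schwarz then makes $\Sigma_{SY,m}(z)$ finite as well. Every expectation and variance below is therefore well defined.

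\textbf{Identification.} I will use linearity of conditional expectation:
\[
\mathbb E[Y_{im}-b^\top\{S_{im}-\mu_{S,m}(z)\}\mid Z_i=z]=\theta_m(z)-b^\top\{\mathbb E(S_{im}\mid Z_i=z)-\mu_{S,m}(z)\}=\theta_m(z).
\]
This uses $\mathbb E(Y_{im}\mid Z_i=z)=\Pr(Y_{im}=1\mid Z_i=z)=\theta_m(z)$ for binary $Y$.

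\textbf{Variance minimization.} Expanding the conditional variance gives the quadratic
\[
V(b)=\Var(Y_{im}-b^\top S_{im}\mid Z_i=z)=\sigma^2_{Y,m}(z)-2b^\top\Sigma_{SY,m}(z)+b^\top\Sigma_{SS,m}(z)b .
\]
The centering constant does not affect the variance. Since $\Sigma_{SS,m}(z)$ is positive semidefinite, $V$ is convex, so its minimizers are exactly the solutions of the first-order condition $\Sigma_{SS,m}(z)b=\Sigma_{SY,m}(z)$, provided this system is solvable.

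The main point to verify is that the system is solvable when $\Sigma_{SS,m}(z)$ is singular, i.e.\ that $\Sigma_{SY,m}(z)\in\operatorname{range}\Sigma_{SS,m}(z)$. I would show this by taking $v\in\ker\Sigma_{SS,m}(z)$. Then $\Var(v^\top S_{im}\mid Z_i=z)=0$, so $v^\top S_{im}$ is conditionally a.s.\ constant. Hence $v^\top\Sigma_{SY,m}(z)=\Cov(v^\top S_{im},Y_{im}\mid Z_i=z)=0$, which places $\Sigma_{SY,m}(z)$ in $(\ker\Sigma_{SS,m}(z))^\perp=\operatorname{range}\Sigma_{SS,m}(z)$ by symmetry.

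Alternatively, one can complete the square directly. Write $c=\Sigma_{SY,m}(z)$, $\Sigma=\Sigma_{SS,m}(z)$ and $\beta^\star=\Sigma^\dagger c$, and use $\Sigma\Sigma^\dagger c=c$ to obtain
\[
V(b)=\sigma^2_{Y,m}(z)-c^\top\Sigma^\dagger c+(b-\beta^\star)^\top\Sigma(b-\beta^\star).
\]
This shows that the minimum value is $\sigma^2_{Y,m}(z)\{1-R_m^2(z)\}$, attained exactly when $\Sigma(b-\beta^\star)=0$, i.e.\ $\Sigma b=c$.

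\textbf{Minimum-norm solution.} The solution set is $\beta^\star+\ker\Sigma$, and $\beta^\star=\Sigma^\dagger c$ lies in $\operatorname{range}\Sigma^\dagger=\operatorname{range}\Sigma=(\ker\Sigma)^\perp$. Pythagoras then gives $\|\beta^\star+v\|^2=\|\beta^\star\|^2+\|v\|^2$ for $v\in\ker\Sigma$, so $\beta^\star$ is the unique minimum-norm solution.

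The only nonroutine step is the range-inclusion argument above. It relies on the standard Moore--Penrose identity $\Sigma\Sigma^\dagger=$ orthogonal projection onto $\operatorname{range}\Sigma$.
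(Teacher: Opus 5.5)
Your proposal is correct and follows essentially the same route as the paper: identification by linearity of conditional expectation, then reduction of the conditional variance to the convex quadratic $b^\top\Sigma_{SS,m}(z)b-2b^\top\Sigma_{SY,m}(z)$, the kernel argument placing $\Sigma_{SY,m}(z)$ in $\mathrm{range}\,\Sigma_{SS,m}(z)$, and orthogonality of $\mathrm{range}$ and $\ker$ to single out $\Sigma_{SS,m}(z)^\dagger\Sigma_{SY,m}(z)$ as the minimum-norm minimizer (the paper packages these steps as Lemma~\ref{lem:range}). Your completing-the-square identity is a small convenience: it gives the minimum value $\sigma^2_{Y,m}(z)-\Sigma_{SY,m}(z)^\top\Sigma_{SS,m}(z)^\dagger\Sigma_{SY,m}(z)$ directly, which the paper instead obtains in Lemma~\ref{lem:range}(iii)--(iv).
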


\begin{assumption}[Local regularity]
\label{ass:regularity}
The profiling density $f_Z$ is continuous and positive at $z$, and the
conditional first and second moments of $(Y_{im},S_{im})$ are continuous at
$z$. The kernel $K$ is nonnegative, bounded, symmetric, and integrable,
with $\int K(u)\,du=1$ and $R(K)=\int K(u)^2\,du<\infty$. For some
$\delta>0$, either (i) $K$ has compact support and
$\sup_{t\in\mathcal N(z)}\mathbb E\{(1+\|S_{im}\|_2)^{2+\delta}\mid Z_i=t\}<\infty$
for some neighborhood $\mathcal N(z)$ of $z$, or (ii)
$\sup_t f_Z(t)\,\mathbb E\{(1+\|S_{im}\|_2)^{2+\delta}\mid Z_i=t\}<\infty$.
Finally, $h\to0$ and $nh^q\to\infty$.
\end{assumption}

\begin{assumption}[Stable nuisance estimation]
\label{ass:nuisance}
The labeled fraction satisfies $\pi=n/M\to\pi_0\in(0,1]$. The local ridge coefficient obeys
$\|\widehat\beta_m(z)-\beta_m^\star(z)\|=o_p(1)$. Under
Assumption~\ref{ass:regularity}, this holds, for example, when the nuisance
bandwidth satisfies $b\ge h$, $b\to0$, and $nb^q\to\infty$, the ridge level
satisfies $\lambda\to0$, and $\Sigma_{SS,m}(z)$ is nonsingular.
\end{assumption}

For discrete profiling groups, the estimator admits an exact finite-sample statement under random
benchmark sampling and random label subsampling.

\begin{theorem}[Unbiasedness and variance for group profiles]
\label{thm:finite}
Fix a group $g$. Suppose $T_g$ contains $M_g$ i.i.d.\ items from the population conditional on $Z=g$
and, conditional on the realized item values in $T_g$, $L_g$ is uniformly distributed over all
size-$n_g$ subsets of $T_g$, where $0<n_g\le M_g$. For any fixed coefficient $b_g$, the group-weight
version of $\widehat\theta_m(g;b_g)$ in~\eqref{eq:lace-estimator} satisfies
\[
\mathbb E\{\widehat\theta_m(g;b_g)\}
=
\theta_m(g).
\]
In this theorem, expectation and variance are over both benchmark sampling and label sampling. If
$\mathbb E(\|S_{im}\|_2^2\mid Z_i=g)<\infty$ and $M_g>1$, then
\[
\Var\{\widehat\theta_m(g;b_g)\}
=
\frac{\sigma^2_{Y,m}(g)}{M_g}
+
\left(1-\frac{n_g}{M_g}\right)\frac{\sigma^2_{R,g}(b_g)}{n_g},
\]
where $\sigma^2_{Y,m}(g)=\Var(Y_{im}\mid Z_i=g)$,
$R_i(b_g)=Y_{im}-b_g^\top S_{im}$, and
$\sigma^2_{R,g}(b_g)=\Var\{R_i(b_g)\mid Z_i=g\}$.
\end{theorem}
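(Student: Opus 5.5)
Within a group, the estimator rewrites as
\[
\widehat\theta_m(g;b_g)=\bar R_{L_g}(b_g)+b_g^\top\bar S_{m,T_g},
\qquad \bar R_{L_g}(b_g)=\frac1{n_g}\sum_{i\in L_g}R_i(b_g),
\]
because $\bar Y_{m,L_g}-b_g^\top\bar S_{m,L_g}$ is exactly the labeled mean of the residuals. This is the same decomposition as the displayed equivalent form of \eqref{eq:lace-feasible}, now with a fixed coefficient. I will then condition on the realized item values $\mathcal F_g=\{(Y_{im},S_{im})\}_{i\in T_g}$ and treat the label draw as simple random sampling without replacement from a finite population.

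\textbf{Unbiasedness.} Under uniform subsampling, $\mathbb E\{\bar R_{L_g}\mid\mathcal F_g\}=\bar R_{T_g}$. Hence
\[
\mathbb E\{\widehat\theta\mid\mathcal F_g\}=\bar R_{T_g}+b_g^\top\bar S_{T_g}=\bar Y_{m,T_g}.
\]
Since the items in $T_g$ are i.i.d.\ given $Z=g$, $\mathbb E\,\bar Y_{m,T_g}=\theta_m(g)$. Integrability needs only $\mathbb E\|S\|<\infty$, which finite second moments supply; for the mean alone, first moments suffice. This step is essentially Proposition~\ref{prop:identity} with the empirical centering $\bar S_{T_g}$ replacing $\mu_{S,m}(g)$.

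\textbf{Variance.} I will apply the law of total variance with respect to $\mathcal F_g$.

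The between term is $\Var\{\mathbb E(\widehat\theta\mid\mathcal F_g)\}=\Var(\bar Y_{m,T_g})=\sigma^2_{Y,m}(g)/M_g$.

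For the within term, $b_g^\top\bar S_{T_g}$ is $\mathcal F_g$-measurable, so only $\bar R_{L_g}$ varies. The classical finite-population formula for sampling without replacement gives
\[
\Var(\bar R_{L_g}\mid\mathcal F_g)=\Bigl(1-\frac{n_g}{M_g}\Bigr)\frac{s^2_{R,T_g}}{n_g},
\qquad s^2_{R,T_g}=\frac1{M_g-1}\sum_{i\in T_g}(R_i-\bar R_{T_g})^2 .
\]
This formula is where the hypothesis $M_g>1$ is used. Taking expectations over the benchmark draw, $\mathbb E\,s^2_{R,T_g}=\sigma^2_{R,g}(b_g)$, because the unbiased sample variance of i.i.d.\ draws is unbiased; finite second moments of $S$ and boundedness of $Y$ make $R_i$ square-integrable. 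Adding the two terms yields the stated formula.

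The only delicate point is recalling the finite-population variance formula with the $(M_g-1)$ normalization. Using that normalization is exactly what makes the expectation of the within term come out to $\sigma^2_{R,g}/n_g$ times the finite-population correction, with no leftover $1/M_g$ corrections. If needed, I would verify it directly by writing $\bar R_{L_g}=n_g^{-1}\sum_{i\in T_g}\xi_iR_i$ with inclusion indicators $\xi_i$. These satisfy $\Var\xi_i=\pi_g(1-\pi_g)$ and $\Cov(\xi_i,\xi_j)=-\pi_g(1-\pi_g)/(M_g-1)$, where $\pi_g=n_g/M_g$. Expanding the variance of the sum gives the formula. The case $n_g=M_g$ gives a zero within term, which is consistent with the result.
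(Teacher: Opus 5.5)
Your proposal is correct and follows the paper's proof essentially step for step. The steps match: the rewriting $\widehat\theta_m(g;b_g)=\bar R_{L,g}(b_g)+b_g^\top\bar S_{T,g}$, conditional unbiasedness under simple random sampling without replacement yielding $\bar Y_{T,g}$, and the law of total variance combining $\Var(\bar Y_{T,g})=\sigma^2_{Y,m}(g)/M_g$ with the finite-population formula and $\mathbb E\{S^2_{R,T,g}(b_g)\}=\sigma^2_{R,g}(b_g)$. Your inclusion-indicator verification of the $(M_g-1)$-normalized formula, and your remark that the mean claim needs only first moments of $S_{im}$, are small details the paper leaves implicit.
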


\paragraph{Two exact oracle gains for group profiles.}
Let $\pi_g=n_g/M_g$. At the respective variance-minimizing population and
realized-pool coefficients, when the corresponding outcome variances are
positive, Theorem~\ref{thm:finite} and its conditional
variance calculation give
\[
\underbrace{\frac{1}{1-(1-\pi_g)R_m^2(g)}}_{\substack{\text{population target }\theta_m(g)\\
\text{randomness in }(T_g,L_g)}}
\le \frac{1}{\pi_g},
\qquad
\underbrace{\frac{1}{1-R_{T,g}^{2,\mathrm d}}}_{\substack{\text{realized-pool target }\bar Y_{T,g}\\
\text{randomness in }L_g\mid T_g}}.
\]
Here $R_m^2(g)$ is the population squared multiple correlation and
$R_{T,g}^{2,\mathrm d}$ is its counterpart computed from realized-pool
covariances. In the second ratio, the common finite-population correction
$1-\pi_g$ cancels. This second identity is an exact group-profile oracle
comparison, not a pointwise gain theorem for the continuous profiles in the
real-data study or an equality claimed for the feasible estimator.

\begin{theorem}[First-order adaptivity to the nuisance coefficient]
\label{thm:adaptivity}
For the estimator $\widehat\theta_m(z;b)$ in~\eqref{eq:lace-estimator} and any possibly
data-dependent $\widehat\beta_m(z)$,
\[
\widehat\theta_m(z;\widehat\beta_m(z))-\widehat\theta_m(z;\beta_m^\star(z))
=
-
\{\widehat\beta_m(z)-\beta_m^\star(z)\}^\top
\{\bar S_{m,L}(z)-\bar S_{m,T}(z)\}.
\]
Under Assumption~\ref{ass:regularity}, $\|\bar S_{m,L}(z)-\bar S_{m,T}(z)\|=O_p((nh^q)^{-1/2})$ by standard kernel central limit arguments \citep{fan1996local}.
Consequently, if $\|\widehat\beta_m(z)-\beta_m^\star(z)\|=o_p(1)$, then $\sqrt{nh^q}\{\widehat\theta_m(z;\widehat\beta_m(z))-\widehat\theta_m(z;\beta_m^\star(z))\}=o_p(1)$.
\end{theorem}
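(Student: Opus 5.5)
The identity is pure algebra. Since $\widehat\theta_m(z;\gamma)$ in~\eqref{eq:lace-estimator} is affine in $\gamma$, with the same intercept $\bar Y_{m,L}(z)$ and the same slope vector $-\{\bar S_{m,L}(z)-\bar S_{m,T}(z)\}$ for every $\gamma$, subtracting the expressions at $\gamma=\widehat\beta_m(z)$ and $\gamma=\beta_m^\star(z)$ cancels the intercept. What remains is exactly $-\{\widehat\beta_m(z)-\beta_m^\star(z)\}^\top\{\bar S_{m,L}(z)-\bar S_{m,T}(z)\}$. This holds pathwise for any data-dependent $\widehat\beta_m(z)$, so no independence between the coefficient and the local means is needed.

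The substantive step is the rate $\|\bar S_{m,L}(z)-\bar S_{m,T}(z)\|=O_p((nh^q)^{-1/2})$. I would center both local means at $\mu_{S,m}(z)$ and bound each deviation separately. Write $\bar S_{m,A}(z)-\mu_{S,m}(z)$ as a ratio. The numerator is $(|A|)^{-1}\sum_{i\in A}K_h(Z_i-z)\{S_{im}-\mu_{S,m}(z)\}$. The denominator is $\hat f_A(z)=|A|^{-1}\sum_{i\in A}K_h(Z_i-z)$, which converges in probability to $f_Z(z)>0$ under Assumption~\ref{ass:regularity}.

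For the numerator I would use a bias–variance split.

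\textbf{Variance.} The second-moment bound in Assumption~\ref{ass:regularity} gives variance of order $(|A|h^q)^{-1}f_Z(z)R(K)\Sigma_{SS,m}(z)$, using either compact support in case (i) or the uniform bound in case (ii). The $(2+\delta)$ moments also deliver the kernel CLT, though only Chebyshev is needed for $O_p$.

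\textbf{Bias.} The bias term is $\int K(u)\{\mu_{S,m}(z+hu)-\mu_{S,m}(z)\}f_Z(z+hu)\,du$.

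The two pools are handled as follows. Because $L$ is a uniform subsample of the i.i.d.\ pool, $L$ is itself an i.i.d.\ sample, so the $L$-deviation is $O_p((nh^q)^{-1/2})$ plus bias. Since $M\ge n$, the $T$-deviation is $O_p((Mh^q)^{-1/2})\subseteq O_p((nh^q)^{-1/2})$ plus bias.

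The main obstacle is the bias. Continuity alone gives only $o(1)$ bias for each mean, not $O((nh^q)^{-1/2})$. The fix is to note that the difference $\bar S_{m,L}-\bar S_{m,T}$ does not involve $\mu_{S,m}(z)$ directly. I would instead compare each item's contribution with its own conditional mean $\mu_{S,m}(Z_i)$: write $S_{im}=\mu_{S,m}(Z_i)+\varepsilon_i$. This gives $\bar S_{m,L}-\bar S_{m,T}=\{\bar\mu_L-\bar\mu_T\}+\{\bar\varepsilon_L-\bar\varepsilon_T\}$. The $\varepsilon$ part is mean zero with variance $O((nh^q)^{-1})$.

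For the smooth part, I would condition on the $Z$'s and the pool. Under uniform label sampling, $\bar\mu_L(z)-\bar\mu_T(z)$ is a ratio of finite-population sample means, and its conditional variance is at most $(1-\pi)\,n^{-1}\cdot$ a local weighted variance of $\mu_{S,m}(Z_i)$ over the window. Continuity of $\mu_{S,m}$ makes that local variance $o(h^{-q})$. The difference is therefore $o_p((nh^q)^{-1/2})$ with no bias term, since sampling is design-unbiased up to the ratio linearization. The linearization of the denominator $\hat f_L/\hat f_T$ is handled by the delta method. This is where I expect the most care, and it is why the paper can cite "standard kernel CLT arguments".

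Finally, combine the pieces: $\sqrt{nh^q}\,|\text{difference}|\le\|\widehat\beta_m-\beta_m^\star\|\cdot\sqrt{nh^q}\|\bar S_{m,L}-\bar S_{m,T}\|=o_p(1)\,O_p(1)=o_p(1)$, by Cauchy–Schwarz and Slutsky.
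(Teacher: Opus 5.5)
Your argument is correct, and its skeleton (the pathwise affine identity, then Cauchy--Schwarz and Slutsky) is exactly the paper's proof. The difference is that the paper takes $\|\bar S_{m,L}(z)-\bar S_{m,T}(z)\|=O_p((nh^q)^{-1/2})$ as a cited input, whereas you derive it, by a more roundabout route than needed.

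The bias obstacle you identify disappears if you center both local means at the common population-smoothed mean $\mu_{S,h}(z)=\mathbb E\{K_h(Z_i-z)S_{im}\}/\mathbb E\{K_h(Z_i-z)\}$. This is the device the paper uses, as $\mu_{E,h}$, in its proof of Theorem~\ref{thm:oracle-optimality}. Because $L$ and $T$ use the same kernel and bandwidth, the smoothing bias cancels identically in the difference. Each $\bar S_{m,A}(z)-\mu_{S,h}(z)$ is then a ratio whose numerator has mean exactly zero and variance $O((|A|h^q)^{-1})$ from the second-moment bound alone, and Chebyshev finishes. No continuity of $\mu_{S,m}$ is needed.

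Your split $S_{im}=\mu_{S,m}(Z_i)+\varepsilon_i$ with finite-population conditioning does need that continuity. For non-compact kernels it also needs an envelope such as $\sup_t f_Z(t)\|\mu_{S,m}(t)\|^2<\infty$, which case (ii) supplies. In exchange it gives a sharper statement: the smooth part is $o_p((nh^q)^{-1/2})$, so the rate is driven entirely by within-profile noise. That is consistent with the $\Sigma_{SS,m}(z)$ term in $\mathcal V_m(z;b)$.

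The denominator step you flag as delicate needs no delta method. In your notation, $\bar\mu_L(z)-\bar\mu_T(z)=\{n^{-1}\sum_{i\in L}K_h(Z_i-z)(\mu_{S,m}(Z_i)-\bar\mu_T(z))\}/\hat f_L(z)$ holds exactly. The summands have mean zero over the finite population $T$, so only $\hat f_L(z)\xrightarrow{p}f_Z(z)>0$ is used.
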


\begin{theorem}[Local oracle optimality of the augmentation]
\label{thm:oracle-optimality}
Under Assumption~\ref{ass:regularity} and $n/M\to\pi_0\in(0,1]$, for any fixed deterministic
$b\in\mathbb R^K$, the estimator $\widehat\theta_m(z;b)$ in~\eqref{eq:lace-estimator} satisfies
\[
\sqrt{nh^q}
\{\widehat\theta_m(z;b)-\theta_m(z)-B_h(z)\}
\rightsquigarrow
\mathcal N\!\left(0,\frac{R(K)}{f_Z(z)}\mathcal V_m(z;b)\right),
\]
where
\[
B_h(z)
=
\frac{\mathbb E\{K_h(Z_i-z)\theta_m(Z_i)\}}{\mathbb E\{K_h(Z_i-z)\}}
-\theta_m(z)
\]
is the usual local-constant smoothing bias and
\[
\mathcal V_m(z;b)=
\sigma^2_{Y,m}(z)
+(1-\pi_0)
\{b^\top\Sigma_{SS,m}(z)b-2b^\top\Sigma_{SY,m}(z)\}.
\]
If $\pi_0<1$, the minimizers are exactly the solutions of
$\Sigma_{SS,m}(z)b=\Sigma_{SY,m}(z)$, and the minimum first-order variance factor is
$\mathcal V_m(z;\beta_m^\star)=
\sigma^2_{Y,m}(z)\{1-(1-\pi_0)R_m^2(z)\}$.
\end{theorem}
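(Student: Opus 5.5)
Fix $b$ and rewrite the estimator as a labeled residual mean plus a full-pool signal mean:
\[
\widehat\theta_m(z;b)=\sum_{i\in L}w_{i,L}(z)R_i(b)+\sum_{i\in T}w_{i,T}(z)\,b^\top S_{im},
\qquad R_i(b)=Y_{im}-b^\top S_{im}.
\]
Each piece is a ratio of kernel sums. I will linearize each ratio around its population denominator $\mathbb E\{K_h(Z_i-z)\}\to f_Z(z)$. Let $\mu_R(t)=\mathbb E\{R_i(b)\mid Z_i=t\}$ and $\mu_{bS}(t)=b^\top\mu_{S,m}(t)$, so that $\mu_R+\mu_{bS}=\theta_m$. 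Subtracting the population local means gives
\[
\widehat\theta_m(z;b)-\theta_m(z)-B_h(z)=\frac{1}{n f_Z(z)}\sum_{i\in L}K_h(Z_i-z)\{R_i-\mu_R^h\}+\frac{1}{M f_Z(z)}\sum_{i\in T}K_h(Z_i-z)\{b^\top S_{im}-\mu_{bS}^h\}+o_p\bigl((nh^q)^{-1/2}\bigr),
\]
where $\mu^h$ denotes the kernel-weighted population mean. The two smoothing biases add exactly to $B_h(z)$ because $\mu_R^h+\mu_{bS}^h=\mathbb E\{K_h\theta_m\}/\mathbb E K_h$. The remainder from the random denominators is controlled by Assumption~\ref{ass:regularity}: the denominators are $f_Z(z)+O_p((Mh^q)^{-1/2})+o(1)$, and they multiply centered numerators of order $(nh^q)^{-1/2}$.

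Next I split the full pool as $T=L\cup(T\setminus L)$, which is valid because $L\subset T$ is a uniform subsample drawn independently of the item data. This gives
\[
\sum_{i\in L}\Bigl\{\tfrac1n R_i+\tfrac1M b^\top S_{im}\Bigr\}+\sum_{i\in T\setminus L}\tfrac1M b^\top S_{im},
\]
up to centering. Conditionally on the labeling, this is a sum of independent, non-identically distributed kernel-weighted terms, so I apply Lyapunov's CLT. The $(2+\delta)$ moment condition in Assumption~\ref{ass:regularity}, under either (i) or (ii), bounds the Lyapunov ratio by $O((nh^q)^{-\delta/2})\to0$.

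For the variance, the standard kernel computation gives $\Var\{K_h(Z_i-z)V_i\}=h^{-q}R(K)f_Z(z)\Var(V\mid Z=z)\{1+o(1)\}$, using continuity of the conditional second moments. Writing $\sigma_{R}^2=\sigma^2_{Y,m}-2b^\top\Sigma_{SY,m}+b^\top\Sigma_{SS,m}b$, $C=\Cov(R,b^\top S\mid z)=b^\top\Sigma_{SY,m}-b^\top\Sigma_{SS,m}b$, and $\pi=n/M$, I will multiply by $nh^q/f_Z(z)^2$ and collect terms:
\[
\frac{R(K)}{f_Z(z)}\Bigl[\sigma_R^2+2\pi C+\pi^2 b^\top\Sigma_{SS,m}b+\pi(1-\pi)\,b^\top\Sigma_{SS,m}b\Bigr].
\]
This simplifies to $\sigma_R^2+2\pi C+\pi\, b^\top\Sigma_{SS,m}b$. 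Substituting, the bracket becomes $\sigma^2_{Y,m}+(1-\pi)\{b^\top\Sigma_{SS,m}b-2b^\top\Sigma_{SY,m}\}$, and letting $\pi\to\pi_0$ yields $\mathcal V_m(z;b)$.

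For the optimality claim, $\mathcal V_m(z;b)$ is a convex quadratic in $b$ with Hessian $2(1-\pi_0)\Sigma_{SS,m}\succeq0$ when $\pi_0<1$. Its minimizers are therefore exactly the solutions of the normal equations $\Sigma_{SS,m}b=\Sigma_{SY,m}$, which are consistent because $\Sigma_{SY,m}\in\operatorname{range}(\Sigma_{SS,m})$ by Cauchy--Schwarz, exactly as in Proposition~\ref{prop:identity}. Plugging in $\beta_m^\star=\Sigma_{SS,m}^\dagger\Sigma_{SY,m}$ gives $\beta^{\star\top}\Sigma_{SS,m}\beta^\star=\beta^{\star\top}\Sigma_{SY,m}=\sigma^2_{Y,m}R_m^2$, so the minimum is $\sigma^2_{Y,m}\{1-(1-\pi_0)R_m^2\}$.

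The main obstacle is the linearization step. The $L$-part has random denominator $\sum_{i\in L}K_h$, and $L$ and $T$ share items, so I must check that both ratio remainders are $o_p((nh^q)^{-1/2})$ jointly and that no cross term between the two denominators survives. I would handle this by conditioning on $L$ and writing each ratio as numerator over $f_Z(z)$ plus (numerator $\times$ denominator error). The centered numerator is $O_p((nh^q)^{-1/2})$ and the denominator error is $o_p(1)$, and under (ii) this holds without compact support.
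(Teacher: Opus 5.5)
Your proposal is correct and follows essentially the same route as the paper: the residual-plus-signal decomposition, the exact cancellation of the two kernel-smoothed population means into $B_h(z)$, ratio linearization around the population denominator, a Lyapunov/Cram\'er--Wold CLT over the independent blocks $L$ and $T\setminus L$, and the range/normal-equations argument for the minimizers. The only difference is bookkeeping: you get the variance directly from the two-block split, so the signal terms contribute $\pi^2+\pi(1-\pi)=\pi$, whereas the paper adds the two marginal variances to an $O(1/M)$ overlap covariance. These computations are equivalent; note that your kernel-variance formula should be read for the $\mu^h$-centered variables, and the $L$-denominator error is $O_p\bigl((nh^q)^{-1/2}\bigr)$, which matches your $(Mh^q)^{-1/2}$ rate since $n\asymp M$.
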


\begin{theorem}[Local semisupervised efficiency gain]
\label{thm:gain}
Under Assumptions~\ref{ass:regularity}--\ref{ass:nuisance}, for an interior continuous point $z$, the feasible estimator $\widehat\theta_m^{\mathrm{LACE}}(z)$ in~\eqref{eq:lace-feasible} satisfies
\[
\sqrt{nh^q}
\{\widehat\theta_m^{\mathrm{LACE}}(z)-\theta_m(z)-B_h(z)\}
\rightsquigarrow
\mathcal N\!\left(
0,\,
\frac{R(K)}{f_Z(z)}
\sigma^2_{Y,m}(z)\{1-(1-\pi_0)R_m^2(z)\}
\right).
\]
The naive smoother has the same limit with variance factor $\{R(K)/f_Z(z)\}\sigma^2_{Y,m}(z)$, so the asymptotic efficiency gain is
\[
\operatorname{Gain}_m(z)
=
\frac{1}{1-(1-\pi_0)R_m^2(z)}.
\]
For finite-sample simulation comparisons, we replace $\pi_0$ by the observed labeled fraction $\pi=n/M$.
For a fixed discrete group $g$ with $p_g=\mathbb P(Z_i=g)>0$, the within-group estimator satisfies the same statement with $h^q f_Z(z)$ replaced by $p_g$, $R(K)$ replaced by $1$, and $B_h(g)=0$.
\end{theorem}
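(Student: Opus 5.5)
The plan is to reduce the feasible estimator to the oracle one and then read off the variance from the oracle CLT. By Theorem~\ref{thm:adaptivity}, the difference
$\widehat\theta_m^{\mathrm{LACE}}(z)-\widehat\theta_m(z;\beta_m^\star(z))$
equals $-\{\widehat\beta_m(z)-\beta_m^\star(z)\}^\top\{\bar S_{m,L}(z)-\bar S_{m,T}(z)\}$. Assumption~\ref{ass:nuisance} gives $\|\widehat\beta_m(z)-\beta_m^\star(z)\|=o_p(1)$. Theorem~\ref{thm:adaptivity} also gives $\|\bar S_{m,L}(z)-\bar S_{m,T}(z)\|=O_p((nh^q)^{-1/2})$ under Assumption~\ref{ass:regularity}. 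Hence the difference is $o_p((nh^q)^{-1/2})$. By Slutsky's lemma, $\sqrt{nh^q}\{\widehat\theta_m^{\mathrm{LACE}}(z)-\theta_m(z)-B_h(z)\}$ then has the same limit as its counterpart with the deterministic coefficient $b=\beta_m^\star(z)$. The point is evaluated at the fixed $z$, so $\beta_m^\star(z)$ is a fixed vector.

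Next I apply Theorem~\ref{thm:oracle-optimality} with $b=\beta_m^\star(z)$. It gives asymptotic normality with variance $\{R(K)/f_Z(z)\}\mathcal V_m(z;\beta_m^\star)$. I evaluate this directly: $\beta^{\star\top}\Sigma_{SS}\beta^\star=\Sigma_{SY}^\top\Sigma_{SS}^\dagger\Sigma_{SS}\Sigma_{SS}^\dagger\Sigma_{SY}=\Sigma_{SY}^\top\Sigma_{SS}^\dagger\Sigma_{SY}$ and $\beta^{\star\top}\Sigma_{SY}=\Sigma_{SY}^\top\Sigma_{SS}^\dagger\Sigma_{SY}$. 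The bracket is therefore $-\sigma^2_{Y,m}(z)R_m^2(z)$, which yields $\sigma^2_{Y,m}(z)\{1-(1-\pi_0)R_m^2(z)\}$. This also covers $\pi_0=1$, where the factor is just $\sigma^2_{Y,m}(z)$.

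The naive smoother is the case $b=0$ of the same theorem, with factor $\sigma^2_{Y,m}(z)$. The gain formula is the ratio of the two factors, assuming $\sigma^2_{Y,m}(z)>0$ so that $R_m^2(z)$ is defined. The factor $1-(1-\pi_0)R_m^2\ge\pi_0>0$ because $R_m^2\le1$ by Cauchy--Schwarz for the conditional linear projection. Replacing $\pi_0$ by $\pi$ for finite samples is a convention and needs no proof.

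For a discrete group $g$ I repeat the argument with group weights. The labeled count satisfies $n_g/n\to p_g$ almost surely, so $\sqrt{n p_g}$ plays the role of $\sqrt{nh^q f_Z(z)/R(K)}$. The bias is exactly zero because $\mathbb E(Y\mid Z=g)=\theta_m(g)$. The variance follows from Theorem~\ref{thm:finite}: conditional on $(M_g,n_g)$, the variance times $n_g$ equals $\pi_g\sigma^2_{Y,m}(g)+(1-\pi_g)\sigma^2_{R,g}(b)$. Expanding $\sigma^2_{R,g}(b)$ gives exactly $\mathcal V_m(g;b)$. A CLT for the group residual mean completes the limit: either Lindeberg for the i.i.d.\ full-pool part plus Hájek's finite-population CLT for the sampling-without-replacement part, or the same decomposition used inside Theorem~\ref{thm:oracle-optimality}. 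The adaptivity step is the same, with $\bar S_{L,g}-\bar S_{T,g}=O_p(n_g^{-1/2})$.

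The main obstacle is justifying that the feasible $\widehat\beta_m(z)$ is consistent: ridge with $\lambda\to0$, nonsingular $\Sigma_{SS,m}(z)$, and consistent local covariance estimates with bandwidth $b$. I would take this from Assumption~\ref{ass:nuisance} rather than reprove it. If a proof is needed, I would show $\widehat\Sigma_{SS,m}\to\Sigma_{SS,m}$ and $\widehat\Sigma_{SY,m}\to\Sigma_{SY,m}$ in probability via kernel LLNs under the $2+\delta$ moment condition, then apply the continuous mapping theorem to the matrix inverse.
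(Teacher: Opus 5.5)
Your proposal is correct and follows the paper's proof essentially step for step. It uses the oracle CLT of Theorem~\ref{thm:oracle-optimality} at $b=\beta_m^\star(z)$, the feasible-to-oracle reduction via Theorem~\ref{thm:adaptivity} and Assumption~\ref{ass:nuisance}, $b=0$ for the naive smoother, and, for discrete groups, Theorem~\ref{thm:finite} with $n_g/M_g\to\pi_0$ plus a within-group CLT and the same Cauchy--Schwarz adaptivity step. The differences are cosmetic: you compute $\mathcal V_m(z;\beta_m^\star)$ directly from the Penrose identity $\Sigma^\dagger\Sigma\Sigma^\dagger=\Sigma^\dagger$ rather than citing Lemma~\ref{lem:range}, and you spell out the $\pi_0=1$ case and the positivity of $1-(1-\pi_0)R_m^2(z)$, which the paper leaves implicit.
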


The gain formula is local. If cheap auxiliary signals have no local explanatory power,
$R_m^2(z)=0$ and \method{} matches the gold-label smoother to first order. If they explain nearly all
local correctness variation, $R_m^2(z)\approx1$ and the maximum gain approaches $1/\pi$.

\begin{theorem}[Extensions to gaps and deployment scores]
\label{thm:extensions}
For a model pair $(a,b)$, suppose Assumptions~\ref{ass:regularity}--\ref{ass:nuisance} hold after replacing $(Y_{im},S_{im})$ by $(D_{i,ab},H_{i,ab})$.
Let $\sigma^2_{\Delta,ab}(z)=\Var(D_{i,ab}\mid Z_i=z)$, $\Sigma_{HH,ab}(z)=\Var(H_{i,ab}\mid Z_i=z)$, $\Sigma_{HD,ab}(z)=\Cov(H_{i,ab},D_{i,ab}\mid Z_i=z)$, $R^2_{\Delta,ab}(z)=\Sigma_{HD,ab}(z)^\top\Sigma_{HH,ab}(z)^\dagger\Sigma_{HD,ab}(z)/\sigma^2_{\Delta,ab}(z)$, and let $B_{\Delta,h}(z)$ be the analog of $B_h(z)$ in Theorem~\ref{thm:oracle-optimality} with $\theta_m$ replaced by $\Delta_{ab}$.
Then the gap estimator in~\eqref{eq:lace-gap} satisfies
\[
\sqrt{nh^q}\{\widehat\Delta_{ab}^{\mathrm{LACE}}(z)-\Delta_{ab}(z)-B_{\Delta,h}(z)\}
\rightsquigarrow
\mathcal N\!\left(0,\frac{R(K)}{f_Z(z)}\sigma^2_{\Delta,ab}(z)\{1-(1-\pi_0)R^2_{\Delta,ab}(z)\}\right).
\]

For the deployment estimator in~\eqref{eq:lace-deployment}, consider the covariate-shift law
$Q(dz,dy,ds)=Q_Z(dz)P(dy,ds\mid z)$. Suppose $r_Q$ is known and bounded with
$\mathbb E_P r_Q(Z_i)=1$, $K$ is fixed, $n/M\to\pi_0\in(0,1]$, and
$\lambda_{Q,n}\to0$. Let $\Psi_m(Q)=\mathbb E_Q(Y_{im})$.
Define the influence functions
\[
\phi^Q_{Y,i}=r_Q(Z_i)\{Y_{im}-\Psi_m(Q)\},
\qquad
\phi^Q_{S,i}=r_Q(Z_i)\{S_{im}-\mathbb E_Q(S_{im})\},
\]
together with $\Gamma^Q_{SS,m}=\Var_P(\phi^Q_{S,i})$, $\Gamma^Q_{SY,m}=\Cov_P(\phi^Q_{S,i},\phi^Q_{Y,i})$, $\beta^\star_{Q,m}=(\Gamma^Q_{SS,m})^\dagger\Gamma^Q_{SY,m}$, and $R^2_{Q,m}=(\Gamma^Q_{SY,m})^\top(\Gamma^Q_{SS,m})^\dagger\Gamma^Q_{SY,m}/\Var_P(\phi^Q_{Y,i})$.
Assume $\mathbb E_Q\|S_{im}\|_2<\infty$, $\Var_P(\phi^Q_{Y,i})<\infty$, and
$\mathbb E_P\|\phi^Q_{S,i}\|_2^2<\infty$.
Then the influence-moment ridge coefficient defined above satisfies
$\widehat\beta_m(Q)\xrightarrow{p}\beta^\star_{Q,m}$, and
\[
\sqrt n\{\widehat\Psi_m^{\mathrm{LACE}}(Q)-\Psi_m(Q)\}
\rightsquigarrow
\mathcal N\big(0,\Var_P(\phi^Q_{Y,i})\{1-(1-\pi_0)R^2_{Q,m}\}\big).
\]
\end{theorem}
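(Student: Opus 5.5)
The two claims are proved separately. For the gap estimator we reuse Theorems~\ref{thm:oracle-optimality}, \ref{thm:adaptivity} and \ref{thm:gain} with $(Y_{im},S_{im})$ replaced by $(D_{i,ab},H_{i,ab})$. For the deployment score we redo the same three-step argument (linearization, oracle CLT, adaptivity) with self-normalized global weights instead of kernel weights.

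\textbf{Gap part.} Nothing in the proofs of Theorems~\ref{thm:oracle-optimality}--\ref{thm:gain} uses $Y\in\{0,1\}$. They use only the conditional first and second moments of the outcome and of the auxiliary vector, the moment bounds in Assumption~\ref{ass:regularity}, and the consistency in Assumption~\ref{ass:nuisance}. These hypotheses are assumed to hold for $(D,H)$, and $D\in\{-1,0,1\}$ is bounded.
\begin{itemize}[leftmargin=*]
\item Theorem~\ref{thm:oracle-optimality} at $b=\beta^\star_{ab}(z)=\Sigma_{HH,ab}^\dagger\Sigma_{HD,ab}$ gives the normal limit with variance factor $\sigma^2_{\Delta,ab}\{1-(1-\pi_0)R^2_{\Delta,ab}\}$.
\item Theorem~\ref{thm:adaptivity} shows that replacing $\beta^\star_{ab}$ by $\widehat\beta_{ab}$ costs $o_p((nh^q)^{-1/2})$.
\item Slutsky's lemma then gives the result.
\end{itemize}
One point needs a remark. $H$ contains $S_{ia}-S_{ib}$, so $\Sigma_{HH,ab}$ is singular, while Assumption~\ref{ass:nuisance} has the nonsingularity example. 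The ridge limit as $\lambda\to0$ still converges to the minimum-norm solution $\Sigma_{HH}^\dagger\Sigma_{HD}$. In any case, the consistency of $\widehat\beta_{ab}$ is assumed directly, and Proposition~\ref{prop:identity} shows that any solution of the normal equations attains the same variance.

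\textbf{Deployment part, Step 1 (linearization).} Set $\bar\rho_A=|A|^{-1}\sum_{i\in A}\rho_i$. Then
\[
\bar Y^Q_{m,L}-\Psi_m(Q)=\bar\rho_L^{-1}\,n^{-1}\sum_{i\in L}\phi^Q_{Y,i},
\]
and similarly for $\bar S^Q_{m,A}-\mathbb E_Q S$. By the law of large numbers $\bar\rho_A\to1$. This uses two facts: $\mathbb E_P r_Q=1$, and $\mathbb E_P\phi^Q=0$, which follows from the covariate-shift structure, since $\mathbb E_P[r_Q(Z)g(Y,S)]=\mathbb E_Q g$. Hence
\[
\widehat\Psi(b)-\Psi=\bar\phi_{Y,L}-b^\top(\bar\phi_{S,L}-\bar\phi_{S,T})+o_p(n^{-1/2}),
\]
where the $o_p$ term uses that each $\bar\phi$ is $O_p(n^{-1/2})$ (finite second moments).

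\textbf{Deployment part, Step 2 (oracle CLT).} Write $\psi_i=\phi_{Y,i}-b^\top\phi_{S,i}$. Then
\[
\bar\psi_L+b^\top\bar\phi_{S,T}=\bar\psi_T+b^\top\bar\phi_{S,T}+(\bar\psi_L-\bar\psi_T).
\]
\begin{itemize}[leftmargin=*]
\item The first part is an i.i.d.\ mean over $T$ equal to $\bar\phi_{Y,T}$, with variance $\Var_P(\phi_Y)/M$.
\item Conditional on $T$, the term $\bar\psi_L-\bar\psi_T$ is a simple-random-sampling deviation with variance $(1-\pi)s^2_\psi/n$. A Hájek finite-population CLT, together with the conditional-then-unconditional characteristic-function argument, makes it asymptotically independent of the first part.
\item Since $\Cov(\bar\phi_{Y,T},\bar\psi_L-\bar\psi_T)=0$, the total variance is $n^{-1}[\pi_0\Var\phi_Y+(1-\pi_0)\Var\psi]$. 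This equals
\[
n^{-1}\bigl[\Var\phi_Y+(1-\pi_0)(b^\top\Gamma_{SS}b-2b^\top\Gamma_{SY})\bigr],
\]
which is minimized at $\beta^\star_{Q,m}$ with value $\Var\phi_Y\{1-(1-\pi_0)R^2_{Q,m}\}$.
\end{itemize}
This is the same algebra as in Theorem~\ref{thm:oracle-optimality}.

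\textbf{Deployment part, Step 3 (coefficient consistency and adaptivity).}
\begin{itemize}[leftmargin=*]
\item Expanding $\widehat\phi$ around $\phi$ gives $\widehat\Gamma_{SS}\to\Gamma_{SS}$ and $\widehat\Gamma_{SY}\to\Gamma_{SY}$ in probability. The plug-in means are consistent, the cross terms vanish, and the labeled average uses $\mathbb E\|\phi_S\|^2<\infty$ and $\Var\phi_Y<\infty$ via Cauchy--Schwarz.
\item The ridge map $(\Gamma_{SS}+\lambda I)^{-1}\Gamma_{SY}$ with $\lambda\to0$ converges to $\Gamma_{SS}^\dagger\Gamma_{SY}$, because $\Gamma_{SY}$ lies in the range of $\Gamma_{SS}$.
\item The adaptivity identity of Theorem~\ref{thm:adaptivity}, with $\bar S^Q_L-\bar S^Q_T=O_p(n^{-1/2})$, finishes the argument.
\end{itemize}

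\textbf{Main obstacle.} I expect the hardest step to be the ridge-to-pseudoinverse convergence when $\Gamma_{SS}$ is singular and $\lambda_{Q,n}\to0$ at an unspecified rate. Along the null space of $\Gamma_{SS}$, the estimation error of $\widehat\Gamma$ is $O_p(n^{-1/2})$ but is divided by $\lambda$, which need not dominate it. My first attempt is to avoid needing convergence there. Any null-space component $v$ satisfies $v^\top\phi_S=0$ almost surely, so $v^\top(\bar S^Q_L-\bar S^Q_T)=0$ exactly. The estimator therefore depends on $\widehat\beta$ only through its projection onto the range of $\Gamma_{SS}$, where the ridge solution converges. If the exact pseudoinverse limit is required for the stated claim $\widehat\beta_m(Q)\to\beta^\star$, I would add the condition $\lambda_{Q,n}\sqrt n\to\infty$.
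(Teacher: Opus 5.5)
Your gap-estimator argument is the paper's: apply Theorems~\ref{thm:oracle-optimality}, \ref{thm:adaptivity} and~\ref{thm:gain} to $(D_{i,ab},H_{i,ab})$, taking consistency of $\widehat\beta_{ab}$ from the transferred Assumption~\ref{ass:nuisance}. Your remark on the built-in collinearity of $H_{i,ab}$ is apt. For the deployment CLT you take a different but valid route. You write the linearized estimator as $\bar\phi^Q_{Y,T}$ plus a conditional simple-random-sampling fluctuation $\bar\psi_L-\bar\psi_T$, then invoke a H\'ajek finite-population CLT. The paper instead uses exchangeability to take $L=\{1,\ldots,n\}$, applies an ordinary CLT to two independent blocks, and computes the $1/M$ overlap covariance. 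Your split exhibits the orthogonal decomposition $\pi_0\Var_P(\phi^Q_{Y,i})+(1-\pi_0)\Var_P(\psi_i)$ directly; the paper's needs only i.i.d.\ limit theory.

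The genuine gap is the claim $\widehat\beta_m(Q)\xrightarrow{p}\beta^\star_{Q,m}$ when $\Gamma^Q_{SS,m}$ is singular. The theorem assumes only $\lambda_{Q,n}\to0$. Your fallback condition $\lambda_{Q,n}\sqrt n\to\infty$ therefore proves a weaker statement. Your projection argument asserts, without proof, that the range component of the ridge solution converges. Under your own premise, that $\widehat\Gamma^Q_{SS,m}$ and $\widehat\Gamma^Q_{SY,m}$ carry $O_p(n^{-1/2})$ errors in null directions, that assertion is not even automatic. The off-diagonal blocks of $\widehat\Gamma^Q_{SS,m}$ couple the null-space block, inflated by $\lambda_{Q,n}^{-1}$, back into the range component of $(\widehat\Gamma^Q_{SS,m}+\lambda_{Q,n}I_K)^{-1}\widehat\Gamma^Q_{SY,m}$.

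The missing observation is that the premise is false: those errors are exactly zero. For $v\in\ker\Gamma^Q_{SS,m}$ you already note $v^\top\phi^Q_{S,i}=0$ almost surely. Now $\widehat\phi^Q_{S,i}=\phi^Q_{S,i}-\rho_i(\bar S^Q_{m,T}-\mathbb E_QS_{im})$ and $\bar S^Q_{m,T}-\mathbb E_QS_{im}=\sum_{j\in T}\phi^Q_{S,j}/\sum_{j\in T}\rho_j$. Hence $v^\top\widehat\phi^Q_{S,i}=0$ for every $i$, so $\widehat\Gamma^Q_{SS,m}v=0$ and $v^\top\widehat\Gamma^Q_{SY,m}=0$ exactly. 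Let $V$ be an orthonormal basis of $\mathrm{range}(\Gamma^Q_{SS,m})$, of dimension $d>0$; the case $d=0$ is trivial. This gives the exact identity
$\widehat\beta_m(Q)=V(V^\top\widehat\Gamma^Q_{SS,m}V+\lambda_{Q,n}I_d)^{-1}V^\top\widehat\Gamma^Q_{SY,m}$.
Continuous mapping at the positive definite limit $V^\top\Gamma^Q_{SS,m}V$ then yields
$\widehat\beta_m(Q)\xrightarrow{p}V(V^\top\Gamma^Q_{SS,m}V)^{-1}V^\top\Gamma^Q_{SY,m}=\beta^\star_{Q,m}$ for any $\lambda_{Q,n}\to0$. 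This exact range inclusion is how the paper closes the argument. It makes both your projection detour and the extra rate condition unnecessary, and the adaptivity step via Theorem~\ref{thm:adaptivity} then goes through as you wrote it.
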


\section{Experiments}
\label{sec:experiments}

We evaluate \method{} on eight real benchmarks across three models.
Appendix~\ref{app:sim} separates three controlled analyses: discrete-profile
mechanism stress tests, a continuous shrinking-bandwidth diagnostic of the
local gain formula, and a repeated-pool comparison of practical estimators.
Together they cover continuous, ordered discrete, and unordered categorical
profiles under known data-generating processes.

\subsection{Setup}

The three evaluated candidate models are Claude Haiku 3, Ministral 3B,
and Qwen3 32B. Claude Opus 4.6 supplies the judge-derived measurements. For
each candidate, the three pairwise anchors are Ministral 3B, Claude Haiku 3,
and Claude Opus 4.6; each pairwise signal averages the two response orderings.

The benchmarks and profiling axes are listed in Table~\ref{tab:benchmarks}.
For each model $m$ and item $i$, we construct
$S_{im}=(S_{im}^{\mathrm{judge}},S_{im}^{\mathrm{pair},1},S_{im}^{\mathrm{pair},2},
S_{im}^{\mathrm{pair},3},S_{im}^{\mathrm{conf}},S_{im}^{\mathrm{disagree}})^\top$:
$S_{im}^{\mathrm{judge}}=3^{-1}\sum_{r=1}^3G_{imr}$ averages pointwise judge scores across prompt variants;
$S_{im}^{\mathrm{pair},h}=(w_{imh}+0.5t_{imh})/2$ summarizes wins and ties against anchor model $h$;
$S_{im}^{\mathrm{conf}}$ is self-reported confidence;
and $S_{im}^{\mathrm{disagree}}=3^{-1}\sum_{r=1}^3(G_{imr}-\bar G_{im})^2$.
Each component is standardized within each benchmark/model pair using all $M$ items in $T$.
The exact prompts are in Appendix~\ref{app:prompts}.

We evaluate eight benchmarks spanning mathematical reasoning,
commonsense, science, and general knowledge. Although full labels are
available for evaluation, estimators are fit only on a sampled labeled set
$L$. For each benchmark--model cell we draw $B=100$ matched random
permutations and use the nested prefixes of sizes
$n_{\mathrm{lab}}\in\{50,100,200\}$; every method receives the same labeled
sets.

Our experiment protocol uses $\lambda=0.3$, profile bandwidth
$h=h_0=1.5\times1.06\,\widehat{\operatorname{sd}}(Z)M^{-1/5}$,
coefficient-fitting bandwidth $b=h$, and logistic regularization $C=1$.
These values are shared across all benchmarks, candidates, budgets, and
splits. The main results use no per-cell tuning or data-dependent
hyperparameter fallback.
The fixed ridge is a finite-sample stabilization choice rather than the
vanishing-ridge sequence that provides one sufficient route to
Assumption~\ref{ass:nuisance}; its empirical robustness is examined in
Appendix~\ref{app:robustness}.

\begin{table}[ht]
\caption{Benchmarks and profiling axes used in the main experiments.}
\label{tab:benchmarks}
\centering\small
\begin{tabular}{llrl}
\toprule
Benchmark & Profiling axis $Z$ & $M$ & Domain \\
\midrule
MATH-500 & difficulty level ($1,\ldots,5$) & 500 & math reasoning \\
ScienceQA & grade level ($2,\ldots,12$) & 500 & science (text-only) \\
MMLU & question token length & 500 & general knowledge \\
WinoGrande & sentence length (tokens) & 500 & commonsense \\
HellaSwag & context sentence length & 500 & commonsense \\
TruthfulQA & question token length & 500 & factuality \\
GSM8K & solution steps ($3,\ldots,10$) & 500 & grade-school math \\
ARC & question token length & 500 & science (challenge) \\
\bottomrule
\end{tabular}
\end{table}

\subsection{Estimators compared}
\label{estimators_compared}
All methods use the same labeled split, full-pool target, and evaluation
grid. They use the localization prescribed by each method; in particular,
PPCI uses RKHS weights and StratPPI uses strata.
All fitted logistic predictors standardize their covariates and use
$C=1$. For prediction-based rectifiers, predictions at labeled items are
two-fold out of fold, while predictions at the remaining pool items come from
the model refit on all of $L$. Here and below, ``CV'' in an estimator
name denotes control variate, whereas ``cross-validation'' refers to
hyperparameter selection. Scalar prediction control variate (CV) uses
the unregularized local covariance--variance coefficient.

\textbf{Naive}: $\widehat\theta_m^{\mathrm{naive}}(z)=\bar Y_{m,L,h}(z)$, the local weighted mean of gold labels in $L$.

\textbf{Plug-in (judge)}: fits a logistic regression of $Y_{im}$ on $S_{im}^{\mathrm{judge}}$ using $L$; estimates $\theta_m(z)$ as $\sum_{i\in T}w_{i,T,h}(z)\widehat p_m(S_{im}^{\mathrm{judge}})$.

\textbf{Plug-in (multi)}: fits a ridge logistic regression of $Y_{im}$ on $(Z_i, S_{im})$ using $L$; estimates $\theta_m(z)$ as $\sum_{i\in T}w_{i,T,h}(z)\widehat p_m(Z_i,S_{im})$.

\textbf{Aug.\ plug-in}: $\sum_{i\in T}w_{i,T,h}(z)\widehat p_m(Z_i,S_{im})+\sum_{i\in L}w_{i,L,h}(z)\{Y_{im}-\widehat p_m(Z_i,S_{im})\}$; adds a labeled residual correction to reduce bias from model misspecification.

\textbf{StratPPI} \citep{fisch2024stratppi}: fits a
two-fold cross-fitted logistic composite signal from $(S,Z)$, divides the
range of $Z$ into five equal-width strata, applies a scalar PPI correction
within each stratum, and linearly interpolates the stratum estimates. This is
an adaptation of StratPPI to our setting: the original estimand is a
stratum-weighted global mean rather than a continuous profile.
When either the labeled subset or the full pool contains fewer than two
items in a stratum, we use the labeled stratum mean, or $0.5$ if the stratum
contains no labeled item; this deterministic sparse-stratum rule involves no
tuning.

\textbf{Scalar prediction CV} \citep{angelopoulos2023ppiplus}: $\bar Y_{m,L,h}(z)-\widehat\eta_m(z)\{\bar P_{m,L,h}(z)-\bar P_{m,T,h}(z)\}$, where $\widehat\eta_m(z)$ is a locally optimized scalar and $P_{im}$ is a logistic predictor of $Y_{im}$ from $S_{im}$ fit on $L$.

\textbf{Local PPI} \citep{gu2024localppi}: applies the original
local-linear predictor and rectifier equations of \citet{gu2024localppi},
using the fixed raw judge-mean score as the supplied external predictor.

\textbf{PPCI} \citep{sui2026ppci}: applies the prediction-plus-residual
correction with Mat\'ern-$5/2$ RKHS weights cross-fitted over the fixed pool
$T$, the localized residual and prediction terms averaged over the $n$ labeled
and the pool items before dividing by the localized Jacobian, and uses
Tikhonov regularization selected at the corner of an L-curve defined by the
RKHS residual norm and a variance proxy. We report it as a finite-pool
adaptation, not the original algorithm.

\textbf{Global CV}: \method{} with a single coefficient $\widehat\beta_m^{\mathrm{glob}}$ estimated by pooling all $z$, rather than fitting locally.

\textbf{Per-signal CV}: \method{} restricted to the primary judge signal $S_{im}^{\mathrm{judge}}$ with a scalar local coefficient; isolates the gain from using $K$ signals jointly.

\textbf{Residual-only}: $\bar Y_{m,L,h}(z)-\widehat\beta_m(z)^\top\bar S_{m,L,h}(z)$, i.e., \method{} without the full-pool term $\widehat\beta_m(z)^\top\bar S_{m,T,h}(z)$; isolates the contribution of unlabeled signal means.

\textbf{\method{} (oracle $\beta^\star$)}: true $\beta_m^\star(z)$ injected in place of $\widehat\beta_m(z)$; an efficiency upper bound (simulation only).

\textbf{\method{} (feasible)}: full \method{} with local ridge coefficient $\widehat\beta_m(z)$ and all $K$ signals.

The empirical comparisons below concern the primary
performance-profile estimand; the paired-gap and deployment-weighted
constructions of Section~\ref{sec:gaps} are theoretical extensions and are
not presented as separate empirical experiments.

\subsection{Results}

\paragraph{Metrics.}
For every cell, the evaluation target is the full-pool gold profile
$\widehat\theta_{m,T}(z)=\sum_{i\in T}w_{i,T,h_0}(z)Y_{im}$, computed with
the Gaussian kernel and bandwidth $h_0$ on a fixed grid of $G=20$ points
spanning the 5th--95th percentiles of $Z$. Across repetitions, $T$ is
fixed and only $L$ is resampled, so MSE and RE evaluate the realized target;
the exact group formulas following Theorem~\ref{thm:finite} distinguish it
from the population target. Every fitted profile is
evaluated on the same grid. For split $r$, profile MSE is the unweighted grid
average
$\operatorname{MSE}_{m,r}=\frac{1}{G}\sum_{g=1}^{G}
\{\widehat\theta_{m,r}(z_g)-\widehat\theta_{m,T}(z_g)\}^2$.
Cellwise relative efficiency is the ratio of paired mean MSEs,
$\operatorname{RE}_{m,n}=
\{\frac1B\sum_r\operatorname{MSE}^{\mathrm{naive}}_{m,r}\}/
\{\frac1B\sum_r\operatorname{MSE}_{m,r}\}$; values above one favor the
method over the gold-label-only estimator. The reported 95\% Monte Carlo
intervals use paired split-level delta-method influence values; aggregate
intervals combine cellwise log-RE variances across independent cells, and
aggregates that pool the three nested label budgets treat the budgets within
each cell jointly, with the label-split permutation as the joint resampling
unit, so that the cross-budget covariances induced by the nested prefixes are
included. Full-pool
gold outcomes are used only to evaluate MSE, never to fit an estimator. We
evaluate the corresponding superpopulation gain formula only in controlled
simulations, where the population profile and local $R_m^2(z)$ are analytically
known; Figure~\ref{fig:thm4-pointwise} provides its pointwise validation.

{\color{black}
\begin{table}[t]
\centering\scriptsize
\setlength{\tabcolsep}{3pt}
\caption{Overall results. Entries are geometric-mean RE $[95\%\ \mathrm{MC\ CI}]$ across the 24 benchmark--model cells at each label budget; Overall aggregates all 72 cells. Each cell is a ratio of mean MSEs over the same $B=100$ label splits. Intervals apply the paired delta method within each cell; budget columns combine cellwise log-RE variances across independent cells, and the Overall column treats the three nested budgets within each cell jointly, using the label-split permutation as the joint resampling unit, so its variance includes the cross-budget covariances induced by the nested prefixes. Target: realized full-pool gold kernel profile; randomness: matched $L$-draws conditional on $T$.}
\label{tab:real:aggregate}
\resizebox{\textwidth}{!}{%
\begin{tabular}{lcccc}
\toprule
Estimator & $n_{\mathrm{lab}}=50$ & $n_{\mathrm{lab}}=100$ & $n_{\mathrm{lab}}=200$ & Overall \\
\midrule
Naive & $1.000\,[1.000,1.000]$ & $1.000\,[1.000,1.000]$ & $1.000\,[1.000,1.000]$ & $1.000\,[1.000,1.000]$ \\
Plug-in (judge) & $3.128\,[2.342,4.177]$ & $2.780\,[2.398,3.223]$ & $2.093\,[1.863,2.351]$ & $2.630\,[2.270,3.047]$ \\
Plug-in (multi) & $3.214\,[2.438,4.236]$ & $2.876\,[2.366,3.497]$ & $2.006\,[1.770,2.272]$ & $2.647\,[2.277,3.078]$ \\
Aug. plug-in & $1.060\,[0.798,1.406]$ & $1.227\,[1.090,1.382]$ & $1.502\,[1.302,1.734]$ & $1.250\,[1.098,1.423]$ \\
Global CV & $1.438\,[1.383,1.494]$ & $1.568\,[1.507,1.633]$ & $1.703\,[1.612,1.799]$ & $1.566\,[1.516,1.618]$ \\
Per-signal CV & $0.891\,[0.873,0.909]$ & $1.118\,[1.103,1.134]$ & $1.175\,[1.160,1.190]$ & $1.054\,[1.042,1.066]$ \\
Residual-only & $1.185\,[1.097,1.279]$ & $1.049\,[0.900,1.224]$ & $0.597\,[0.543,0.656]$ & $0.906\,[0.833,0.985]$ \\
Scalar prediction CV & $0.772\,[0.665,0.896]$ & $1.236\,[1.140,1.339]$ & $1.480\,[1.371,1.597]$ & $1.122\,[1.050,1.199]$ \\
Local PPI & $0.590\,[0.564,0.617]$ & $0.664\,[0.647,0.683]$ & $0.610\,[0.593,0.627]$ & $0.621\,[0.607,0.636]$ \\
PPCI & $1.171\,[0.983,1.393]$ & $1.262\,[1.119,1.423]$ & $1.347\,[1.182,1.535]$ & $1.258\,[1.143,1.384]$ \\
StratPPI & $0.676\,[0.581,0.788]$ & $0.850\,[0.727,0.994]$ & $0.863\,[0.794,0.939]$ & $0.792\,[0.726,0.864]$ \\
\method{} (feasible) & $\mathbf{5.400}\,[4.997,5.836]$ & $\mathbf{5.761}\,[5.321,6.238]$ & $\mathbf{5.431}\,[4.895,6.025]$ & $\mathbf{5.528}\,[5.181,5.898]$ \\
\bottomrule
\end{tabular}%
}
\end{table}
}

Across the 24 benchmark--candidate cells and three label budgets,
\method{} attains an overall geometric-mean RE of
$5.528\,[5.181,5.898]$, compared with
$2.647\,[2.277,3.078]$ for the strongest baseline overall, the multivariate
plug-in. Its budget-specific RE is $5.400$, $5.761$, and $5.431$ for
$n_{\mathrm{lab}}=50,100,200$, respectively, and it is the best-performing
method in all 72 cells. Audited cell-level RE, Monte Carlo intervals, and
mean MSEs are reported in Appendix~\ref{app:exp-tables}.

\section{Conclusion}

Cheap auxiliary signals need not be calibrated to be useful. They need only explain local variation in gold
correctness. \method{} turns that observation into a conditional estimator with a simple
implementation, an oracle-optimal local augmentation, and an interpretable gain formula.
Our experiments establish the result for data-efficient performance
profiles; the theory gives corresponding extensions to direct paired model
gaps and deployment-weighted scoring under the same statistical principle.

The scope extends naturally beyond LLM evaluation. The key ingredients are a scarce gold outcome
$Y_i$, abundant cheap proxy signals $S_i$ observable for all items, and a metadata covariate $Z_i$
defining the conditional structure of interest. In \emph{clinical prediction}, gold outcomes might
be long-term patient endpoints while cheap signals are short-term biomarkers, stratified by disease
severity. In \emph{online experimentation}, long-term user retention is expensive to observe while
short-term engagement proxies are immediate, stratified by user segment. In \emph{scientific
benchmarking} beyond NLP---computer vision, speech recognition, code generation---expensive human
annotations can be augmented with automated metrics stratified by input type or difficulty. In each
case, the local centering principle applies without requiring the cheap proxy to be calibrated, and
at the population level, the local $R^2_m(z)$ characterizes which strata can benefit.

\paragraph{Semantic profile spaces.}
Semantic similarity may define an alternative profile, but then it also defines an alternative
estimand. A prespecified low-dimensional embedding projection can serve as a continuous $Z$, while a
prespecified locality-sensitive-hashing (LSH) bucket can serve as a categorical $Z$; the existing
weights then apply unchanged. Direct smoothing in the original high-dimensional embedding space is
outside the present theory. At the population level, $R_m^2(z)$ characterizes the potential
usefulness of cheap signals for a chosen profile; it does not select the profile.

\paragraph{Cascaded raters.}
The same centered construction suggests a multilevel design when cheap signals are available on the
full pool, a more expensive rater is applied to a random subset, and human labels are collected on a
random nested subset. Appendix~\ref{app:cascade} records this algebraic extension. We treat it as a
discussion direction rather than a new main contribution and do not claim a complete optimality or
adaptivity theory for the cascade.

\paragraph{Limitations.}
\method{} improves efficiency only when cheap auxiliary signals have local explanatory power. If a cheap
judge is uninformative or adversarial within a subgroup, the local $R^2$ will be small and the method
should not be expected to help. Finite-sample covariance estimation can also be unstable when the
number of labels in a group is small relative to the number of cheap signals; ridge regularization
and bandwidth enlargement mitigate but do not remove this issue. The asymptotic theory allows an
estimated nuisance coefficient under stability conditions, but small groups can still violate the
large-neighborhood approximation and should be diagnosed empirically.
For a continuous $q$-dimensional profile, the effective local sample size is of order
$nh^q$, so local smoothing inherits the curse of dimensionality. Our theory treats fixed,
low-dimensional $q$ and does not cover a profile dimension that grows with sample size or direct
smoothing in a raw high-dimensional embedding space.
A possible negative use is over-reliance on cheap auxiliary signals
in settings where gold labels are biased, underspecified, or too sparse to detect local failures. We
therefore report theoretical gain curves computed from population $R^2$ in controlled
simulations, sensitivity to ridge and bandwidth choices, and full-label oracle comparisons.

\bibliographystyle{plainnat}
\bibliography{references}

\appendix

\section{Technical Lemmas}
\label{app:lemmas}

This appendix collects elementary linear-algebraic facts about positive
semidefinite covariances and Moore--Penrose pseudoinverses that are used
repeatedly in the proofs of Section~\ref{sec:theory}.

\begin{lemma}
\label{lem:range}
Let $X\in\mathbb R^K$ and $Y\in\mathbb R$ be random variables on a common probability space with $\mathbb E\|X\|_2^2<\infty$ and $\mathbb E Y^2<\infty$.
Let $\Sigma_{XX}=\Var(X)$, $\Sigma_{XY}=\Cov(X,Y)$, and $f(b)=b^\top\Sigma_{XX}b-2b^\top\Sigma_{XY}$ for $b\in\mathbb R^K$.
Write $\Sigma_{XX}^\dagger$ for the Moore--Penrose pseudoinverse of $\Sigma_{XX}$, $\mathrm{range}(\Sigma_{XX})=\{\Sigma_{XX}v:v\in\mathbb R^K\}$ for its column space, and $\ker(\Sigma_{XX})=\{v\in\mathbb R^K:\Sigma_{XX}v=0\}$ for its null space.
Then:
\begin{enumerate}[label=(\roman*),itemsep=0pt]
\item $\Sigma_{XY}\in\mathrm{range}(\Sigma_{XX})$.
\item $f$ is convex and attains a finite minimum on the affine set $\beta^\star+\ker(\Sigma_{XX})$, with $\beta^\star=\Sigma_{XX}^\dagger\Sigma_{XY}$ the unique minimum-Euclidean-norm minimizer.
\item $f(\beta^\star)=-\Sigma_{XY}^\top\Sigma_{XX}^\dagger\Sigma_{XY}$.
\item Among all deterministic $b\in\mathbb R^K$, $\Var(Y-b^\top X)$ is minimized exactly on the same solution set, with minimum value $\Var(Y)-\Sigma_{XY}^\top\Sigma_{XX}^\dagger\Sigma_{XY}$.
\end{enumerate}
\end{lemma}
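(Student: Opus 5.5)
The plan is to prove all four parts from the joint covariance of $(X,Y)$ together with standard facts about Moore--Penrose pseudoinverses. The finite second moments guarantee that $\Sigma_{XX}$, $\Sigma_{XY}$ and $\Var(Y)$ exist. For (i), fix $v\in\ker(\Sigma_{XX})$. Then $\Var(v^\top X)=v^\top\Sigma_{XX}v=0$, so $v^\top X$ is almost surely constant. Hence $v^\top\Sigma_{XY}=\Cov(v^\top X,Y)=0$. So $\Sigma_{XY}\perp\ker(\Sigma_{XX})$. Because $\Sigma_{XX}$ is symmetric, $\ker(\Sigma_{XX})^\perp=\mathrm{range}(\Sigma_{XX})$, and (i) follows.

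For (ii) and (iii), first note that $f$ is a quadratic whose Hessian $2\Sigma_{XX}$ is positive semidefinite, so $f$ is convex. Its gradient is $2(\Sigma_{XX}b-\Sigma_{XY})$, so the minimizers are exactly the solutions of the normal equations $\Sigma_{XX}b=\Sigma_{XY}$, provided a solution exists. By (i), $\Sigma_{XX}\Sigma_{XX}^\dagger\Sigma_{XY}=\Sigma_{XY}$, using that $\Sigma_{XX}\Sigma_{XX}^\dagger$ is the orthogonal projector onto $\mathrm{range}(\Sigma_{XX})$. Thus $\beta^\star$ solves the normal equations, and the full solution set is $\beta^\star+\ker(\Sigma_{XX})$. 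Since $\beta^\star\in\mathrm{range}(\Sigma_{XX}^\dagger)=\mathrm{range}(\Sigma_{XX})=\ker(\Sigma_{XX})^\perp$, Pythagoras gives $\|\beta^\star+v\|^2=\|\beta^\star\|^2+\|v\|^2$ for $v\in\ker(\Sigma_{XX})$. So $\beta^\star$ is the unique minimum-norm minimizer. For (iii), substitute $\Sigma_{XX}\beta^\star=\Sigma_{XY}$ into $f$ to get $f(\beta^\star)=\beta^{\star\top}\Sigma_{XY}-2\beta^{\star\top}\Sigma_{XY}=-\Sigma_{XY}^\top\Sigma_{XX}^\dagger\Sigma_{XY}$, using the symmetry of $\Sigma_{XX}^\dagger$.

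For (iv), expand $\Var(Y-b^\top X)=\Var(Y)-2b^\top\Sigma_{XY}+b^\top\Sigma_{XX}b=\Var(Y)+f(b)$. Because $\Var(Y)$ does not depend on $b$, this has the same minimizers as $f$, and by (iii) its minimum value is $\Var(Y)-\Sigma_{XY}^\top\Sigma_{XX}^\dagger\Sigma_{XY}$.

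The only delicate step is (i). Without it the normal equations could have no solution when $\Sigma_{XX}$ is singular, and $f$ would then be unbounded below. The almost-sure-constancy argument is what settles this, so that is where I will be careful. The remaining steps are routine quadratic minimization.
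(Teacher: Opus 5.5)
Your proposal is correct and follows essentially the same route as the paper. Both use the almost-sure-constancy argument to get $\Sigma_{XY}\perp\ker(\Sigma_{XX})$, convexity plus the normal equations with solution set $\beta^\star+\ker(\Sigma_{XX})$, orthogonality of $\mathrm{range}(\Sigma_{XX})$ and $\ker(\Sigma_{XX})$ for the minimum-norm claim, and the expansion $\Var(Y-b^\top X)=\Var(Y)+f(b)$; your explicit check that $\Sigma_{XX}\Sigma_{XX}^\dagger\Sigma_{XY}=\Sigma_{XY}$, so that $\beta^\star$ actually solves the normal equations, fills in a step the paper leaves implicit.
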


\begin{proof}
For (i), symmetry of $\Sigma_{XX}$ gives the orthogonal decomposition $\mathbb R^K=\mathrm{range}(\Sigma_{XX})\oplus\ker(\Sigma_{XX})$.
Fix any $v\in\ker(\Sigma_{XX})$.
Then $\Var(v^\top X)=v^\top\Sigma_{XX}v=0$, so $v^\top X$ is almost surely constant, hence $v^\top\Sigma_{XY}=\Cov(v^\top X,Y)=0$.
Thus $\Sigma_{XY}\perp\ker(\Sigma_{XX})$, which gives $\Sigma_{XY}\in\mathrm{range}(\Sigma_{XX})$.

For (ii), the Hessian of $f$ is $2\Sigma_{XX}\succeq0$, so $f$ is convex.
By (i), the equation $\Sigma_{XX}b=\Sigma_{XY}$ has at least one solution.
Pick any one and call it $\beta_0$.
For any $b\in\mathbb R^K$, $b$ solves $\Sigma_{XX}b=\Sigma_{XY}$ if and only if $\Sigma_{XX}(b-\beta_0)=0$, i.e., $b-\beta_0\in\ker(\Sigma_{XX})$, so the solution set equals $\beta_0+\ker(\Sigma_{XX})$.
Each such solution is a stationary point of the convex $f$, hence a global minimizer.
Among solutions, $\Sigma_{XX}^\dagger\Sigma_{XY}$ lies in $\mathrm{range}(\Sigma_{XX})$, which is orthogonal to $\ker(\Sigma_{XX})$, so adding any nonzero element of $\ker(\Sigma_{XX})$ strictly increases its Euclidean norm, identifying $\beta^\star=\Sigma_{XX}^\dagger\Sigma_{XY}$ as the unique minimum-Euclidean-norm minimizer.

For (iii), if $\Sigma_{XX}\beta^\star=\Sigma_{XY}$, then $\beta^{\star\top}\Sigma_{XX}\beta^\star=\beta^{\star\top}\Sigma_{XY}$, so $f(\beta^\star)=-\beta^{\star\top}\Sigma_{XY}$.
Choosing $\beta^\star=\Sigma_{XX}^\dagger\Sigma_{XY}$ and using $(\Sigma_{XX}^\dagger)^\top=\Sigma_{XX}^\dagger$ for symmetric $\Sigma_{XX}$ gives the stated value.

For (iv), $\Var(Y-b^\top X)=\Var(Y)+f(b)$, so the minimizers coincide with those of $f$ and the minimum value is $\Var(Y)+f(\beta^\star)=\Var(Y)-\Sigma_{XY}^\top\Sigma_{XX}^\dagger\Sigma_{XY}$ by (iii).
\end{proof}

\section{Proofs for Section~\ref{sec:theory}}
\label{app:proof}

\subsection{Proof of Proposition~\ref{prop:identity}}
For the identification claim, since $b(z)$ is deterministic, linearity of expectation gives
\[
\mathbb E\!\left[
b(z)^\top\{S_{im}-\mu_{S,m}(z)\}\mid Z_i=z
\right]
=
b(z)^\top
\left\{
\mathbb E(S_{im}\mid Z_i=z)-\mu_{S,m}(z)
\right\}
=0.
\]
Thus subtracting the centered cheap-signal term leaves the conditional mean of $Y_{im}$ unchanged.

For the optimality claim, note that the conditional variance of $Y_{im}-b^\top\{S_{im}-\mu_{S,m}(z)\}$ given $Z_i=z$ equals $\Var(Y_{im}-b^\top S_{im}\mid Z_i=z)$, since $b^\top\mu_{S,m}(z)$ is non-random.
Apply Lemma~\ref{lem:range} conditionally on $Z_i=z$ with $X=S_{im}$ and $Y=Y_{im}$.
The moment hypotheses hold by the assumed $\mathbb E(\|S_{im}\|_2^2\mid Z_i=z)<\infty$ and $Y_{im}\in\{0,1\}$.
Part~(i) gives $\Sigma_{SY,m}(z)\in\mathrm{range}(\Sigma_{SS,m}(z))$, so the normal equations $\Sigma_{SS,m}(z)b=\Sigma_{SY,m}(z)$ have at least one solution.
Part~(iv) identifies this solution set with the minimizers of the conditional variance, and part~(ii) singles out $\beta_m^\star(z)=\Sigma_{SS,m}(z)^\dagger\Sigma_{SY,m}(z)$ as the minimum-norm element.

\subsection{Proof of Theorem~\ref{thm:finite}}
Let $\bar Y_{T,g}=M_g^{-1}\sum_{i\in T_g}Y_{im}$. Rewrite
\[
\widehat\theta_m(g;b_g)
=
\bar R_{L,g}(b_g)+b_g^\top\bar S_{T,g},
\qquad
R_i(b_g)=Y_{im}-b_g^\top S_{im}.
\]
Here $\bar R_{L,g}(b_g)=n_g^{-1}\sum_{i\in L_g}R_i(b_g)$ and
$\bar S_{T,g}=M_g^{-1}\sum_{i\in T_g}S_{im}$.
Conditional on $T_g$, $\bar R_{L,g}(b_g)$ is the mean of a simple random sample without replacement from $\{R_i(b_g):i\in T_g\}$, so $\mathbb E_L\{\bar R_{L,g}(b_g)\mid T_g\}=M_g^{-1}\sum_{i\in T_g}R_i(b_g)$.
Since $b_g^\top\bar S_{T,g}$ is $T_g$-measurable and $M_g^{-1}\sum_{i\in T_g}R_i(b_g)+b_g^\top\bar S_{T,g}=\bar Y_{T,g}$,
\[
\mathbb E_L\{\widehat\theta_m(g;b_g)\mid T_g\}
=
\bar Y_{T,g}.
\]
Taking expectation over the i.i.d.\ benchmark draw gives
$\mathbb E\{\widehat\theta_m(g;b_g)\}=\mathbb E(\bar Y_{T,g})=\theta_m(g)$.

For the variance, let
$S^2_{R,T,g}(b_g)=(M_g-1)^{-1}\sum_{i\in T_g}\{R_i(b_g)-\bar R_{T,g}(b_g)\}^2$, where
$\bar R_{T,g}(b_g)=M_g^{-1}\sum_{i\in T_g}R_i(b_g)$. Conditional on $T_g$,
\[
\Var_L\{\widehat\theta_m(g;b_g)\mid T_g\}
=
\left(1-\frac{n_g}{M_g}\right)\frac{S^2_{R,T,g}(b_g)}{n_g}.
\]
This identity also gives the realized-pool oracle ratio used in the
Metrics paragraph. When the finite-pool variance $S^2_{Y,T,g}$ of $Y$ is
positive, let $b_{T,g}^{\mathrm d}$ minimize $S^2_{R,T,g}(b)$. Writing
$R_{T,g}^{2,\mathrm d}=1-S^2_{R,T,g}(b_{T,g}^{\mathrm d})/
S^2_{Y,T,g}$, the common factor $(1-n_g/M_g)/n_g$ cancels between the naive
and oracle conditional variances, giving
$1/(1-R_{T,g}^{2,\mathrm d})$. Under the joint law, instead take the
population minimizer $\beta_m^\star(g)$, for which
$\sigma^2_{R,g}(\beta_m^\star)=\sigma^2_{Y,m}(g)
\{1-R_m^2(g)\}$. Substitution into the variance formula of
Theorem~\ref{thm:finite}, with $\pi_g=n_g/M_g$, gives the oracle ratio
$1/\{1-(1-\pi_g)R_m^2(g)\}\le 1/\pi_g$.
By the law of total variance,
\[
\Var\{\widehat\theta_m(g;b_g)\}
=
\Var(\bar Y_{T,g})
+
\mathbb E\!\left[\Var_L\{\widehat\theta_m(g;b_g)\mid T_g\}\right].
\]
Since $T_g$ contains $M_g$ i.i.d.\ conditional draws, $\Var(\bar Y_{T,g})=\sigma^2_{Y,m}(g)/M_g$ and
$\mathbb E\{S^2_{R,T,g}(b_g)\}=\sigma^2_{R,g}(b_g)$, which yields the stated variance.

\subsection{Proof of Theorem~\ref{thm:adaptivity}}
The estimator is affine in $b$:
\[
\widehat\theta_m(z;b)
=
\bar Y_{m,L}(z)-b^\top\{\bar S_{m,L}(z)-\bar S_{m,T}(z)\}.
\]
Subtracting the same expression evaluated at $b=\beta_m^\star(z)$ gives
\[
\widehat\theta_m(z;\widehat\beta_m(z))-\widehat\theta_m(z;\beta_m^\star(z))
=
-
\{\widehat\beta_m(z)-\beta_m^\star(z)\}^\top
\{\bar S_{m,L}(z)-\bar S_{m,T}(z)\}.
\]
The rate statement follows immediately from Cauchy--Schwarz.
Specifically, under the stated conditions,
\[
\left|
\widehat\theta_m(z;\widehat\beta_m(z))-\widehat\theta_m(z;\beta_m^\star(z))
\right|
\le
\|\widehat\beta_m(z)-\beta_m^\star(z)\|\,
\|\bar S_{m,L}(z)-\bar S_{m,T}(z)\|
=
o_p((nh^q)^{-1/2}),
\]
so the estimated coefficient contributes only a second-order term.

\section{Proof of Theorem~\ref{thm:oracle-optimality}}
\label{app:oracle-optimality}

Suppress $m,z$ from the notation and fix $b$. Let
\[
E_i=b^\top S_i,\qquad R_i=Y_i-b^\top S_i.
\]
The estimator can be written as
\[
\widehat\theta(z;b)=\widehat\mu_{R,L}(z)+\widehat\mu_{E,T}(z).
\]
All local means below are well defined on the event that the empirical local denominators
$\sum_{i\in L}K_h(Z_i-z)$ and $\sum_{i\in T}K_h(Z_i-z)$ are positive. Under
Assumption~\ref{ass:regularity}, this event has probability tending to one, and it is automatic for
kernels that are positive everywhere, such as the Gaussian. Define the population-smoothed local
means
\[
\mu_{R,h}(z)=
\frac{\mathbb E\{K_h(Z_i-z)R_i\}}{\mathbb E\{K_h(Z_i-z)\}},
\qquad
\mu_{E,h}(z)=
\frac{\mathbb E\{K_h(Z_i-z)E_i\}}{\mathbb E\{K_h(Z_i-z)\}}.
\]
Since $R_i+E_i=Y_i$ and $\mathbb E(Y_i\mid Z_i)=\theta(Z_i)$, the tower property gives the exact
identity $\mu_{R,h}(z)+\mu_{E,h}(z)-\theta(z)=B_h(z)$. Hence, on that event,
\[
\widehat\theta(z;b)-\theta(z)-B_h(z)
=
\{\widehat\mu_{R,L}(z)-\mu_{R,h}(z)\}
+
\{\widehat\mu_{E,T}(z)-\mu_{E,h}(z)\},
\]
with no remainder. Since
$\mathbb E[K_h(Z_i-z)\{R_i-\mu_{R,h}(z)\}]=0$ by construction, the standard ratio
linearization gives
\[
\widehat\mu_{R,L}(z)-\mu_{R,h}(z)
=
\frac{1}{n\,\mathbb E\{K_h(Z_i-z)\}}
\sum_{i\in L}K_h(Z_i-z)\{R_i-\mu_{R,h}(z)\}
+o_p\{(nh^q)^{-1/2}\},
\]
and similarly for $\widehat\mu_{E,T}(z)$ with $(L,n)$ replaced by $(T,M)$. Because $L$ is a
uniform random subset of the i.i.d.\ pool drawn independently of the item values, the array of
value--membership pairs is exchangeable, so we may take $L=\{1,\ldots,n\}$ without loss of
generality. Any linear combination of the two centered sums is then a sum over two independent
blocks of independent terms---items $1,\ldots,n$ and items $n+1,\ldots,M$---and the
Cram\'er--Wold device with the Lyapunov central limit theorem for triangular arrays, using the
$(2+\delta)$-moment bound in Assumption~\ref{ass:regularity}, yields joint asymptotic normality.

The marginal variance limits follow from the same kernel integration. For the cross covariance,
expanding over $L\times T$, independence of items kills the $i\ne j$ contributions and leaves only
the $n$ overlap terms with $i=j\in L$.
Each contributes
$\mathbb E[K_h(Z_i-z)^2\{R_i-\mu_{R,h}(z)\}\{E_i-\mu_{E,h}(z)\}]
=h^{-q}\{R(K)f_Z(z)\Cov(R_i,E_i\mid Z_i=z)+o(1)\}$ by kernel integration under continuity of
the conditional second moments at $z$, since $\mu_{R,h}(z)\to\mu_R(z)$ and
$\mu_{E,h}(z)\to\mu_E(z)$. Normalizing the two sums by
$n\,\mathbb E\{K_h(Z_i-z)\}$ and $M\,\mathbb E\{K_h(Z_i-z)\}$, and using
$\mathbb E\{K_h(Z_i-z)\}\to f_Z(z)$ only at this final step, yields
\[
\Cov\{\widehat\mu_{R,L}(z),\widehat\mu_{E,T}(z)\}
=
\frac{R(K)}{M h^q f_Z(z)}
\Cov(R_i,E_i\mid Z_i=z)
+
o((nh^q)^{-1}).
\]
Hence
\[
\AVar\{\widehat\theta(z;b)\}
=
\frac{R(K)}{h^q f_Z(z)}
\left[
\frac{\Var(R_i\mid Z_i=z)}{n}
+
\frac{\Var(E_i\mid Z_i=z)}{M}
+
\frac{2\Cov(R_i,E_i\mid Z_i=z)}{M}
\right].
\]
Multiplying by $nh^q$ and using $\pi_0=\lim n/M$ gives the variance factor
\[
\Var(Y_i-b^\top S_i\mid Z_i=z)
+
\pi_0\Var(b^\top S_i\mid Z_i=z)
+
2\pi_0\Cov(Y_i-b^\top S_i,b^\top S_i\mid Z_i=z).
\]
Expanding the quadratic yields
\[
\sigma_Y^2
+
(1-\pi_0)\{b^\top\Sigma_{SS}b-2b^\top\Sigma_{SY}\}
=
\mathcal V(z;b).
\]
When $\pi_0<1$, applying Lemma~\ref{lem:range}(i)--(iii) conditionally on $Z_i=z$ to $X=S_i$ and $Y=Y_i$ shows that the convex quadratic $f(b)=b^\top\Sigma_{SS}b-2b^\top\Sigma_{SY}$ is minimized on the solution set of $\Sigma_{SS}b=\Sigma_{SY}$, with minimum-norm minimizer $\Sigma_{SS}^\dagger\Sigma_{SY}$ and minimum value $-\Sigma_{SY}^\top\Sigma_{SS}^\dagger\Sigma_{SY}$.
Substituting into $\mathcal V(z;b)$ gives
\[
\mathcal V(z;\beta_m^\star)
=
\sigma_Y^2-(1-\pi_0)\Sigma_{SY}^\top\Sigma_{SS}^\dagger\Sigma_{SY}
=
\sigma_Y^2\{1-(1-\pi_0)R^2\}.
\]
When $\pi_0=1$, the scaled variance of $\bar S_L(z)-\bar S_T(z)$ carries the
finite-population factor $1-n/M\to0$, so
$\sqrt{nh^q}\{\bar S_L(z)-\bar S_T(z)\}=o_p(1)$ and the augmentation does not contribute to the
first-order variance.

\section{Proof of Theorem~\ref{thm:gain}}
\label{app:gain-proof}

We first consider the continuous case.
By Theorem~\ref{thm:oracle-optimality} with $b=\beta_m^\star(z)$, the oracle estimator $\widehat\theta_m(z;\beta_m^\star(z))$ satisfies
\[
\sqrt{nh^q}
\{\widehat\theta_m(z;\beta_m^\star(z))-\theta_m(z)-B_h(z)\}
\rightsquigarrow
\mathcal N\!\left(0,\frac{R(K)}{f_Z(z)}\mathcal V_m(z;\beta_m^\star(z))\right).
\]
By Lemma~\ref{lem:range} applied conditionally on $Z_i=z$ with $X=S_{im}$ and $Y=Y_{im}$, $\beta_m^\star(z)=\Sigma_{SS,m}(z)^\dagger\Sigma_{SY,m}(z)$ minimizes the local quadratic $f$, and by definition of $R_m^2(z)$,
\[
\Sigma_{SY,m}(z)^\top\Sigma_{SS,m}(z)^\dagger\Sigma_{SY,m}(z)=\sigma^2_{Y,m}(z)R_m^2(z).
\]
Substituting into the variance factor of Theorem~\ref{thm:oracle-optimality} gives
\[
\mathcal V_m(z;\beta_m^\star(z))=\sigma^2_{Y,m}(z)\{1-(1-\pi_0)R_m^2(z)\}.
\]

The feasible estimator replaces $\beta_m^\star(z)$ by $\widehat\beta_m(z)$.
By Theorem~\ref{thm:adaptivity} and Assumption~\ref{ass:nuisance},
\[
\widehat\theta_m(z;\widehat\beta_m(z))-\widehat\theta_m(z;\beta_m^\star(z))
=
-\{\widehat\beta_m(z)-\beta_m^\star(z)\}^\top\{\bar S_{m,L}(z)-\bar S_{m,T}(z)\}=o_p((nh^q)^{-1/2}),
\]
so the feasible and oracle estimators share the same first-order limit distribution.

The naive smoother corresponds to $b=0$ in Theorem~\ref{thm:oracle-optimality}, with $\mathcal V_m(z;0)=\sigma^2_{Y,m}(z)$, so its first-order variance factor is $\{R(K)/f_Z(z)\}\sigma^2_{Y,m}(z)$.
Dividing the naive variance by the LACE variance gives the gain $\operatorname{Gain}_m(z)=1/\{1-(1-\pi_0)R_m^2(z)\}$.

For a fixed discrete group $g$ with $p_g=\mathbb P(Z_i=g)>0$, condition on the group counts $M_g=|\{i\in T:Z_i=g\}|$ and $n_g=|\{i\in L:Z_i=g\}|$.
For fixed $M_g$ and $n_g$, Theorem~\ref{thm:finite} with $b_g=\beta_m^\star(g)$ gives
\[
\Var\,\widehat\theta_m(g;\beta_m^\star(g))
=
\frac{\sigma^2_{Y,m}(g)}{M_g}
+
\left(1-\frac{n_g}{M_g}\right)\frac{\sigma^2_{Y,m}(g)\{1-R_m^2(g)\}}{n_g},
\]
where the residual variance $\sigma^2_{R,g}(\beta_m^\star(g))=\sigma^2_{Y,m}(g)\{1-R_m^2(g)\}$ follows from Lemma~\ref{lem:range}(iv).
Since $M_g/M\to p_g$, $n_g/n\to p_g$, and $n_g/M_g\to\pi_0$ in probability,
\[
\Var\,\widehat\theta_m(g;\beta_m^\star(g))
=
\frac{\sigma^2_{Y,m}(g)}{np_g}\{1-(1-\pi_0)R_m^2(g)\}+o(n^{-1}).
\]
The corresponding grouped CLT follows from the standard CLT for within-group sample means.
The replacement of $\beta_m^\star(g)$ by $\widehat\beta_m(g)$ follows from the same Cauchy--Schwarz argument as in Theorem~\ref{thm:adaptivity}, with rate $\|\bar S_{m,L}(g)-\bar S_{m,T}(g)\|=O_p((np_g)^{-1/2})$ from the standard CLT for sample means.
The within-group estimator has $B_h(g)=0$ since it does not smooth.

\section{Proof of Theorem~\ref{thm:extensions}}
\label{app:extensions-proof}

For the paired gap, apply Theorem~\ref{thm:gain} with $(Y_{im},S_{im})$ replaced by
\[
Y_i^\Delta=D_{i,ab}=Y_{ia}-Y_{ib},
\qquad
S_i^\Delta=H_{i,ab}.
\]
Then $\mathbb E(D_{i,ab}\mid Z_i=z)=\Delta_{ab}(z)$, and the transferred Assumptions~\ref{ass:regularity}--\ref{ass:nuisance} give the stated limit distribution.
The smoothing bias is
\[
B_{\Delta,h}(z)
=
\frac{\mathbb E\{K_h(Z_i-z)\Delta_{ab}(Z_i)\}}{\mathbb E\{K_h(Z_i-z)\}}
-\Delta_{ab}(z).
\]
Lemma~\ref{lem:range}, applied conditionally on $Z_i=z$ with $X=H_{i,ab}$ and $Y=D_{i,ab}$, identifies the optimal coefficient as
\[
\beta^\star_{ab}(z)=\Sigma_{HH,ab}(z)^\dagger\Sigma_{HD,ab}(z),
\]
and yields the local coefficient of determination $R^2_{\Delta,ab}(z)$ appearing in the variance factor.

For the deployment score, abbreviate $\rho_i=r_Q(Z_i)$,
$\Psi_Q=\Psi_m(Q)$, and $\mu_Q=\mathbb E_Q(S_{im})$. The covariate-shift
definition of $Q$, together with $\mathbb E_P\rho_i=1$, gives
\[
\Psi_Q=\mathbb E_P(\rho_iY_{im}),
\qquad
\mu_Q=\mathbb E_P(\rho_iS_{im}).
\]
Let
\[
X_i=\rho_i(S_{im}-\mu_Q)=\phi^Q_{S,i},
\qquad
U_i=\rho_i(Y_{im}-\Psi_Q)=\phi^Q_{Y,i},
\]
and write $\mathbb P_A f=|A|^{-1}\sum_{i\in A}f_i$.
Then $\mathbb E_PX_i=0$, $\mathbb E_PU_i=0$,
$\Gamma^Q_{SS,m}=\mathbb E_P(X_iX_i^\top)$, and
$\Gamma^Q_{SY,m}=\mathbb E_P(X_iU_i)$.

We first establish consistency of the feasible coefficient. Conditional on any realized index set
$L$ independent of the data, the observations indexed by $L$ are $n$ i.i.d. draws from $P$.
Consequently, the weak law applies to both $\mathbb P_T$ and $\mathbb P_L$. In particular,
\[
\mathbb P_T\rho_i\xrightarrow{p}1,
\qquad
\mathbb P_L\rho_i\xrightarrow{p}1,
\]
so both self-normalizing denominators are positive with probability tending to one; define the
corresponding ratios arbitrarily on the complementary event. Moreover,
\[
d_M:=\bar S^Q_{m,T}-\mu_Q
=\frac{\mathbb P_TX_i}{\mathbb P_T\rho_i}=o_p(1),
\qquad
e_n:=\bar Y^Q_{m,L}-\Psi_Q
=\frac{\mathbb P_LU_i}{\mathbb P_L\rho_i}=o_p(1).
\]
The estimated influence quantities therefore satisfy
\[
\widehat\phi^Q_{S,i}=X_i-\rho_i d_M,
\qquad
\widehat\phi^Q_{Y,i}=U_i-\rho_i e_n.
\]
Expanding their empirical second moments gives
\[
\begin{aligned}
\widehat\Gamma^Q_{SS,m}
={}&\mathbb P_T(X_iX_i^\top)
-\{\mathbb P_T(\rho_iX_i)\}d_M^\top
-d_M\{\mathbb P_T(\rho_iX_i)\}^\top
+\{\mathbb P_T(\rho_i^2)\}d_Md_M^\top,
\\
\widehat\Gamma^Q_{SY,m}
={}&\mathbb P_L(X_iU_i)
-\{\mathbb P_L(\rho_iX_i)\}e_n
-d_M\mathbb P_L(\rho_iU_i)
+\{\mathbb P_L(\rho_i^2)\}d_Me_n.
\end{aligned}
\]
Boundedness of $\rho_i$ and the assumed second moments make all displayed empirical moments
integrable; in particular,
$\mathbb E_P\|X_iU_i\|_2\le
\{\mathbb E_P\|X_i\|_2^2\}^{1/2}\{\mathbb E_PU_i^2\}^{1/2}<\infty$.
The weak law and Slutsky's theorem thus imply
\[
\widehat\Gamma^Q_{SS,m}\xrightarrow{p}\Gamma^Q_{SS,m},
\qquad
\widehat\Gamma^Q_{SY,m}\xrightarrow{p}\Gamma^Q_{SY,m}.
\]

It remains to verify that a singular $\Gamma^Q_{SS,m}$ causes no null-space instability. Let
$\mathcal R=\operatorname{Range}(\Gamma^Q_{SS,m})$. For every
$v\in\operatorname{Null}(\Gamma^Q_{SS,m})$,
\[
0=v^\top\Gamma^Q_{SS,m}v=\mathbb E_P\{(v^\top X_i)^2\}.
\]
Taking a finite orthonormal basis of the null space and intersecting the corresponding probability-one
events shows that $X_i\in\mathcal R$ almost surely. Hence, on the event
$\mathbb P_T\rho_i>0$, the exact identity
$d_M=\mathbb P_TX_i/\mathbb P_T\rho_i$ gives $d_M\in\mathcal R$, and therefore
$\widehat\phi^Q_{S,i}\in\mathcal R$. It follows that
\[
\operatorname{Range}(\widehat\Gamma^Q_{SS,m})\subseteq\mathcal R,
\qquad
\widehat\Gamma^Q_{SY,m}\in\mathcal R.
\]
Let $d=\operatorname{rank}(\Gamma^Q_{SS,m})$. If $d=0$, then $X_i=0$ almost surely,
$\widehat\phi^Q_{S,i}=0$, and
$\widehat\beta_m(Q)=0=\beta^\star_{Q,m}$ with probability tending to one.
If $d>0$, let the columns of $V\in\mathbb R^{K\times d}$ form an orthonormal basis of
$\mathcal R$, and let the columns of $W$ form an orthonormal basis of $\mathcal R^\perp$.
Set
\[
G=V^\top\Gamma^Q_{SS,m}V,
\quad
\widehat G=V^\top\widehat\Gamma^Q_{SS,m}V,
\quad
g=V^\top\Gamma^Q_{SY,m},
\quad
\widehat g=V^\top\widehat\Gamma^Q_{SY,m}.
\]
The matrix $G$ is positive definite, $\widehat G\xrightarrow{p}G$, and
$\widehat g\xrightarrow{p}g$. The range inclusions above give the exact block decomposition
\[
\widehat\Gamma^Q_{SS,m}+\lambda_{Q,n}I_K
=V(\widehat G+\lambda_{Q,n}I_d)V^\top
+\lambda_{Q,n}WW^\top,
\qquad
\widehat\Gamma^Q_{SY,m}=V\widehat g,
\]
and hence
\[
\{\widehat\Gamma^Q_{SS,m}+\lambda_{Q,n}I_K\}^{-1}
=V(\widehat G+\lambda_{Q,n}I_d)^{-1}V^\top
+\lambda_{Q,n}^{-1}WW^\top.
\]
The potentially divergent null-space term vanishes when this inverse acts on
$\widehat\Gamma^Q_{SY,m}=V\widehat g$. Since $\lambda_{Q,n}\to0$, the continuous mapping theorem
therefore yields
\[
\widehat\beta_m(Q)
=V(\widehat G+\lambda_{Q,n}I_d)^{-1}\widehat g
\xrightarrow{p}
VG^{-1}g
=(\Gamma^Q_{SS,m})^\dagger\Gamma^Q_{SY,m}
=\beta^\star_{Q,m}.
\]

For $A=L$ for the outcome mean and $A\in\{L,T\}$ for the signal mean, write $N_A=|A|$.
The self-normalized weighted means satisfy the H\'ajek expansions
\[
\bar Y^Q_{m,A}-\Psi_Q
=
\frac{1}{N_A}\sum_{i\in A}\rho_i\{Y_{im}-\Psi_Q\}+o_p(N_A^{-1/2})
=
\mathbb P_A\phi^Q_{Y,i}+o_p(N_A^{-1/2}),
\]
and similarly $\bar S^Q_{m,A}-\mu_Q=\mathbb P_A\phi^Q_{S,i}+o_p(N_A^{-1/2})$.
These are first-order delta-method expansions for the ratio $\sum_A r_QV_i/\sum_A r_Q$, using $\mathbb E_P r_Q(Z_i)=1$ and boundedness of $r_Q$.

For a fixed coefficient $\beta$, the deployment LACE estimator is $\widehat\Psi_m(Q;\beta)=\bar Y^Q_{m,L}-\beta^\top\{\bar S^Q_{m,L}-\bar S^Q_{m,T}\}$.
Substituting the two H\'ajek expansions gives
\[
\widehat\Psi_m(Q;\beta)-\Psi_m(Q)
=
\mathbb P_{n,L}\phi^Q_{R,i}(\beta)
+
\mathbb P_{M,T}\phi^Q_{E,i}(\beta)
+
o_p(n^{-1/2}),
\]
where $\phi^Q_{R,i}(\beta)=\phi^Q_{Y,i}-\beta^\top\phi^Q_{S,i}$ and $\phi^Q_{E,i}(\beta)=\beta^\top\phi^Q_{S,i}$.

Since $L\subset T$, the covariance between the two empirical averages has the $1/M$ overlap form
\[
\Cov\{\mathbb P_{n,L}\phi^Q_R(\beta),\mathbb P_{M,T}\phi^Q_E(\beta)\}
=
\frac{1}{M}\Cov_P\{\phi^Q_R(\beta),\phi^Q_E(\beta)\}.
\]
Multiplying by $n$ and using $n/M\to\pi_0$, the first-order variance factor for fixed $\beta$ is
\[
\Var_P\{\phi^Q_R(\beta)\}+\pi_0\Var_P\{\phi^Q_E(\beta)\}+2\pi_0\Cov_P\{\phi^Q_R(\beta),\phi^Q_E(\beta)\}.
\]
The multivariate central limit theorem applied jointly to the labeled observations and the remaining
full-pool observations gives asymptotic normality with this variance.
Taking $\beta=\beta^\star_{Q,m}$ and applying Lemma~\ref{lem:range} with $X=\phi^Q_{S,i}$ and $Y=\phi^Q_{Y,i}$ gives $\Cov_P\{\phi^Q_R(\beta^\star_{Q,m}),\phi^Q_E(\beta^\star_{Q,m})\}=0$, $\Var_P\{\phi^Q_E(\beta^\star_{Q,m})\}=(\Gamma^Q_{SY,m})^\top(\Gamma^Q_{SS,m})^\dagger\Gamma^Q_{SY,m}$, and $\Var_P\{\phi^Q_R(\beta^\star_{Q,m})\}=\Var_P(\phi^Q_{Y,i})-(\Gamma^Q_{SY,m})^\top(\Gamma^Q_{SS,m})^\dagger\Gamma^Q_{SY,m}$.
Substituting yields the first-order variance $\Var_P(\phi^Q_{Y,i})\{1-(1-\pi_0)R^2_{Q,m}\}$, matching the stated limit.

Finally,
\[
\widehat\Psi_m(Q;\widehat\beta_m(Q))-\widehat\Psi_m(Q;\beta^\star_{Q,m})
=
-\{\widehat\beta_m(Q)-\beta^\star_{Q,m}\}^\top\{\bar S^Q_{m,L}-\bar S^Q_{m,T}\}.
\]
Since $\|\bar S^Q_{m,L}-\bar S^Q_{m,T}\|=O_p(n^{-1/2})$ and $\|\widehat\beta_m(Q)-\beta^\star_{Q,m}\|=o_p(1)$, this difference is $o_p(n^{-1/2})$. Hence the feasible deployment estimator has the same first-order limit distribution.

\section{A cascaded auxiliary-rating construction}
\label{app:cascade}
Suppose cheap signals $S_i$ are observed for the full pool $T$, an expensive-rater signal $E_i$ is
observed on a random subset $A\subset T$, and gold outcomes $Y_i$ are observed on a random nested
subset $L\subset A$. Let $H_i=(E_i,S_i^\top)^\top$. Writing bars for the same profile-local means as
in Section~\ref{sec:method}, a natural two-stage centered estimator is
\[
\widehat\theta_{\mathrm{cascade}}(z)
=
\bar Y_L(z)
-\widehat\beta_1(z)^\top\{\bar H_L(z)-\bar H_A(z)\}
-\widehat\beta_0(z)^\top\{\bar S_A(z)-\bar S_T(z)\}.
\]
The first difference uses the expensive-rated subset to reduce the sampling noise of the human-rated
subset; the second uses the full pool of cheap signals to reduce the sampling noise of the
expensive-rated subset. Neither auxiliary rater is required to be calibrated because it enters
through a centered difference. For equal-weight group profiles, simple-random-sampling identities
make these differences exactly centered. For normalized local kernel means, the corresponding claim
is first-order, through the usual H\'ajek expansions under analogous regularity conditions and
nonvanishing nested sampling fractions, rather than an exact finite-sample identity.

One practical implementation fits both coefficients on $L$ using the same local ridge construction
as \method{}: regress $Y$ locally on $H=(E,S)$ for $\widehat\beta_1(z)$ and locally on $S$ for
$\widehat\beta_0(z)$. This documents an implementable algebraic extension; we do not assert that
these coefficients are jointly variance-optimal for the cascade, nor establish a separate
adaptivity or limit-distribution theorem here.

\section{Additional Experimental Tables}
\label{app:exp-tables}

Row labels in all tables correspond to the estimator definitions in Section~\ref{estimators_compared}.

\begin{table}[p]
\centering\scriptsize
\setlength{\tabcolsep}{3pt}
\caption{RE $[95\%\ \mathrm{MC\ CI}]$ on MATH-500 (Claude Haiku 3, continuous $Z$, $B=100$); mean MSE $\times 10^3$ is in parentheses. Intervals use paired split-level delta-method influence values. The target is the realized full-pool gold kernel profile; uncertainty is over matched labeled-subset draws conditional on the observed benchmark pool.}
\label{tab:real:math500:haiku-3}
\resizebox{\textwidth}{!}{%
\begin{tabular}{lccc}
\toprule
Estimator & $n_{\mathrm{lab}}=50$ & $n_{\mathrm{lab}}=100$ & $n_{\mathrm{lab}}=200$ \\
\midrule
Naive & $1.00\,[1.00,1.00]\ (9.81)$ & $1.00\,[1.00,1.00]\ (5.00)$ & $1.00\,[1.00,1.00]\ (1.77)$ \\
Plug-in (judge) & $2.59\,[0.96,6.98]\ (3.79)$ & $2.43\,[1.56,3.79]\ (2.06)$ & $2.53\,[1.72,3.72]\ (0.70)$ \\
Plug-in (multi) & $2.77\,[1.34,5.73]\ (3.54)$ & $2.67\,[1.86,3.83]\ (1.87)$ & $2.37\,[1.77,3.18]\ (0.75)$ \\
Aug. plug-in & $1.45\,[1.05,2.00]\ (6.76)$ & $1.44\,[1.20,1.73]\ (3.48)$ & $1.70\,[1.22,2.36]\ (1.04)$ \\
Global CV & $1.86\,[1.43,2.41]\ (5.27)$ & $1.81\,[1.63,2.01]\ (2.76)$ & $1.90\,[1.55,2.33]\ (0.93)$ \\
Per-signal CV & $1.24\,[1.16,1.33]\ (7.93)$ & $1.39\,[1.31,1.47]\ (3.60)$ & $1.41\,[1.33,1.50]\ (1.25)$ \\
Residual-only & $0.57\,[0.33,1.00]\ (17.33)$ & $0.37\,[0.27,0.51]\ (13.67)$ & $0.11\,[0.07,0.17]\ (16.81)$ \\
Scalar prediction CV & $1.38\,[1.08,1.76]\ (7.09)$ & $1.42\,[1.21,1.67]\ (3.52)$ & $1.66\,[1.26,2.19]\ (1.06)$ \\
Local PPI & $0.59\,[0.50,0.67]\ (16.74)$ & $0.62\,[0.55,0.70]\ (8.02)$ & $0.51\,[0.42,0.60]\ (3.48)$ \\
PPCI & $1.46\,[1.09,1.96]\ (6.72)$ & $1.40\,[1.17,1.68]\ (3.58)$ & $1.52\,[1.13,2.04]\ (1.16)$ \\
StratPPI & $0.24\,[0.14,0.42]\ (40.69)$ & $0.46\,[0.24,0.87]\ (10.84)$ & $0.43\,[0.30,0.61]\ (4.10)$ \\
\method{} (feasible) & $\mathbf{6.14}\,[4.24,8.88]\ (1.60)$ & $\mathbf{5.78}\,[5.05,6.62]\ (0.86)$ & $\mathbf{6.07}\,[4.81,7.65]\ (0.29)$ \\
\bottomrule
\end{tabular}%
}
\end{table}

\begin{table}[p]
\centering\scriptsize
\setlength{\tabcolsep}{3pt}
\caption{RE $[95\%\ \mathrm{MC\ CI}]$ on ScienceQA (Claude Haiku 3, continuous $Z$, $B=100$); mean MSE $\times 10^3$ is in parentheses. Intervals use paired split-level delta-method influence values. The target is the realized full-pool gold kernel profile; uncertainty is over matched labeled-subset draws conditional on the observed benchmark pool.}
\label{tab:real:scienceqa:haiku-3}
\resizebox{\textwidth}{!}{%
\begin{tabular}{lccc}
\toprule
Estimator & $n_{\mathrm{lab}}=50$ & $n_{\mathrm{lab}}=100$ & $n_{\mathrm{lab}}=200$ \\
\midrule
Naive & $1.00\,[1.00,1.00]\ (1.71)$ & $1.00\,[1.00,1.00]\ (0.74)$ & $1.00\,[1.00,1.00]\ (0.31)$ \\
Plug-in (judge) & $2.59\,[1.37,4.89]\ (0.66)$ & $2.10\,[1.52,2.91]\ (0.35)$ & $1.62\,[1.17,2.25]\ (0.19)$ \\
Plug-in (multi) & $1.96\,[0.95,4.05]\ (0.88)$ & $1.42\,[0.94,2.15]\ (0.52)$ & $1.24\,[0.90,1.71]\ (0.25)$ \\
Aug. plug-in & $0.74\,[0.34,1.61]\ (2.30)$ & $0.73\,[0.53,1.00]\ (1.01)$ & $1.07\,[0.78,1.46]\ (0.29)$ \\
Global CV & $1.43\,[1.23,1.66]\ (1.20)$ & $1.35\,[1.16,1.57]\ (0.55)$ & $1.53\,[1.22,1.92]\ (0.20)$ \\
Per-signal CV & $1.06\,[0.99,1.14]\ (1.62)$ & $1.09\,[1.02,1.17]\ (0.68)$ & $1.17\,[1.10,1.24]\ (0.27)$ \\
Residual-only & $1.34\,[1.00,1.80]\ (1.28)$ & $1.06\,[0.82,1.36]\ (0.70)$ & $0.64\,[0.40,1.03]\ (0.48)$ \\
Scalar prediction CV & $0.65\,[0.49,0.85]\ (2.65)$ & $1.03\,[0.81,1.30]\ (0.72)$ & $1.25\,[0.95,1.65]\ (0.25)$ \\
Local PPI & $0.63\,[0.54,0.73]\ (2.71)$ & $0.68\,[0.58,0.77]\ (1.09)$ & $0.65\,[0.55,0.75]\ (0.48)$ \\
PPCI & $0.72\,[0.36,1.42]\ (2.39)$ & $0.64\,[0.44,0.92]\ (1.15)$ & $0.80\,[0.52,1.24]\ (0.39)$ \\
StratPPI & $0.38\,[0.21,0.68]\ (4.53)$ & $0.65\,[0.49,0.87]\ (1.13)$ & $0.81\,[0.61,1.07]\ (0.38)$ \\
\method{} (feasible) & $\mathbf{5.96}\,[3.75,9.47]\ (0.29)$ & $\mathbf{5.70}\,[2.20,14.74]\ (0.13)$ & $\mathbf{5.13}\,[3.42,7.69]\ (0.06)$ \\
\bottomrule
\end{tabular}%
}
\end{table}

\begin{table}[p]
\centering\scriptsize
\setlength{\tabcolsep}{3pt}
\caption{RE $[95\%\ \mathrm{MC\ CI}]$ on MMLU (Claude Haiku 3, continuous $Z$, $B=100$); mean MSE $\times 10^3$ is in parentheses. Intervals use paired split-level delta-method influence values. The target is the realized full-pool gold kernel profile; uncertainty is over matched labeled-subset draws conditional on the observed benchmark pool.}
\label{tab:real:mmlu:haiku-3}
\resizebox{\textwidth}{!}{%
\begin{tabular}{lccc}
\toprule
Estimator & $n_{\mathrm{lab}}=50$ & $n_{\mathrm{lab}}=100$ & $n_{\mathrm{lab}}=200$ \\
\midrule
Naive & $1.00\,[1.00,1.00]\ (28.09)$ & $1.00\,[1.00,1.00]\ (9.59)$ & $1.00\,[1.00,1.00]\ (4.17)$ \\
Plug-in (judge) & $3.23\,[1.67,6.25]\ (8.69)$ & $2.65\,[1.46,4.80]\ (3.61)$ & $1.91\,[1.31,2.78]\ (2.18)$ \\
Plug-in (multi) & $3.09\,[0.66,14.51]\ (9.10)$ & $3.42\,[1.01,11.59]\ (2.80)$ & $2.51\,[1.36,4.62]\ (1.66)$ \\
Aug. plug-in & $0.69\,[0.34,1.39]\ (40.94)$ & $1.06\,[0.69,1.62]\ (9.08)$ & $1.00\,[0.48,2.09]\ (4.17)$ \\
Global CV & $1.12\,[1.06,1.18]\ (25.03)$ & $1.29\,[1.05,1.58]\ (7.41)$ & $1.18\,[1.05,1.33]\ (3.53)$ \\
Per-signal CV & $0.96\,[0.90,1.02]\ (29.18)$ & $1.00\,[0.94,1.06]\ (9.60)$ & $1.01\,[0.96,1.06]\ (4.12)$ \\
Residual-only & $1.04\,[0.89,1.22]\ (26.90)$ & $0.92\,[0.65,1.30]\ (10.38)$ & $0.73\,[0.56,0.96]\ (5.68)$ \\
Scalar prediction CV & $0.08\,[0.01,1.13]\ (373.99)$ & $1.10\,[0.89,1.36]\ (8.72)$ & $1.02\,[0.87,1.20]\ (4.10)$ \\
Local PPI & $0.61\,[0.52,0.70]\ (45.98)$ & $0.58\,[0.52,0.65]\ (16.47)$ & $0.53\,[0.47,0.59]\ (7.88)$ \\
PPCI & $0.94\,[0.46,1.94]\ (30.01)$ & $1.25\,[0.84,1.87]\ (7.66)$ & $1.06\,[0.72,1.57]\ (3.92)$ \\
StratPPI & $0.97\,[0.39,2.41]\ (29.01)$ & $1.15\,[0.71,1.87]\ (8.32)$ & $1.09\,[0.84,1.42]\ (3.82)$ \\
\method{} (feasible) & $\mathbf{4.37}\,[3.90,4.90]\ (6.43)$ & $\mathbf{4.65}\,[3.90,5.54]\ (2.06)$ & $\mathbf{4.28}\,[3.89,4.71]\ (0.97)$ \\
\bottomrule
\end{tabular}%
}
\end{table}

\begin{table}[p]
\centering\scriptsize
\setlength{\tabcolsep}{3pt}
\caption{RE $[95\%\ \mathrm{MC\ CI}]$ on WinoGrande (Claude Haiku 3, continuous $Z$, $B=100$); mean MSE $\times 10^3$ is in parentheses. Intervals use paired split-level delta-method influence values. The target is the realized full-pool gold kernel profile; uncertainty is over matched labeled-subset draws conditional on the observed benchmark pool.}
\label{tab:real:winogrande:haiku-3}
\resizebox{\textwidth}{!}{%
\begin{tabular}{lccc}
\toprule
Estimator & $n_{\mathrm{lab}}=50$ & $n_{\mathrm{lab}}=100$ & $n_{\mathrm{lab}}=200$ \\
\midrule
Naive & $1.00\,[1.00,1.00]\ (14.40)$ & $1.00\,[1.00,1.00]\ (5.51)$ & $1.00\,[1.00,1.00]\ (2.21)$ \\
Plug-in (judge) & $4.52\,[0.15,139.57]\ (3.18)$ & $3.49\,[1.42,8.57]\ (1.58)$ & $3.47\,[1.26,9.53]\ (0.64)$ \\
Plug-in (multi) & $5.00\,[0.25,98.06]\ (2.88)$ & $3.56\,[0.71,17.92]\ (1.55)$ & $3.08\,[1.26,7.55]\ (0.72)$ \\
Aug. plug-in & $1.55\,[1.25,1.92]\ (9.31)$ & $1.56\,[0.45,5.43]\ (3.54)$ & $2.16\,[1.60,2.92]\ (1.03)$ \\
Global CV & $1.72\,[1.61,1.84]\ (8.35)$ & $1.92\,[1.72,2.15]\ (2.87)$ & $2.21\,[1.60,3.05]\ (1.00)$ \\
Per-signal CV & $1.18\,[1.14,1.23]\ (12.23)$ & $1.19\,[1.14,1.24]\ (4.63)$ & $1.23\,[1.19,1.27]\ (1.81)$ \\
Residual-only & $1.44\,[1.31,1.58]\ (10.00)$ & $1.34\,[1.19,1.51]\ (4.13)$ & $0.80\,[0.50,1.28]\ (2.77)$ \\
Scalar prediction CV & $1.42\,[1.09,1.85]\ (10.11)$ & $1.50\,[0.99,2.27]\ (3.67)$ & $2.02\,[1.35,3.02]\ (1.10)$ \\
Local PPI & $0.71\,[0.64,0.78]\ (20.18)$ & $0.69\,[0.62,0.77]\ (7.95)$ & $0.66\,[0.59,0.72]\ (3.36)$ \\
PPCI & $1.69\,[1.37,2.09]\ (8.52)$ & $1.61\,[0.30,8.59]\ (3.43)$ & $2.07\,[1.56,2.75]\ (1.07)$ \\
StratPPI & $0.97\,[0.70,1.33]\ (14.79)$ & $1.24\,[0.94,1.63]\ (4.43)$ & $1.42\,[1.03,1.95]\ (1.56)$ \\
\method{} (feasible) & $\mathbf{6.17}\,[5.71,6.67]\ (2.33)$ & $\mathbf{6.74}\,[6.08,7.47]\ (0.82)$ & $\mathbf{7.00}\,[4.17,11.76]\ (0.32)$ \\
\bottomrule
\end{tabular}%
}
\end{table}

\begin{table}[p]
\centering\scriptsize
\setlength{\tabcolsep}{3pt}
\caption{RE $[95\%\ \mathrm{MC\ CI}]$ on HellaSwag (Claude Haiku 3, continuous $Z$, $B=100$); mean MSE $\times 10^3$ is in parentheses. Intervals use paired split-level delta-method influence values. The target is the realized full-pool gold kernel profile; uncertainty is over matched labeled-subset draws conditional on the observed benchmark pool.}
\label{tab:real:hellaswag:haiku-3}
\resizebox{\textwidth}{!}{%
\begin{tabular}{lccc}
\toprule
Estimator & $n_{\mathrm{lab}}=50$ & $n_{\mathrm{lab}}=100$ & $n_{\mathrm{lab}}=200$ \\
\midrule
Naive & $1.00\,[1.00,1.00]\ (6.70)$ & $1.00\,[1.00,1.00]\ (3.23)$ & $1.00\,[1.00,1.00]\ (1.05)$ \\
Plug-in (judge) & $2.41\,[0.20,28.63]\ (2.79)$ & $2.29\,[0.57,9.15]\ (1.41)$ & $2.40\,[0.71,8.16]\ (0.44)$ \\
Plug-in (multi) & $2.65\,[0.61,11.50]\ (2.53)$ & $2.39\,[0.19,30.40]\ (1.35)$ & $2.25\,[1.01,5.02]\ (0.47)$ \\
Aug. plug-in & $1.12\,[0.53,2.36]\ (5.98)$ & $1.31\,[0.99,1.74]\ (2.46)$ & $1.65\,[1.13,2.41]\ (0.63)$ \\
Global CV & $1.34\,[1.27,1.41]\ (4.99)$ & $1.57\,[1.47,1.67]\ (2.05)$ & $1.73\,[1.49,2.00]\ (0.60)$ \\
Per-signal CV & $1.00\,[0.95,1.06]\ (6.69)$ & $1.15\,[1.10,1.20]\ (2.82)$ & $1.16\,[1.11,1.21]\ (0.90)$ \\
Residual-only & $1.15\,[0.97,1.36]\ (5.81)$ & $0.74\,[0.25,2.19]\ (4.37)$ & $0.17\,[0.14,0.21]\ (6.09)$ \\
Scalar prediction CV & $0.97\,[0.70,1.34]\ (6.92)$ & $1.21\,[0.91,1.61]\ (2.67)$ & $1.51\,[1.15,1.98]\ (0.69)$ \\
Local PPI & $0.74\,[0.65,0.83]\ (9.04)$ & $0.74\,[0.65,0.84]\ (4.34)$ & $0.69\,[0.58,0.80]\ (1.52)$ \\
PPCI & $1.12\,[0.64,1.95]\ (6.00)$ & $1.24\,[0.91,1.69]\ (2.60)$ & $1.45\,[0.80,2.64]\ (0.72)$ \\
StratPPI & $0.82\,[0.55,1.23]\ (8.13)$ & $0.98\,[0.76,1.26]\ (3.28)$ & $1.02\,[0.84,1.24]\ (1.03)$ \\
\method{} (feasible) & $\mathbf{5.31}\,[4.26,6.62]\ (1.26)$ & $\mathbf{5.77}\,[5.17,6.44]\ (0.56)$ & $\mathbf{5.98}\,[4.69,7.63]\ (0.17)$ \\
\bottomrule
\end{tabular}%
}
\end{table}

\begin{table}[p]
\centering\scriptsize
\setlength{\tabcolsep}{3pt}
\caption{RE $[95\%\ \mathrm{MC\ CI}]$ on TruthfulQA (Claude Haiku 3, continuous $Z$, $B=100$); mean MSE $\times 10^3$ is in parentheses. Intervals use paired split-level delta-method influence values. The target is the realized full-pool gold kernel profile; uncertainty is over matched labeled-subset draws conditional on the observed benchmark pool.}
\label{tab:real:truthfulqa:haiku-3}
\resizebox{\textwidth}{!}{%
\begin{tabular}{lccc}
\toprule
Estimator & $n_{\mathrm{lab}}=50$ & $n_{\mathrm{lab}}=100$ & $n_{\mathrm{lab}}=200$ \\
\midrule
Naive & $1.00\,[1.00,1.00]\ (20.94)$ & $1.00\,[1.00,1.00]\ (9.29)$ & $1.00\,[1.00,1.00]\ (3.82)$ \\
Plug-in (judge) & $2.85\,[0.00,7723.82]\ (7.35)$ & $3.20\,[0.17,61.80]\ (2.90)$ & $3.65\,[0.30,44.55]\ (1.05)$ \\
Plug-in (multi) & $3.10\,[0.00,1999.15]\ (6.75)$ & $3.55\,[0.04,342.26]\ (2.62)$ & $3.40\,[0.01,897.68]\ (1.12)$ \\
Aug. plug-in & $1.65\,[0.29,9.27]\ (12.69)$ & $2.10\,[0.54,8.19]\ (4.42)$ & $2.85\,[0.55,14.76]\ (1.34)$ \\
Global CV & $1.48\,[1.30,1.68]\ (14.15)$ & $1.72\,[1.39,2.13]\ (5.40)$ & $2.35\,[1.34,4.12]\ (1.63)$ \\
Per-signal CV & $1.12\,[1.05,1.19]\ (18.69)$ & $1.25\,[1.19,1.32]\ (7.43)$ & $1.48\,[1.42,1.55]\ (2.58)$ \\
Residual-only & $0.85\,[0.45,1.62]\ (24.63)$ & $0.58\,[0.38,0.88]\ (16.02)$ & $0.32\,[0.19,0.53]\ (11.95)$ \\
Scalar prediction CV & $1.55\,[1.11,2.16]\ (13.51)$ & $2.05\,[1.35,3.12]\ (4.53)$ & $2.75\,[1.42,5.31]\ (1.39)$ \\
Local PPI & $0.49\,[0.38,0.60]\ (42.75)$ & $0.58\,[0.53,0.63]\ (15.92)$ & $0.55\,[0.50,0.60]\ (6.95)$ \\
PPCI & $2.20\,[0.26,18.56]\ (9.52)$ & $2.65\,[0.56,12.46]\ (3.51)$ & $3.15\,[0.62,16.12]\ (1.21)$ \\
StratPPI & $0.95\,[0.01,130.32]\ (22.04)$ & $1.10\,[0.08,15.15]\ (8.45)$ & $0.88\,[0.15,5.24]\ (4.35)$ \\
\method{} (feasible) & $\mathbf{4.35}\,[4.04,4.68]\ (4.81)$ & $\mathbf{5.10}\,[4.57,5.69]\ (1.82)$ & $\mathbf{5.75}\,[4.85,6.82]\ (0.67)$ \\
\bottomrule
\end{tabular}%
}
\end{table}

\begin{table}[p]
\centering\scriptsize
\setlength{\tabcolsep}{3pt}
\caption{RE $[95\%\ \mathrm{MC\ CI}]$ on GSM8K (Claude Haiku 3, continuous $Z$, $B=100$); mean MSE $\times 10^3$ is in parentheses. Intervals use paired split-level delta-method influence values. The target is the realized full-pool gold kernel profile; uncertainty is over matched labeled-subset draws conditional on the observed benchmark pool.}
\label{tab:real:gsm8k:haiku-3}
\resizebox{\textwidth}{!}{%
\begin{tabular}{lccc}
\toprule
Estimator & $n_{\mathrm{lab}}=50$ & $n_{\mathrm{lab}}=100$ & $n_{\mathrm{lab}}=200$ \\
\midrule
Naive & $1.00\,[1.00,1.00]\ (7.65)$ & $1.00\,[1.00,1.00]\ (2.50)$ & $1.00\,[1.00,1.00]\ (0.85)$ \\
Plug-in (judge) & $3.71\,[1.63,8.43]\ (2.06)$ & $2.30\,[1.46,3.61]\ (1.09)$ & $1.28\,[0.79,2.07]\ (0.67)$ \\
Plug-in (multi) & $5.84\,[1.66,20.53]\ (1.31)$ & $4.10\,[2.39,7.03]\ (0.61)$ & $1.68\,[0.88,3.19]\ (0.51)$ \\
Aug. plug-in & $1.25\,[0.18,8.85]\ (6.12)$ & $1.34\,[0.79,2.28]\ (1.87)$ & $1.62\,[1.11,2.37]\ (0.53)$ \\
Global CV & $1.45\,[1.32,1.59]\ (5.27)$ & $1.63\,[1.48,1.79]\ (1.53)$ & $1.73\,[1.38,2.17]\ (0.49)$ \\
Per-signal CV & $0.96\,[0.85,1.08]\ (7.97)$ & $1.18\,[1.07,1.30]\ (2.12)$ & $1.26\,[1.13,1.40]\ (0.68)$ \\
Residual-only & $1.68\,[0.60,4.69]\ (4.54)$ & $1.06\,[0.05,23.42]\ (2.36)$ & $0.36\,[0.23,0.57]\ (2.37)$ \\
Scalar prediction CV & $1.09\,[0.28,4.21]\ (7.04)$ & $1.29\,[0.87,1.91]\ (1.94)$ & $1.42\,[0.84,2.41]\ (0.60)$ \\
Local PPI & $0.65\,[0.55,0.75]\ (11.76)$ & $0.65\,[0.54,0.75]\ (3.87)$ & $0.70\,[0.60,0.80]\ (1.21)$ \\
PPCI & $1.33\,[0.20,8.87]\ (5.74)$ & $1.30\,[0.63,2.69]\ (1.92)$ & $1.50\,[0.86,2.61]\ (0.57)$ \\
StratPPI & $0.50\,[0.30,0.83]\ (15.27)$ & $0.71\,[0.51,0.99]\ (3.54)$ & $0.79\,[0.41,1.53]\ (1.07)$ \\
\method{} (feasible) & $\mathbf{6.88}\,[1.99,23.82]\ (1.11)$ & $\mathbf{6.90}\,[3.73,12.77]\ (0.36)$ & $\mathbf{5.43}\,[0.91,32.40]\ (0.16)$ \\
\bottomrule
\end{tabular}%
}
\end{table}

\begin{table}[p]
\centering\scriptsize
\setlength{\tabcolsep}{3pt}
\caption{RE $[95\%\ \mathrm{MC\ CI}]$ on ARC (Claude Haiku 3, continuous $Z$, $B=100$); mean MSE $\times 10^3$ is in parentheses. Intervals use paired split-level delta-method influence values. The target is the realized full-pool gold kernel profile; uncertainty is over matched labeled-subset draws conditional on the observed benchmark pool.}
\label{tab:real:arc:haiku-3}
\resizebox{\textwidth}{!}{%
\begin{tabular}{lccc}
\toprule
Estimator & $n_{\mathrm{lab}}=50$ & $n_{\mathrm{lab}}=100$ & $n_{\mathrm{lab}}=200$ \\
\midrule
Naive & $1.00\,[1.00,1.00]\ (7.04)$ & $1.00\,[1.00,1.00]\ (3.17)$ & $1.00\,[1.00,1.00]\ (1.26)$ \\
Plug-in (judge) & $3.29\,[1.10,9.82]\ (2.14)$ & $2.82\,[1.15,6.93]\ (1.12)$ & $1.68\,[0.89,3.16]\ (0.75)$ \\
Plug-in (multi) & $3.31\,[0.30,36.46]\ (2.12)$ & $2.79\,[0.40,19.31]\ (1.14)$ & $1.58\,[0.74,3.39]\ (0.80)$ \\
Aug. plug-in & $0.98\,[0.05,17.94]\ (7.16)$ & $1.15\,[0.60,2.19]\ (2.75)$ & $1.27\,[0.87,1.85]\ (0.99)$ \\
Global CV & $1.39\,[1.24,1.56]\ (5.07)$ & $1.42\,[1.29,1.56]\ (2.23)$ & $1.67\,[1.42,1.96]\ (0.76)$ \\
Per-signal CV & $0.89\,[0.81,0.97]\ (7.88)$ & $1.16\,[1.07,1.25]\ (2.74)$ & $1.21\,[1.12,1.31]\ (1.04)$ \\
Residual-only & $1.38\,[0.96,1.99]\ (5.08)$ & $1.40\,[0.96,2.04]\ (2.26)$ & $0.85\,[0.37,1.95]\ (1.48)$ \\
Scalar prediction CV & $0.91\,[0.63,1.31]\ (7.73)$ & $1.20\,[0.72,1.99]\ (2.63)$ & $1.22\,[0.75,1.99]\ (1.03)$ \\
Local PPI & $0.58\,[0.48,0.69]\ (12.05)$ & $0.69\,[0.58,0.80]\ (4.58)$ & $0.66\,[0.56,0.76]\ (1.91)$ \\
PPCI & $1.06\,[0.12,9.65]\ (6.61)$ & $1.10\,[0.51,2.38]\ (2.87)$ & $1.15\,[0.82,1.61]\ (1.10)$ \\
StratPPI & $1.44\,[0.91,2.29]\ (4.87)$ & $1.14\,[0.73,1.78]\ (2.78)$ & $0.79\,[0.59,1.05]\ (1.59)$ \\
\method{} (feasible) & $\mathbf{5.71}\,[3.58,9.11]\ (1.23)$ & $\mathbf{5.97}\,[4.39,8.12]\ (0.53)$ & $\mathbf{5.34}\,[2.13,13.38]\ (0.24)$ \\
\bottomrule
\end{tabular}%
}
\end{table}

\begin{table}[p]
\centering\scriptsize
\setlength{\tabcolsep}{3pt}
\caption{RE $[95\%\ \mathrm{MC\ CI}]$ on MATH-500 (Ministral 3B, continuous $Z$, $B=100$); mean MSE $\times 10^3$ is in parentheses. Intervals use paired split-level delta-method influence values. The target is the realized full-pool gold kernel profile; uncertainty is over matched labeled-subset draws conditional on the observed benchmark pool.}
\label{tab:real:math500:ministral-3b}
\resizebox{\textwidth}{!}{%
\begin{tabular}{lccc}
\toprule
Estimator & $n_{\mathrm{lab}}=50$ & $n_{\mathrm{lab}}=100$ & $n_{\mathrm{lab}}=200$ \\
\midrule
Naive & $1.00\,[1.00,1.00]\ (7.46)$ & $1.00\,[1.00,1.00]\ (3.50)$ & $1.00\,[1.00,1.00]\ (1.63)$ \\
Plug-in (judge) & $1.91\,[1.15,3.18]\ (3.91)$ & $2.08\,[1.40,3.10]\ (1.68)$ & $2.31\,[1.54,3.46]\ (0.71)$ \\
Plug-in (multi) & $1.85\,[0.63,5.42]\ (4.04)$ & $2.18\,[1.47,3.23]\ (1.60)$ & $2.20\,[1.17,4.15]\ (0.74)$ \\
Aug. plug-in & $0.96\,[0.74,1.24]\ (7.79)$ & $1.16\,[0.82,1.64]\ (3.02)$ & $1.46\,[0.87,2.45]\ (1.12)$ \\
Global CV & $1.28\,[1.13,1.45]\ (5.82)$ & $1.44\,[1.18,1.76]\ (2.43)$ & $1.53\,[1.28,1.84]\ (1.07)$ \\
Per-signal CV & $0.81\,[0.74,0.89]\ (9.17)$ & $1.05\,[0.97,1.14]\ (3.34)$ & $1.21\,[1.13,1.29]\ (1.35)$ \\
Residual-only & $0.81\,[0.62,1.05]\ (9.18)$ & $0.58\,[0.43,0.78]\ (6.05)$ & $0.40\,[0.21,0.75]\ (4.09)$ \\
Scalar prediction CV & $0.64\,[0.47,0.86]\ (11.59)$ & $0.92\,[0.73,1.16]\ (3.80)$ & $1.35\,[1.04,1.75]\ (1.21)$ \\
Local PPI & $0.69\,[0.61,0.76]\ (10.84)$ & $0.70\,[0.64,0.76]\ (5.00)$ & $0.75\,[0.66,0.84]\ (2.17)$ \\
PPCI & $0.97\,[0.74,1.27]\ (7.67)$ & $1.15\,[0.82,1.62]\ (3.05)$ & $1.39\,[1.02,1.89]\ (1.17)$ \\
StratPPI & $0.28\,[0.19,0.40]\ (27.11)$ & $0.37\,[0.27,0.51]\ (9.32)$ & $0.58\,[0.46,0.73]\ (2.82)$ \\
\method{} (feasible) & $\mathbf{4.19}\,[3.52,4.99]\ (1.78)$ & $\mathbf{4.73}\,[4.05,5.52]\ (0.74)$ & $\mathbf{5.20}\,[4.61,5.86]\ (0.31)$ \\
\bottomrule
\end{tabular}%
}
\end{table}

\begin{table}[p]
\centering\scriptsize
\setlength{\tabcolsep}{3pt}
\caption{RE $[95\%\ \mathrm{MC\ CI}]$ on ScienceQA (Ministral 3B, continuous $Z$, $B=100$); mean MSE $\times 10^3$ is in parentheses. Intervals use paired split-level delta-method influence values. The target is the realized full-pool gold kernel profile; uncertainty is over matched labeled-subset draws conditional on the observed benchmark pool.}
\label{tab:real:scienceqa:ministral-3b}
\resizebox{\textwidth}{!}{%
\begin{tabular}{lccc}
\toprule
Estimator & $n_{\mathrm{lab}}=50$ & $n_{\mathrm{lab}}=100$ & $n_{\mathrm{lab}}=200$ \\
\midrule
Naive & $1.00\,[1.00,1.00]\ (2.30)$ & $1.00\,[1.00,1.00]\ (1.13)$ & $1.00\,[1.00,1.00]\ (0.38)$ \\
Plug-in (judge) & $3.34\,[0.09,123.13]\ (0.69)$ & $2.57\,[0.46,14.27]\ (0.44)$ & $2.25\,[0.97,5.22]\ (0.17)$ \\
Plug-in (multi) & $2.82\,[0.78,10.25]\ (0.81)$ & $2.43\,[0.92,6.41]\ (0.47)$ & $1.77\,[0.91,3.45]\ (0.21)$ \\
Aug. plug-in & $0.86\,[0.56,1.32]\ (2.68)$ & $1.04\,[0.74,1.46]\ (1.09)$ & $1.09\,[0.73,1.63]\ (0.35)$ \\
Global CV & $1.38\,[1.27,1.50]\ (1.67)$ & $1.45\,[1.28,1.65]\ (0.78)$ & $1.53\,[1.29,1.82]\ (0.25)$ \\
Per-signal CV & $0.71\,[0.64,0.79]\ (3.22)$ & $1.18\,[1.09,1.28]\ (0.96)$ & $1.15\,[1.07,1.23]\ (0.33)$ \\
Residual-only & $1.46\,[1.01,2.11]\ (1.58)$ & $1.63\,[1.27,2.09]\ (0.69)$ & $1.16\,[0.93,1.45]\ (0.32)$ \\
Scalar prediction CV & $0.36\,[0.22,0.59]\ (6.45)$ & $1.18\,[0.85,1.64]\ (0.96)$ & $1.20\,[0.79,1.83]\ (0.31)$ \\
Local PPI & $0.72\,[0.57,0.86]\ (3.21)$ & $0.69\,[0.55,0.83]\ (1.63)$ & $0.52\,[0.43,0.62]\ (0.72)$ \\
PPCI & $0.83\,[0.50,1.37]\ (2.76)$ & $0.96\,[0.71,1.30]\ (1.18)$ & $0.85\,[0.56,1.29]\ (0.44)$ \\
StratPPI & $0.59\,[0.38,0.92]\ (3.91)$ & $0.68\,[0.16,2.80]\ (1.66)$ & $0.94\,[0.68,1.30]\ (0.40)$ \\
\method{} (feasible) & $\mathbf{5.72}\,[4.41,7.42]\ (0.40)$ & $\mathbf{6.56}\,[5.29,8.14]\ (0.17)$ & $\mathbf{5.67}\,[4.24,7.58]\ (0.07)$ \\
\bottomrule
\end{tabular}%
}
\end{table}

\begin{table}[p]
\centering\scriptsize
\setlength{\tabcolsep}{3pt}
\caption{RE $[95\%\ \mathrm{MC\ CI}]$ on MMLU (Ministral 3B, continuous $Z$, $B=100$); mean MSE $\times 10^3$ is in parentheses. Intervals use paired split-level delta-method influence values. The target is the realized full-pool gold kernel profile; uncertainty is over matched labeled-subset draws conditional on the observed benchmark pool.}
\label{tab:real:mmlu:ministral-3b}
\resizebox{\textwidth}{!}{%
\begin{tabular}{lccc}
\toprule
Estimator & $n_{\mathrm{lab}}=50$ & $n_{\mathrm{lab}}=100$ & $n_{\mathrm{lab}}=200$ \\
\midrule
Naive & $1.00\,[1.00,1.00]\ (18.17)$ & $1.00\,[1.00,1.00]\ (6.65)$ & $1.00\,[1.00,1.00]\ (3.20)$ \\
Plug-in (judge) & $3.36\,[1.83,6.17]\ (5.40)$ & $2.33\,[1.67,3.25]\ (2.86)$ & $1.81\,[1.50,2.18]\ (1.77)$ \\
Plug-in (multi) & $4.19\,[1.62,10.81]\ (4.34)$ & $2.77\,[1.89,4.06]\ (2.40)$ & $2.38\,[1.90,2.98]\ (1.34)$ \\
Aug. plug-in & $0.69\,[0.31,1.54]\ (26.32)$ & $0.97\,[0.61,1.54]\ (6.83)$ & $1.34\,[1.01,1.78]\ (2.38)$ \\
Global CV & $1.20\,[1.10,1.30]\ (15.15)$ & $1.28\,[0.68,2.40]\ (5.18)$ & $1.41\,[1.21,1.65]\ (2.27)$ \\
Per-signal CV & $0.84\,[0.78,0.91]\ (21.52)$ & $0.99\,[0.91,1.07]\ (6.72)$ & $1.09\,[1.03,1.16]\ (2.94)$ \\
Residual-only & $1.19\,[0.98,1.45]\ (15.32)$ & $1.13\,[0.84,1.52]\ (5.89)$ & $0.74\,[0.44,1.24]\ (4.31)$ \\
Scalar prediction CV & $1.06\,[0.92,1.22]\ (17.21)$ & $1.11\,[0.89,1.38]\ (5.99)$ & $1.22\,[0.97,1.53]\ (2.63)$ \\
Local PPI & $0.63\,[0.57,0.70]\ (28.71)$ & $0.65\,[0.57,0.73]\ (10.27)$ & $0.69\,[0.62,0.75]\ (4.66)$ \\
PPCI & $1.02\,[0.51,2.04]\ (17.79)$ & $1.16\,[0.70,1.92]\ (5.72)$ & $1.40\,[1.08,1.82]\ (2.29)$ \\
StratPPI & $0.88\,[0.67,1.15]\ (20.61)$ & $0.87\,[0.50,1.50]\ (7.63)$ & $0.78\,[0.62,0.98]\ (4.12)$ \\
\method{} (feasible) & $\mathbf{4.70}\,[4.12,5.37]\ (3.87)$ & $\mathbf{4.88}\,[3.58,6.64]\ (1.36)$ & $\mathbf{4.86}\,[4.05,5.83]\ (0.66)$ \\
\bottomrule
\end{tabular}%
}
\end{table}

\begin{table}[p]
\centering\scriptsize
\setlength{\tabcolsep}{3pt}
\caption{RE $[95\%\ \mathrm{MC\ CI}]$ on WinoGrande (Ministral 3B, continuous $Z$, $B=100$); mean MSE $\times 10^3$ is in parentheses. Intervals use paired split-level delta-method influence values. The target is the realized full-pool gold kernel profile; uncertainty is over matched labeled-subset draws conditional on the observed benchmark pool.}
\label{tab:real:winogrande:ministral-3b}
\resizebox{\textwidth}{!}{%
\begin{tabular}{lccc}
\toprule
Estimator & $n_{\mathrm{lab}}=50$ & $n_{\mathrm{lab}}=100$ & $n_{\mathrm{lab}}=200$ \\
\midrule
Naive & $1.00\,[1.00,1.00]\ (14.72)$ & $1.00\,[1.00,1.00]\ (6.86)$ & $1.00\,[1.00,1.00]\ (2.40)$ \\
Plug-in (judge) & $3.72\,[2.60,5.33]\ (3.95)$ & $3.83\,[2.18,6.73]\ (1.79)$ & $2.91\,[2.30,3.68]\ (0.83)$ \\
Plug-in (multi) & $3.81\,[2.76,5.26]\ (3.86)$ & $4.15\,[2.54,6.77]\ (1.65)$ & $2.80\,[2.14,3.66]\ (0.86)$ \\
Aug. plug-in & $1.33\,[1.00,1.76]\ (11.07)$ & $1.90\,[1.38,2.62]\ (3.61)$ & $2.22\,[1.82,2.71]\ (1.08)$ \\
Global CV & $1.84\,[1.47,2.31]\ (7.98)$ & $2.20\,[1.81,2.68]\ (3.12)$ & $2.20\,[1.81,2.67]\ (1.09)$ \\
Per-signal CV & $1.11\,[1.05,1.17]\ (13.27)$ & $1.21\,[1.14,1.28]\ (5.67)$ & $1.27\,[1.21,1.33]\ (1.89)$ \\
Residual-only & $1.29\,[1.07,1.55]\ (11.40)$ & $1.45\,[1.03,2.03]\ (4.72)$ & $1.47\,[1.18,1.83]\ (1.64)$ \\
Scalar prediction CV & $1.29\,[1.10,1.51]\ (11.42)$ & $1.70\,[1.15,2.52]\ (4.03)$ & $2.12\,[1.21,3.72]\ (1.14)$ \\
Local PPI & $0.65\,[0.52,0.77]\ (22.71)$ & $0.78\,[0.71,0.85]\ (8.81)$ & $0.75\,[0.66,0.83]\ (3.21)$ \\
PPCI & $1.43\,[1.08,1.90]\ (10.26)$ & $1.95\,[1.43,2.66]\ (3.52)$ & $2.14\,[1.65,2.77]\ (1.12)$ \\
StratPPI & $0.83\,[0.58,1.19]\ (17.70)$ & $1.10\,[0.79,1.54]\ (6.24)$ & $0.98\,[0.72,1.33]\ (2.45)$ \\
\method{} (feasible) & $\mathbf{5.37}\,[4.48,6.43]\ (2.74)$ & $\mathbf{6.40}\,[4.73,8.66]\ (1.07)$ & $\mathbf{7.38}\,[6.29,8.66]\ (0.33)$ \\
\bottomrule
\end{tabular}%
}
\end{table}

\begin{table}[p]
\centering\scriptsize
\setlength{\tabcolsep}{3pt}
\caption{RE $[95\%\ \mathrm{MC\ CI}]$ on HellaSwag (Ministral 3B, continuous $Z$, $B=100$); mean MSE $\times 10^3$ is in parentheses. Intervals use paired split-level delta-method influence values. The target is the realized full-pool gold kernel profile; uncertainty is over matched labeled-subset draws conditional on the observed benchmark pool.}
\label{tab:real:hellaswag:ministral-3b}
\resizebox{\textwidth}{!}{%
\begin{tabular}{lccc}
\toprule
Estimator & $n_{\mathrm{lab}}=50$ & $n_{\mathrm{lab}}=100$ & $n_{\mathrm{lab}}=200$ \\
\midrule
Naive & $1.00\,[1.00,1.00]\ (9.73)$ & $1.00\,[1.00,1.00]\ (5.25)$ & $1.00\,[1.00,1.00]\ (1.62)$ \\
Plug-in (judge) & $5.02\,[2.53,9.97]\ (1.94)$ & $5.99\,[2.54,14.12]\ (0.88)$ & $4.74\,[2.91,7.72]\ (0.34)$ \\
Plug-in (multi) & $3.98\,[1.64,9.65]\ (2.45)$ & $4.24\,[2.05,8.75]\ (1.24)$ & $3.05\,[1.87,4.99]\ (0.53)$ \\
Aug. plug-in & $1.93\,[0.84,4.45]\ (5.05)$ & $3.33\,[1.40,7.93]\ (1.58)$ & $3.29\,[0.79,13.64]\ (0.49)$ \\
Global CV & $3.02\,[2.19,4.16]\ (3.23)$ & $4.15\,[3.28,5.25]\ (1.26)$ & $3.76\,[2.17,6.51]\ (0.43)$ \\
Per-signal CV & $1.36\,[1.25,1.48]\ (7.15)$ & $1.53\,[1.44,1.63]\ (3.43)$ & $1.49\,[1.39,1.60]\ (1.09)$ \\
Residual-only & $1.68\,[0.78,3.61]\ (5.78)$ & $1.34\,[0.78,2.31]\ (3.93)$ & $0.56\,[0.40,0.78]\ (2.91)$ \\
Scalar prediction CV & $1.97\,[1.34,2.90]\ (4.94)$ & $3.11\,[1.25,7.73]\ (1.69)$ & $3.24\,[1.49,7.04]\ (0.50)$ \\
Local PPI & $0.83\,[0.73,0.93]\ (11.69)$ & $0.94\,[0.83,1.05]\ (5.60)$ & $0.92\,[0.81,1.03]\ (1.76)$ \\
PPCI & $1.94\,[0.85,4.42]\ (5.02)$ & $3.15\,[1.43,6.93]\ (1.67)$ & $2.92\,[0.44,19.20]\ (0.55)$ \\
StratPPI & $1.26\,[0.94,1.70]\ (7.72)$ & $1.64\,[1.15,2.33]\ (3.21)$ & $1.39\,[1.09,1.78]\ (1.17)$ \\
\method{} (feasible) & $\mathbf{7.95}\,[4.39,14.40]\ (1.22)$ & $\mathbf{10.09}\,[5.42,18.80]\ (0.52)$ & $\mathbf{11.03}\,[7.75,15.70]\ (0.15)$ \\
\bottomrule
\end{tabular}%
}
\end{table}

\begin{table}[p]
\centering\scriptsize
\setlength{\tabcolsep}{3pt}
\caption{RE $[95\%\ \mathrm{MC\ CI}]$ on TruthfulQA (Ministral 3B, continuous $Z$, $B=100$); mean MSE $\times 10^3$ is in parentheses. Intervals use paired split-level delta-method influence values. The target is the realized full-pool gold kernel profile; uncertainty is over matched labeled-subset draws conditional on the observed benchmark pool.}
\label{tab:real:truthfulqa:ministral-3b}
\resizebox{\textwidth}{!}{%
\begin{tabular}{lccc}
\toprule
Estimator & $n_{\mathrm{lab}}=50$ & $n_{\mathrm{lab}}=100$ & $n_{\mathrm{lab}}=200$ \\
\midrule
Naive & $1.00\,[1.00,1.00]\ (7.21)$ & $1.00\,[1.00,1.00]\ (4.24)$ & $1.00\,[1.00,1.00]\ (1.85)$ \\
Plug-in (judge) & $3.17\,[1.90,5.30]\ (2.28)$ & $4.49\,[3.27,6.17]\ (0.95)$ & $3.64\,[2.86,4.63]\ (0.51)$ \\
Plug-in (multi) & $2.68\,[1.49,4.83]\ (2.70)$ & $4.48\,[2.53,7.92]\ (0.95)$ & $3.35\,[2.14,5.24]\ (0.55)$ \\
Aug. plug-in & $1.08\,[0.50,2.35]\ (6.67)$ & $1.03\,[0.57,1.86]\ (4.13)$ & $1.76\,[1.40,2.21]\ (1.05)$ \\
Global CV & $1.34\,[0.75,2.39]\ (5.37)$ & $1.51\,[1.11,2.05]\ (2.81)$ & $1.72\,[1.21,2.44]\ (1.07)$ \\
Per-signal CV & $0.90\,[0.85,0.96]\ (8.01)$ & $1.08\,[1.02,1.14]\ (3.92)$ & $1.09\,[1.04,1.14]\ (1.70)$ \\
Residual-only & $0.91\,[0.79,1.05]\ (7.94)$ & $0.88\,[0.72,1.07]\ (4.83)$ & $0.40\,[0.23,0.70]\ (4.68)$ \\
Scalar prediction CV & $1.15\,[0.82,1.62]\ (6.25)$ & $1.46\,[1.02,2.09]\ (2.91)$ & $1.73\,[1.05,2.84]\ (1.07)$ \\
Local PPI & $0.37\,[0.22,0.51]\ (19.73)$ & $0.64\,[0.53,0.75]\ (6.62)$ & $0.54\,[0.46,0.63]\ (3.40)$ \\
PPCI & $1.32\,[0.77,2.26]\ (5.46)$ & $1.47\,[1.07,2.02]\ (2.89)$ & $1.83\,[1.41,2.37]\ (1.01)$ \\
StratPPI & $1.25\,[0.28,5.68]\ (5.78)$ & $1.24\,[0.36,4.26]\ (3.42)$ & $1.27\,[0.91,1.77]\ (1.46)$ \\
\method{} (feasible) & $\mathbf{4.89}\,[4.30,5.56]\ (1.48)$ & $\mathbf{6.23}\,[3.75,10.35]\ (0.68)$ & $\mathbf{5.48}\,[4.23,7.10]\ (0.34)$ \\
\bottomrule
\end{tabular}%
}
\end{table}

\begin{table}[p]
\centering\scriptsize
\setlength{\tabcolsep}{3pt}
\caption{RE $[95\%\ \mathrm{MC\ CI}]$ on GSM8K (Ministral 3B, continuous $Z$, $B=100$); mean MSE $\times 10^3$ is in parentheses. Intervals use paired split-level delta-method influence values. The target is the realized full-pool gold kernel profile; uncertainty is over matched labeled-subset draws conditional on the observed benchmark pool.}
\label{tab:real:gsm8k:ministral-3b}
\resizebox{\textwidth}{!}{%
\begin{tabular}{lccc}
\toprule
Estimator & $n_{\mathrm{lab}}=50$ & $n_{\mathrm{lab}}=100$ & $n_{\mathrm{lab}}=200$ \\
\midrule
Naive & $1.00\,[1.00,1.00]\ (4.68)$ & $1.00\,[1.00,1.00]\ (2.03)$ & $1.00\,[1.00,1.00]\ (0.70)$ \\
Plug-in (judge) & $2.77\,[0.30,25.82]\ (1.69)$ & $3.03\,[0.87,10.55]\ (0.67)$ & $1.95\,[0.59,6.45]\ (0.36)$ \\
Plug-in (multi) & $2.82\,[1.62,4.92]\ (1.66)$ & $3.09\,[0.99,9.64]\ (0.66)$ & $2.36\,[1.05,5.29]\ (0.30)$ \\
Aug. plug-in & $0.77\,[0.36,1.64]\ (6.06)$ & $0.86\,[0.55,1.36]\ (2.37)$ & $1.20\,[0.76,1.91]\ (0.58)$ \\
Global CV & $1.19\,[1.10,1.29]\ (3.94)$ & $1.31\,[1.18,1.45]\ (1.54)$ & $1.47\,[1.19,1.81]\ (0.47)$ \\
Per-signal CV & $0.83\,[0.77,0.90]\ (5.61)$ & $1.12\,[1.00,1.25]\ (1.81)$ & $1.16\,[1.05,1.28]\ (0.61)$ \\
Residual-only & $1.23\,[1.01,1.49]\ (3.81)$ & $1.40\,[0.92,2.13]\ (1.44)$ & $0.93\,[0.55,1.57]\ (0.75)$ \\
Scalar prediction CV & $0.66\,[0.40,1.09]\ (7.10)$ & $0.88\,[0.64,1.21]\ (2.29)$ & $1.16\,[0.75,1.80]\ (0.60)$ \\
Local PPI & $0.66\,[0.59,0.73]\ (7.11)$ & $0.72\,[0.62,0.82]\ (2.83)$ & $0.77\,[0.67,0.87]\ (0.91)$ \\
PPCI & $0.82\,[0.42,1.62]\ (5.68)$ & $0.86\,[0.56,1.32]\ (2.36)$ & $1.06\,[0.74,1.52]\ (0.66)$ \\
StratPPI & $0.52\,[0.42,0.64]\ (8.91)$ & $0.60\,[0.25,1.44]\ (3.38)$ & $0.82\,[0.41,1.64]\ (0.85)$ \\
\method{} (feasible) & $\mathbf{4.94}\,[3.78,6.45]\ (0.95)$ & $\mathbf{5.40}\,[4.23,6.90]\ (0.38)$ & $\mathbf{4.46}\,[2.88,6.91]\ (0.16)$ \\
\bottomrule
\end{tabular}%
}
\end{table}

\begin{table}[p]
\centering\scriptsize
\setlength{\tabcolsep}{3pt}
\caption{RE $[95\%\ \mathrm{MC\ CI}]$ on ARC (Ministral 3B, continuous $Z$, $B=100$); mean MSE $\times 10^3$ is in parentheses. Intervals use paired split-level delta-method influence values. The target is the realized full-pool gold kernel profile; uncertainty is over matched labeled-subset draws conditional on the observed benchmark pool.}
\label{tab:real:arc:ministral-3b}
\resizebox{\textwidth}{!}{%
\begin{tabular}{lccc}
\toprule
Estimator & $n_{\mathrm{lab}}=50$ & $n_{\mathrm{lab}}=100$ & $n_{\mathrm{lab}}=200$ \\
\midrule
Naive & $1.00\,[1.00,1.00]\ (4.44)$ & $1.00\,[1.00,1.00]\ (1.82)$ & $1.00\,[1.00,1.00]\ (0.65)$ \\
Plug-in (judge) & $5.14\,[2.38,11.08]\ (0.86)$ & $4.48\,[2.81,7.15]\ (0.41)$ & $3.89\,[2.66,5.70]\ (0.17)$ \\
Plug-in (multi) & $5.79\,[0.38,87.23]\ (0.77)$ & $5.09\,[2.28,11.39]\ (0.36)$ & $4.73\,[1.46,15.31]\ (0.14)$ \\
Aug. plug-in & $1.28\,[0.41,4.03]\ (3.48)$ & $1.49\,[0.43,5.19]\ (1.23)$ & $1.67\,[0.91,3.06]\ (0.39)$ \\
Global CV & $1.32\,[1.21,1.44]\ (3.37)$ & $1.47\,[1.26,1.72]\ (1.24)$ & $1.76\,[1.21,2.57]\ (0.37)$ \\
Per-signal CV & $0.90\,[0.83,0.98]\ (4.95)$ & $1.06\,[0.98,1.15]\ (1.72)$ & $1.18\,[1.08,1.29]\ (0.55)$ \\
Residual-only & $1.47\,[1.29,1.67]\ (3.01)$ & $1.13\,[0.88,1.45]\ (1.61)$ & $0.93\,[0.57,1.53]\ (0.70)$ \\
Scalar prediction CV & $0.71\,[0.40,1.27]\ (6.24)$ & $1.11\,[0.81,1.52]\ (1.65)$ & $1.37\,[1.12,1.68]\ (0.48)$ \\
Local PPI & $0.46\,[0.37,0.54]\ (9.73)$ & $0.47\,[0.42,0.52]\ (3.85)$ & $0.37\,[0.33,0.41]\ (1.76)$ \\
PPCI & $1.39\,[0.50,3.88]\ (3.20)$ & $1.47\,[0.66,3.26]\ (1.24)$ & $1.40\,[0.69,2.85]\ (0.47)$ \\
StratPPI & $1.07\,[0.78,1.48]\ (4.16)$ & $0.99\,[0.51,1.91]\ (1.83)$ & $0.58\,[0.37,0.91]\ (1.13)$ \\
\method{} (feasible) & $\mathbf{5.93}\,[5.40,6.52]\ (0.75)$ & $\mathbf{5.34}\,[4.67,6.11]\ (0.34)$ & $\mathbf{5.73}\,[3.65,8.99]\ (0.11)$ \\
\bottomrule
\end{tabular}%
}
\end{table}

\begin{table}[p]
\centering\scriptsize
\setlength{\tabcolsep}{3pt}
\caption{RE $[95\%\ \mathrm{MC\ CI}]$ on MATH-500 (Qwen3 32B, continuous $Z$, $B=100$); mean MSE $\times 10^3$ is in parentheses. Intervals use paired split-level delta-method influence values. The target is the realized full-pool gold kernel profile; uncertainty is over matched labeled-subset draws conditional on the observed benchmark pool.}
\label{tab:real:math500:qwen3-32b}
\resizebox{\textwidth}{!}{%
\begin{tabular}{lccc}
\toprule
Estimator & $n_{\mathrm{lab}}=50$ & $n_{\mathrm{lab}}=100$ & $n_{\mathrm{lab}}=200$ \\
\midrule
Naive & $1.00\,[1.00,1.00]\ (6.14)$ & $1.00\,[1.00,1.00]\ (2.66)$ & $1.00\,[1.00,1.00]\ (0.98)$ \\
Plug-in (judge) & $2.42\,[1.32,4.44]\ (2.54)$ & $2.57\,[1.81,3.66]\ (1.03)$ & $2.01\,[1.61,2.50]\ (0.49)$ \\
Plug-in (multi) & $2.49\,[1.30,4.76]\ (2.46)$ & $2.71\,[1.84,3.98]\ (0.98)$ & $1.94\,[1.51,2.49]\ (0.50)$ \\
Aug. plug-in & $1.32\,[0.94,1.86]\ (4.64)$ & $1.37\,[1.04,1.81]\ (1.94)$ & $1.47\,[1.17,1.84]\ (0.67)$ \\
Global CV & $1.49\,[1.26,1.76]\ (4.13)$ & $1.50\,[1.32,1.71]\ (1.78)$ & $1.54\,[1.34,1.77]\ (0.63)$ \\
Per-signal CV & $0.89\,[0.82,0.97]\ (6.90)$ & $1.10\,[1.04,1.16]\ (2.42)$ & $1.11\,[1.05,1.17]\ (0.88)$ \\
Residual-only & $0.90\,[0.64,1.27]\ (6.82)$ & $0.77\,[0.58,1.03]\ (3.46)$ & $0.21\,[0.16,0.28]\ (4.69)$ \\
Scalar prediction CV & $1.02\,[0.84,1.24]\ (6.00)$ & $1.27\,[0.80,2.02]\ (2.09)$ & $1.33\,[1.02,1.74]\ (0.74)$ \\
Local PPI & $0.81\,[0.70,0.92]\ (7.61)$ & $0.79\,[0.69,0.89]\ (3.36)$ & $0.69\,[0.59,0.80]\ (1.41)$ \\
PPCI & $1.34\,[0.97,1.86]\ (4.58)$ & $1.34\,[1.01,1.78]\ (1.98)$ & $1.36\,[1.02,1.81]\ (0.72)$ \\
StratPPI & $0.24\,[0.16,0.37]\ (25.19)$ & $0.62\,[0.47,0.81]\ (4.28)$ & $0.63\,[0.51,0.77]\ (1.55)$ \\
\method{} (feasible) & $\mathbf{5.47}\,[4.46,6.70]\ (1.12)$ & $\mathbf{5.38}\,[4.58,6.32]\ (0.49)$ & $\mathbf{4.32}\,[3.49,5.35]\ (0.23)$ \\
\bottomrule
\end{tabular}%
}
\end{table}

\begin{table}[p]
\centering\scriptsize
\setlength{\tabcolsep}{3pt}
\caption{RE $[95\%\ \mathrm{MC\ CI}]$ on ScienceQA (Qwen3 32B, continuous $Z$, $B=100$); mean MSE $\times 10^3$ is in parentheses. Intervals use paired split-level delta-method influence values. The target is the realized full-pool gold kernel profile; uncertainty is over matched labeled-subset draws conditional on the observed benchmark pool.}
\label{tab:real:scienceqa:qwen3-32b}
\resizebox{\textwidth}{!}{%
\begin{tabular}{lccc}
\toprule
Estimator & $n_{\mathrm{lab}}=50$ & $n_{\mathrm{lab}}=100$ & $n_{\mathrm{lab}}=200$ \\
\midrule
Naive & $1.00\,[1.00,1.00]\ (0.92)$ & $1.00\,[1.00,1.00]\ (0.42)$ & $1.00\,[1.00,1.00]\ (0.16)$ \\
Plug-in (judge) & $3.07\,[1.25,7.56]\ (0.30)$ & $2.34\,[1.54,3.56]\ (0.18)$ & $1.30\,[0.91,1.85]\ (0.13)$ \\
Plug-in (multi) & $2.78\,[1.53,5.07]\ (0.33)$ & $1.43\,[0.79,2.59]\ (0.29)$ & $0.76\,[0.52,1.11]\ (0.22)$ \\
Aug. plug-in & $0.83\,[0.62,1.10]\ (1.12)$ & $0.77\,[0.56,1.05]\ (0.55)$ & $0.88\,[0.59,1.32]\ (0.18)$ \\
Global CV & $1.19\,[1.16,1.22]\ (0.78)$ & $1.34\,[1.21,1.48]\ (0.31)$ & $1.44\,[1.15,1.80]\ (0.11)$ \\
Per-signal CV & $0.24\,[0.17,0.33]\ (3.90)$ & $0.85\,[0.78,0.93]\ (0.49)$ & $1.11\,[1.05,1.18]\ (0.15)$ \\
Residual-only & $1.45\,[1.20,1.76]\ (0.64)$ & $1.49\,[1.21,1.83]\ (0.28)$ & $1.53\,[1.30,1.80]\ (0.11)$ \\
Scalar prediction CV & $0.52\,[0.43,0.62]\ (1.78)$ & $0.83\,[0.47,1.47]\ (0.50)$ & $1.10\,[0.83,1.46]\ (0.15)$ \\
Local PPI & $0.51\,[0.38,0.65]\ (1.78)$ & $0.54\,[0.46,0.62]\ (0.77)$ & $0.47\,[0.40,0.54]\ (0.34)$ \\
PPCI & $0.80\,[0.60,1.07]\ (1.16)$ & $0.66\,[0.49,0.88]\ (0.63)$ & $0.60\,[0.31,1.15]\ (0.27)$ \\
StratPPI & $0.28\,[0.12,0.63]\ (3.31)$ & $0.71\,[0.57,0.88]\ (0.59)$ & $0.95\,[0.71,1.27]\ (0.17)$ \\
\method{} (feasible) & $\mathbf{5.71}\,[4.83,6.75]\ (0.16)$ & $\mathbf{5.90}\,[4.99,6.97]\ (0.07)$ & $\mathbf{5.12}\,[4.30,6.10]\ (0.03)$ \\
\bottomrule
\end{tabular}%
}
\end{table}

\begin{table}[p]
\centering\scriptsize
\setlength{\tabcolsep}{3pt}
\caption{RE $[95\%\ \mathrm{MC\ CI}]$ on MMLU (Qwen3 32B, continuous $Z$, $B=100$); mean MSE $\times 10^3$ is in parentheses. Intervals use paired split-level delta-method influence values. The target is the realized full-pool gold kernel profile; uncertainty is over matched labeled-subset draws conditional on the observed benchmark pool.}
\label{tab:real:mmlu:qwen3-32b}
\resizebox{\textwidth}{!}{%
\begin{tabular}{lccc}
\toprule
Estimator & $n_{\mathrm{lab}}=50$ & $n_{\mathrm{lab}}=100$ & $n_{\mathrm{lab}}=200$ \\
\midrule
Naive & $1.00\,[1.00,1.00]\ (17.93)$ & $1.00\,[1.00,1.00]\ (5.00)$ & $1.00\,[1.00,1.00]\ (2.02)$ \\
Plug-in (judge) & $2.09\,[0.59,7.46]\ (8.57)$ & $1.20\,[0.76,1.90]\ (4.17)$ & $0.61\,[0.45,0.84]\ (3.31)$ \\
Plug-in (multi) & $2.48\,[0.96,6.39]\ (7.22)$ & $1.52\,[1.07,2.16]\ (3.30)$ & $0.73\,[0.56,0.95]\ (2.77)$ \\
Aug. plug-in & $0.89\,[0.01,137.87]\ (20.25)$ & $0.78\,[0.45,1.34]\ (6.38)$ & $1.11\,[0.66,1.86]\ (1.81)$ \\
Global CV & $1.17\,[1.11,1.24]\ (15.28)$ & $1.19\,[1.08,1.31]\ (4.22)$ & $1.18\,[1.10,1.27]\ (1.71)$ \\
Per-signal CV & $0.97\,[0.92,1.03]\ (18.43)$ & $1.02\,[0.97,1.07]\ (4.90)$ & $1.03\,[0.97,1.09]\ (1.96)$ \\
Residual-only & $1.33\,[0.94,1.89]\ (13.49)$ & $1.30\,[0.63,2.66]\ (3.83)$ & $1.04\,[0.72,1.49]\ (1.94)$ \\
Scalar prediction CV & $0.29\,[0.13,0.64]\ (61.47)$ & $0.79\,[0.54,1.17]\ (6.37)$ & $1.04\,[0.82,1.33]\ (1.94)$ \\
Local PPI & $0.45\,[0.24,0.66]\ (39.95)$ & $0.66\,[0.59,0.72]\ (7.61)$ & $0.59\,[0.54,0.64]\ (3.41)$ \\
PPCI & $1.23\,[0.31,4.86]\ (14.56)$ & $0.94\,[0.55,1.59]\ (5.31)$ & $1.09\,[0.72,1.65]\ (1.85)$ \\
StratPPI & $1.05\,[0.62,1.77]\ (17.10)$ & $0.86\,[0.60,1.24]\ (5.79)$ & $0.72\,[0.43,1.19]\ (2.80)$ \\
\method{} (feasible) & $\mathbf{5.00}\,[4.09,6.11]\ (3.59)$ & $\mathbf{4.27}\,[2.88,6.33]\ (1.17)$ & $\mathbf{2.73}\,[1.67,4.46]\ (0.74)$ \\
\bottomrule
\end{tabular}%
}
\end{table}

\begin{table}[p]
\centering\scriptsize
\setlength{\tabcolsep}{3pt}
\caption{RE $[95\%\ \mathrm{MC\ CI}]$ on WinoGrande (Qwen3 32B, continuous $Z$, $B=100$); mean MSE $\times 10^3$ is in parentheses. Intervals use paired split-level delta-method influence values. The target is the realized full-pool gold kernel profile; uncertainty is over matched labeled-subset draws conditional on the observed benchmark pool.}
\label{tab:real:winogrande:qwen3-32b}
\resizebox{\textwidth}{!}{%
\begin{tabular}{lccc}
\toprule
Estimator & $n_{\mathrm{lab}}=50$ & $n_{\mathrm{lab}}=100$ & $n_{\mathrm{lab}}=200$ \\
\midrule
Naive & $1.00\,[1.00,1.00]\ (10.35)$ & $1.00\,[1.00,1.00]\ (4.59)$ & $1.00\,[1.00,1.00]\ (1.80)$ \\
Plug-in (judge) & $5.03\,[1.94,13.05]\ (2.06)$ & $3.54\,[1.84,6.79]\ (1.30)$ & $3.69\,[1.95,6.99]\ (0.49)$ \\
Plug-in (multi) & $4.92\,[0.81,29.84]\ (2.10)$ & $3.68\,[2.34,5.78]\ (1.25)$ & $3.14\,[1.78,5.54]\ (0.57)$ \\
Aug. plug-in & $1.42\,[0.87,2.31]\ (7.27)$ & $1.69\,[1.23,2.31]\ (2.72)$ & $1.98\,[1.46,2.68]\ (0.91)$ \\
Global CV & $1.66\,[1.36,2.02]\ (6.24)$ & $2.04\,[1.65,2.53]\ (2.25)$ & $2.08\,[1.43,3.03]\ (0.86)$ \\
Per-signal CV & $1.02\,[0.97,1.08]\ (10.14)$ & $1.11\,[1.07,1.15]\ (4.13)$ & $1.16\,[1.11,1.21]\ (1.55)$ \\
Residual-only & $1.23\,[1.07,1.42]\ (8.39)$ & $1.31\,[0.85,2.02]\ (3.51)$ & $1.34\,[0.90,1.99]\ (1.34)$ \\
Scalar prediction CV & $1.41\,[0.82,2.41]\ (7.31)$ & $1.62\,[1.17,2.25]\ (2.84)$ & $1.88\,[1.40,2.53]\ (0.96)$ \\
Local PPI & $0.61\,[0.45,0.78]\ (16.85)$ & $0.74\,[0.67,0.81]\ (6.21)$ & $0.75\,[0.66,0.83]\ (2.41)$ \\
PPCI & $1.52\,[0.97,2.38]\ (6.79)$ & $1.69\,[1.16,2.46]\ (2.72)$ & $1.89\,[1.36,2.62]\ (0.95)$ \\
StratPPI & $1.02\,[0.73,1.42]\ (10.12)$ & $1.18\,[0.82,1.69]\ (3.91)$ & $1.45\,[0.96,2.19]\ (1.24)$ \\
\method{} (feasible) & $\mathbf{5.66}\,[4.72,6.79]\ (1.83)$ & $\mathbf{6.33}\,[3.02,13.28]\ (0.73)$ & $\mathbf{7.96}\,[5.54,11.44]\ (0.23)$ \\
\bottomrule
\end{tabular}%
}
\end{table}

\begin{table}[p]
\centering\scriptsize
\setlength{\tabcolsep}{3pt}
\caption{RE $[95\%\ \mathrm{MC\ CI}]$ on HellaSwag (Qwen3 32B, continuous $Z$, $B=100$); mean MSE $\times 10^3$ is in parentheses. Intervals use paired split-level delta-method influence values. The target is the realized full-pool gold kernel profile; uncertainty is over matched labeled-subset draws conditional on the observed benchmark pool.}
\label{tab:real:hellaswag:qwen3-32b}
\resizebox{\textwidth}{!}{%
\begin{tabular}{lccc}
\toprule
Estimator & $n_{\mathrm{lab}}=50$ & $n_{\mathrm{lab}}=100$ & $n_{\mathrm{lab}}=200$ \\
\midrule
Naive & $1.00\,[1.00,1.00]\ (8.01)$ & $1.00\,[1.00,1.00]\ (4.57)$ & $1.00\,[1.00,1.00]\ (1.47)$ \\
Plug-in (judge) & $1.57\,[1.07,2.31]\ (5.11)$ & $1.44\,[1.13,1.84]\ (3.17)$ & $1.30\,[1.09,1.55]\ (1.13)$ \\
Plug-in (multi) & $1.30\,[0.85,1.98]\ (6.16)$ & $1.23\,[0.85,1.78]\ (3.73)$ & $1.02\,[0.80,1.31]\ (1.44)$ \\
Aug. plug-in & $0.61\,[0.48,0.78]\ (13.11)$ & $0.78\,[0.61,1.00]\ (5.90)$ & $0.95\,[0.82,1.10]\ (1.54)$ \\
Global CV & $1.03\,[1.02,1.04]\ (7.79)$ & $1.04\,[1.02,1.06]\ (4.39)$ & $1.07\,[1.03,1.12]\ (1.37)$ \\
Per-signal CV & $0.93\,[0.89,0.97]\ (8.66)$ & $1.00\,[0.97,1.03]\ (4.59)$ & $1.01\,[0.99,1.03]\ (1.45)$ \\
Residual-only & $1.01\,[0.97,1.06]\ (7.90)$ & $0.98\,[0.89,1.08]\ (4.66)$ & $0.54\,[0.39,0.74]\ (2.72)$ \\
Scalar prediction CV & $0.60\,[0.44,0.83]\ (13.33)$ & $0.92\,[0.84,1.01]\ (4.99)$ & $1.02\,[0.95,1.09]\ (1.44)$ \\
Local PPI & $0.60\,[0.54,0.65]\ (13.46)$ & $0.65\,[0.58,0.71]\ (7.08)$ & $0.59\,[0.53,0.66]\ (2.48)$ \\
PPCI & $0.61\,[0.48,0.78]\ (13.23)$ & $0.74\,[0.57,0.96]\ (6.21)$ & $0.88\,[0.74,1.04]\ (1.66)$ \\
StratPPI & $0.68\,[0.59,0.79]\ (11.83)$ & $0.72\,[0.62,0.84]\ (6.32)$ & $0.64\,[0.52,0.79]\ (2.30)$ \\
\method{} (feasible) & $\mathbf{3.51}\,[3.43,3.59]\ (2.29)$ & $\mathbf{3.39}\,[3.28,3.51]\ (1.35)$ & $\mathbf{3.01}\,[2.87,3.16]\ (0.49)$ \\
\bottomrule
\end{tabular}%
}
\end{table}

\begin{table}[p]
\centering\scriptsize
\setlength{\tabcolsep}{3pt}
\caption{RE $[95\%\ \mathrm{MC\ CI}]$ on TruthfulQA (Qwen3 32B, continuous $Z$, $B=100$); mean MSE $\times 10^3$ is in parentheses. Intervals use paired split-level delta-method influence values. The target is the realized full-pool gold kernel profile; uncertainty is over matched labeled-subset draws conditional on the observed benchmark pool.}
\label{tab:real:truthfulqa:qwen3-32b}
\resizebox{\textwidth}{!}{%
\begin{tabular}{lccc}
\toprule
Estimator & $n_{\mathrm{lab}}=50$ & $n_{\mathrm{lab}}=100$ & $n_{\mathrm{lab}}=200$ \\
\midrule
Naive & $1.00\,[1.00,1.00]\ (22.10)$ & $1.00\,[1.00,1.00]\ (9.79)$ & $1.00\,[1.00,1.00]\ (3.81)$ \\
Plug-in (judge) & $3.85\,[2.40,6.18]\ (5.74)$ & $3.46\,[2.12,5.66]\ (2.83)$ & $1.39\,[0.77,2.52]\ (2.75)$ \\
Plug-in (multi) & $4.87\,[1.19,19.93]\ (4.54)$ & $4.38\,[2.69,7.14]\ (2.24)$ & $1.74\,[1.10,2.76]\ (2.19)$ \\
Aug. plug-in & $1.89\,[0.89,4.02]\ (11.68)$ & $2.47\,[1.29,4.71]\ (3.97)$ & $4.79\,[2.52,9.09]\ (0.80)$ \\
Global CV & $2.16\,[1.84,2.54]\ (10.22)$ & $2.00\,[1.70,2.35]\ (4.89)$ & $3.81\,[2.31,6.30]\ (1.00)$ \\
Per-signal CV & $1.02\,[0.95,1.10]\ (21.74)$ & $1.21\,[1.13,1.29]\ (8.09)$ & $1.32\,[1.24,1.41]\ (2.89)$ \\
Residual-only & $0.90\,[0.61,1.33]\ (24.67)$ & $0.75\,[0.49,1.14]\ (13.08)$ & $0.21\,[0.08,0.52]\ (18.23)$ \\
Scalar prediction CV & $1.42\,[0.99,2.04]\ (15.61)$ & $2.13\,[1.49,3.05]\ (4.60)$ & $3.88\,[2.56,5.89]\ (0.98)$ \\
Local PPI & $0.49\,[0.30,0.69]\ (44.82)$ & $0.87\,[0.75,0.99]\ (11.25)$ & $0.77\,[0.63,0.92]\ (4.92)$ \\
PPCI & $2.62\,[1.63,4.21]\ (8.43)$ & $3.33\,[1.97,5.64]\ (2.94)$ & $3.57\,[1.82,7.02]\ (1.07)$ \\
StratPPI & $2.04\,[0.57,7.30]\ (10.82)$ & $1.36\,[0.27,6.79]\ (7.22)$ & $0.82\,[0.62,1.09]\ (4.64)$ \\
\method{} (feasible) & $\mathbf{5.36}\,[4.04,7.12]\ (4.12)$ & $\mathbf{7.08}\,[5.54,9.04]\ (1.38)$ & $\mathbf{9.89}\,[6.15,15.90]\ (0.39)$ \\
\bottomrule
\end{tabular}%
}
\end{table}

\begin{table}[p]
\centering\scriptsize
\setlength{\tabcolsep}{3pt}
\caption{RE $[95\%\ \mathrm{MC\ CI}]$ on GSM8K (Qwen3 32B, continuous $Z$, $B=100$); mean MSE $\times 10^3$ is in parentheses. Intervals use paired split-level delta-method influence values. The target is the realized full-pool gold kernel profile; uncertainty is over matched labeled-subset draws conditional on the observed benchmark pool.}
\label{tab:real:gsm8k:qwen3-32b}
\resizebox{\textwidth}{!}{%
\begin{tabular}{lccc}
\toprule
Estimator & $n_{\mathrm{lab}}=50$ & $n_{\mathrm{lab}}=100$ & $n_{\mathrm{lab}}=200$ \\
\midrule
Naive & $1.00\,[1.00,1.00]\ (2.19)$ & $1.00\,[1.00,1.00]\ (1.17)$ & $1.00\,[1.00,1.00]\ (0.34)$ \\
Plug-in (judge) & $2.49\,[1.55,4.01]\ (0.88)$ & $2.29\,[1.71,3.08]\ (0.51)$ & $1.11\,[0.84,1.47]\ (0.31)$ \\
Plug-in (multi) & $3.90\,[1.18,12.90]\ (0.56)$ & $3.29\,[1.77,6.11]\ (0.35)$ & $1.37\,[0.79,2.38]\ (0.25)$ \\
Aug. plug-in & $0.83\,[0.44,1.57]\ (2.65)$ & $0.84\,[0.41,1.73]\ (1.38)$ & $0.79\,[0.07,8.45]\ (0.43)$ \\
Global CV & $1.24\,[1.15,1.34]\ (1.76)$ & $1.27\,[1.17,1.38]\ (0.92)$ & $1.22\,[0.89,1.67]\ (0.28)$ \\
Per-signal CV & $0.56\,[0.50,0.62]\ (3.89)$ & $1.07\,[1.01,1.13]\ (1.09)$ & $1.02\,[0.96,1.09]\ (0.33)$ \\
Residual-only & $1.49\,[1.10,2.02]\ (1.47)$ & $1.49\,[1.05,2.11]\ (0.78)$ & $0.97\,[0.71,1.33]\ (0.35)$ \\
Scalar prediction CV & $0.52\,[0.45,0.60]\ (4.18)$ & $0.84\,[0.71,0.99]\ (1.38)$ & $0.88\,[0.68,1.14]\ (0.38)$ \\
Local PPI & $0.56\,[0.48,0.64]\ (3.99)$ & $0.62\,[0.53,0.70]\ (1.89)$ & $0.50\,[0.42,0.58]\ (0.69)$ \\
PPCI & $0.86\,[0.49,1.51]\ (2.53)$ & $0.81\,[0.44,1.48]\ (1.43)$ & $0.65\,[0.16,2.63]\ (0.52)$ \\
StratPPI & $0.30\,[0.11,0.82]\ (7.33)$ & $0.51\,[0.41,0.64]\ (2.28)$ & $0.52\,[0.38,0.71]\ (0.65)$ \\
\method{} (feasible) & $\mathbf{6.20}\,[3.26,11.78]\ (0.35)$ & $\mathbf{5.90}\,[4.24,8.21]\ (0.20)$ & $\mathbf{3.85}\,[2.71,5.46]\ (0.09)$ \\
\bottomrule
\end{tabular}%
}
\end{table}

\begin{table}[p]
\centering\scriptsize
\setlength{\tabcolsep}{3pt}
\caption{RE $[95\%\ \mathrm{MC\ CI}]$ on ARC (Qwen3 32B, continuous $Z$, $B=100$); mean MSE $\times 10^3$ is in parentheses. Intervals use paired split-level delta-method influence values. The target is the realized full-pool gold kernel profile; uncertainty is over matched labeled-subset draws conditional on the observed benchmark pool.}
\label{tab:real:arc:qwen3-32b}
\resizebox{\textwidth}{!}{%
\begin{tabular}{lccc}
\toprule
Estimator & $n_{\mathrm{lab}}=50$ & $n_{\mathrm{lab}}=100$ & $n_{\mathrm{lab}}=200$ \\
\midrule
Naive & $1.00\,[1.00,1.00]\ (1.85)$ & $1.00\,[1.00,1.00]\ (0.81)$ & $1.00\,[1.00,1.00]\ (0.26)$ \\
Plug-in (judge) & $4.41\,[2.91,6.69]\ (0.42)$ & $4.00\,[2.43,6.59]\ (0.20)$ & $2.30\,[1.81,2.93]\ (0.11)$ \\
Plug-in (multi) & $3.65\,[2.64,5.04]\ (0.51)$ & $3.28\,[2.10,5.12]\ (0.25)$ & $1.60\,[1.05,2.44]\ (0.16)$ \\
Aug. plug-in & $0.75\,[0.45,1.26]\ (2.47)$ & $0.84\,[0.56,1.25]\ (0.97)$ & $0.82\,[0.54,1.24]\ (0.32)$ \\
Global CV & $1.26\,[0.93,1.70]\ (1.47)$ & $1.42\,[1.27,1.58]\ (0.57)$ & $1.21\,[1.06,1.39]\ (0.21)$ \\
Per-signal CV & $0.79\,[0.74,0.85]\ (2.34)$ & $1.03\,[0.96,1.10]\ (0.79)$ & $1.04\,[1.00,1.08]\ (0.25)$ \\
Residual-only & $1.56\,[1.28,1.90]\ (1.18)$ & $1.67\,[1.21,2.30]\ (0.49)$ & $1.28\,[1.02,1.61]\ (0.20)$ \\
Scalar prediction CV & $0.41\,[0.16,1.04]\ (4.48)$ & $0.86\,[0.51,1.45]\ (0.95)$ & $1.06\,[0.82,1.38]\ (0.24)$ \\
Local PPI & $0.41\,[0.30,0.52]\ (4.52)$ & $0.47\,[0.39,0.54]\ (1.74)$ & $0.37\,[0.30,0.43]\ (0.71)$ \\
PPCI & $0.80\,[0.49,1.30]\ (2.33)$ & $0.77\,[0.49,1.22]\ (1.06)$ & $0.63\,[0.41,0.97]\ (0.41)$ \\
StratPPI & $0.55\,[0.25,1.22]\ (3.36)$ & $0.84\,[0.14,5.08]\ (0.97)$ & $1.63\,[0.87,3.06]\ (0.16)$ \\
\method{} (feasible) & $\mathbf{5.95}\,[4.95,7.15]\ (0.31)$ & $\mathbf{6.70}\,[4.35,10.32]\ (0.12)$ & $\mathbf{5.17}\,[4.45,6.01]\ (0.05)$ \\
\bottomrule
\end{tabular}%
}
\end{table}

\begin{figure}[htbp]
\centering
\begin{subfigure}[b]{0.12\textwidth}
\centering
\vspace{1.2cm}
\textbf{MATH-500}
\end{subfigure}
\hfill
\begin{subfigure}[b]{0.27\textwidth}
\centering
\includegraphics[width=\textwidth]{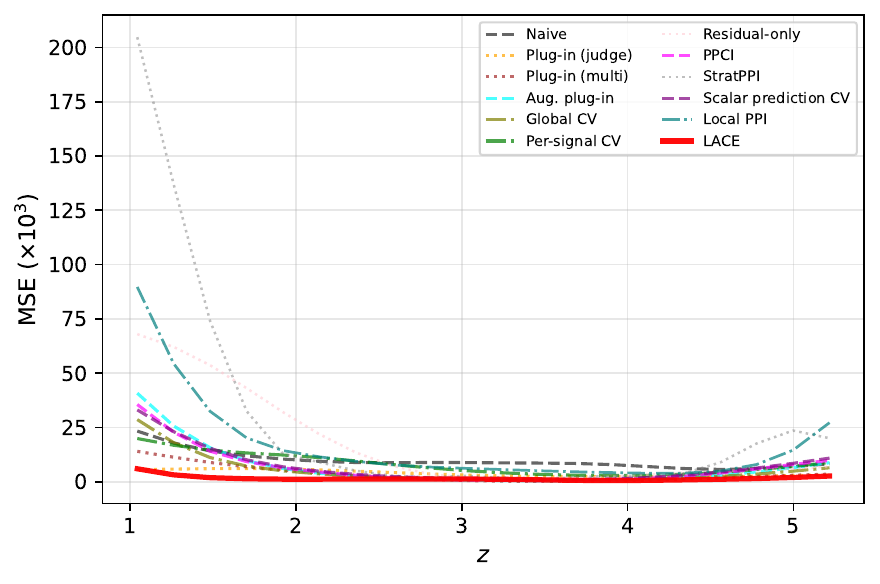}
\end{subfigure}
\hfill
\begin{subfigure}[b]{0.27\textwidth}
\centering
\includegraphics[width=\textwidth]{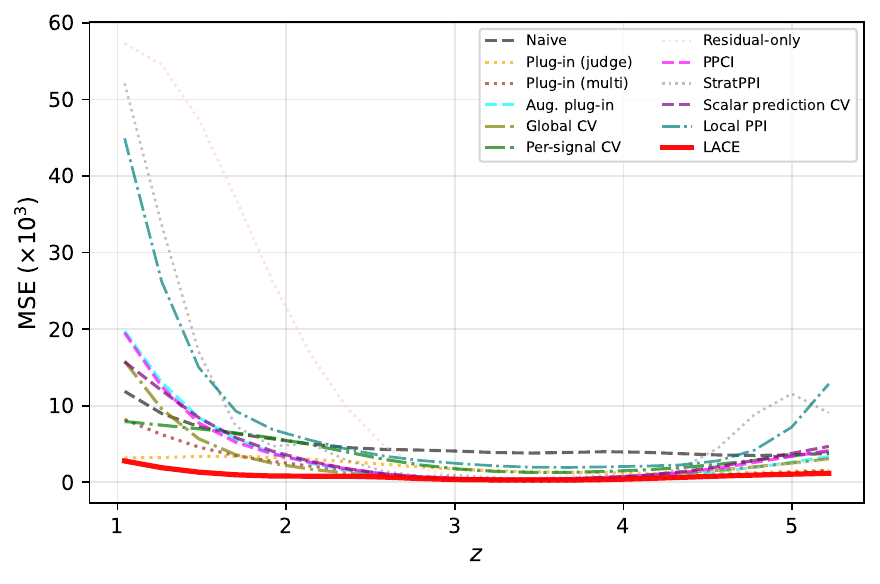}
\end{subfigure}
\hfill
\begin{subfigure}[b]{0.27\textwidth}
\centering
\includegraphics[width=\textwidth]{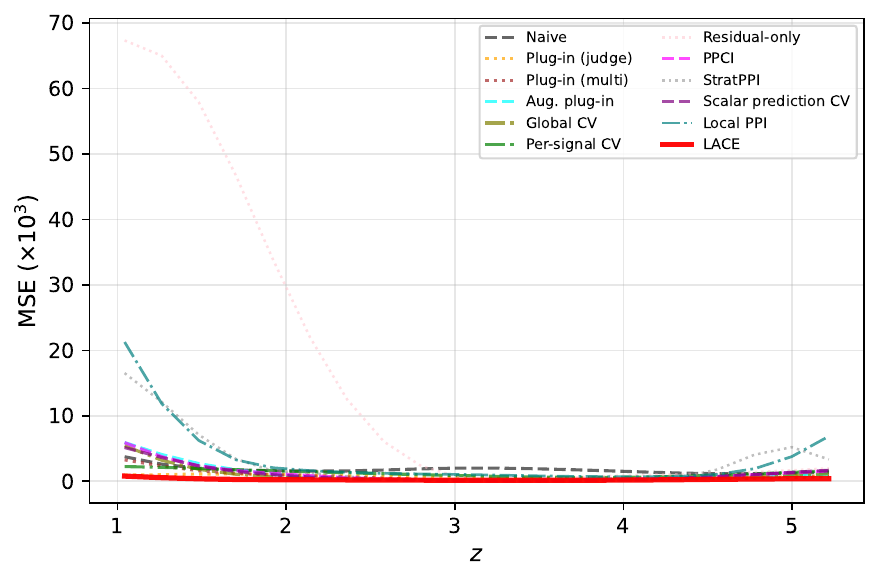}
\end{subfigure}
\par\smallskip
\begin{subfigure}[b]{0.12\textwidth}
\centering
\vspace{1.2cm}
\textbf{ScienceQA}
\end{subfigure}
\hfill
\begin{subfigure}[b]{0.27\textwidth}
\centering
\includegraphics[width=\textwidth]{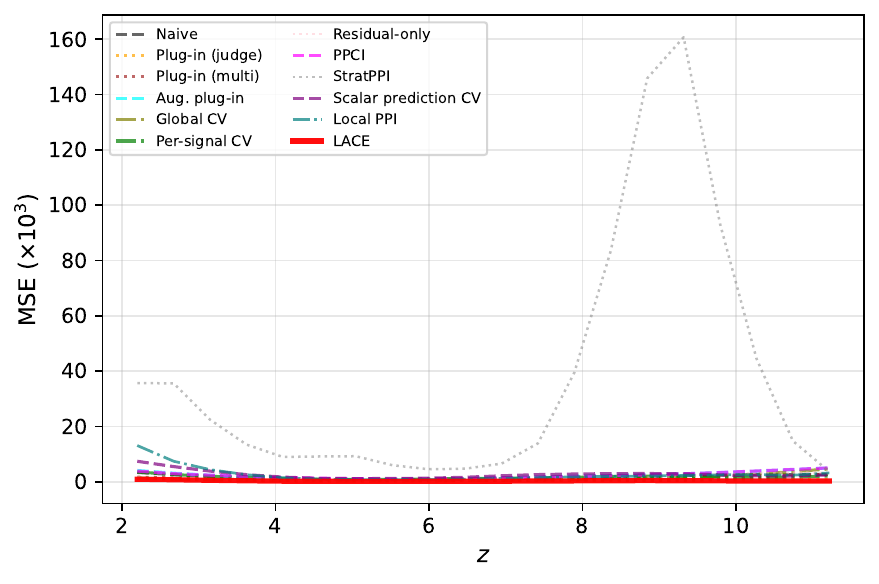}
\end{subfigure}
\hfill
\begin{subfigure}[b]{0.27\textwidth}
\centering
\includegraphics[width=\textwidth]{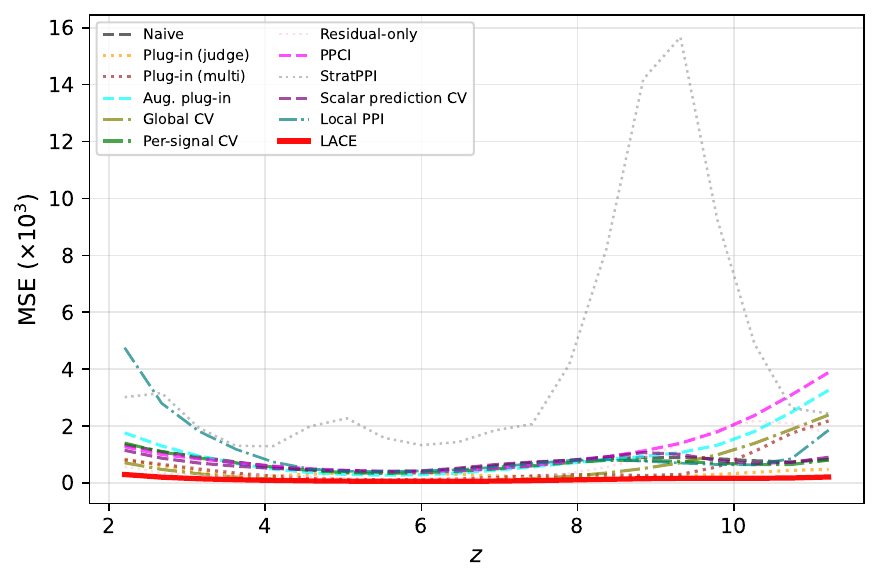}
\end{subfigure}
\hfill
\begin{subfigure}[b]{0.27\textwidth}
\centering
\includegraphics[width=\textwidth]{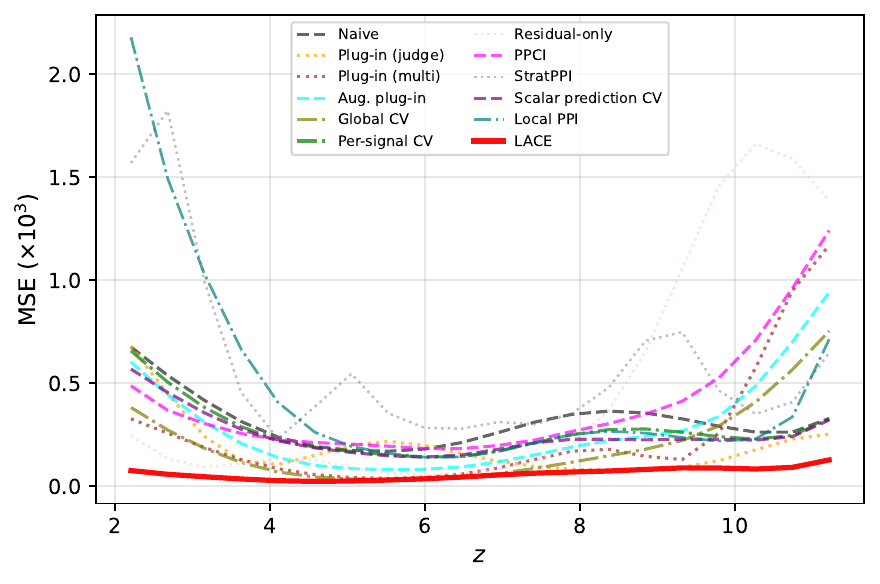}
\end{subfigure}
\par\smallskip
\begin{subfigure}[b]{0.12\textwidth}
\centering
\vspace{1.2cm}
\textbf{MMLU}
\end{subfigure}
\hfill
\begin{subfigure}[b]{0.27\textwidth}
\centering
\includegraphics[width=\textwidth]{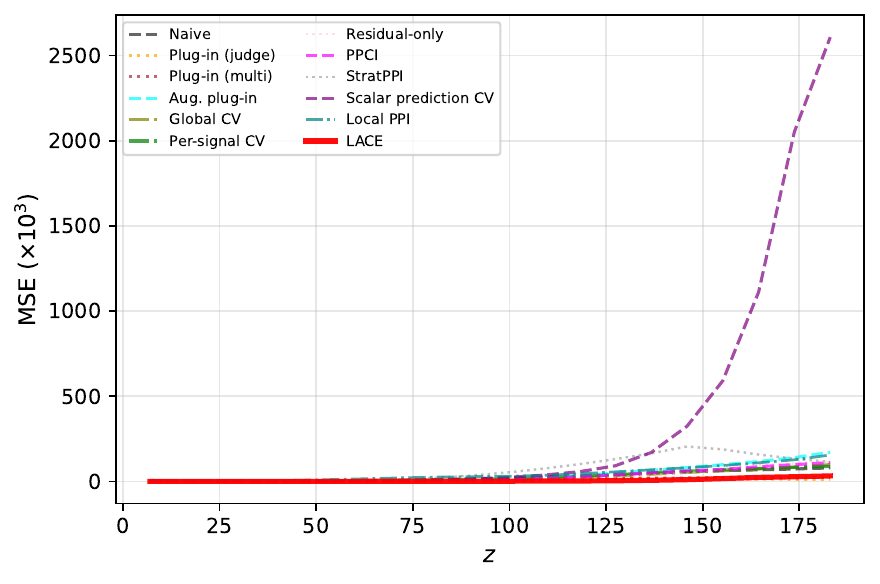}
\end{subfigure}
\hfill
\begin{subfigure}[b]{0.27\textwidth}
\centering
\includegraphics[width=\textwidth]{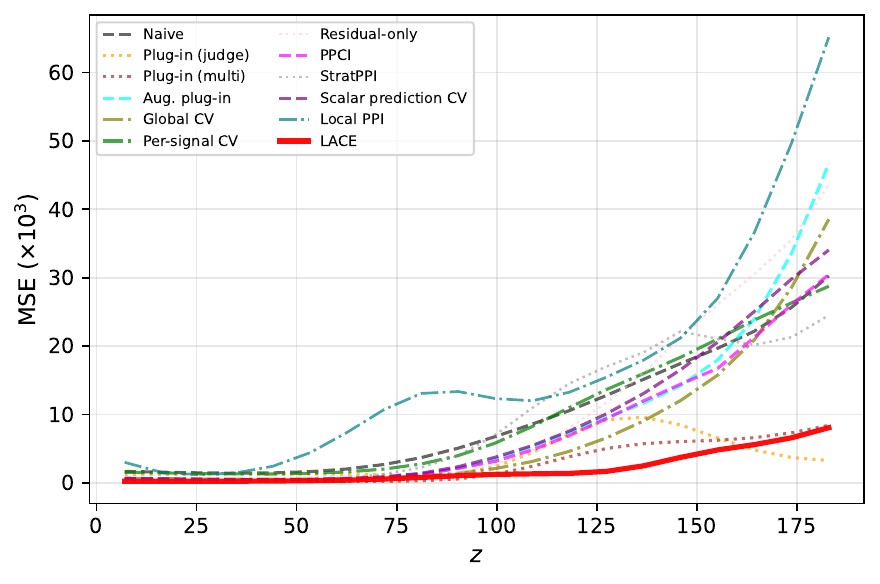}
\end{subfigure}
\hfill
\begin{subfigure}[b]{0.27\textwidth}
\centering
\includegraphics[width=\textwidth]{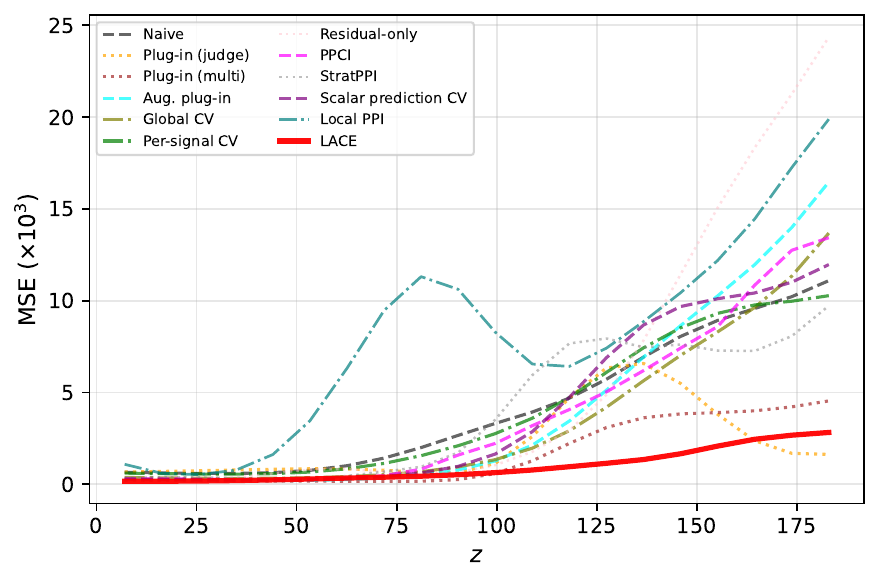}
\end{subfigure}
\par\smallskip
\begin{subfigure}[b]{0.12\textwidth}
\centering
\vspace{1.2cm}
\textbf{WinoGrande}
\end{subfigure}
\hfill
\begin{subfigure}[b]{0.27\textwidth}
\centering
\includegraphics[width=\textwidth]{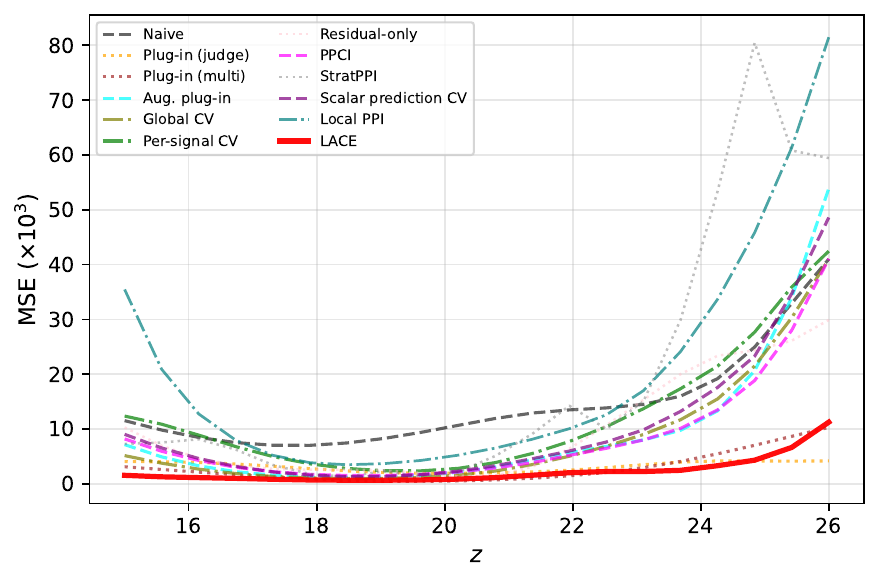}
\end{subfigure}
\hfill
\begin{subfigure}[b]{0.27\textwidth}
\centering
\includegraphics[width=\textwidth]{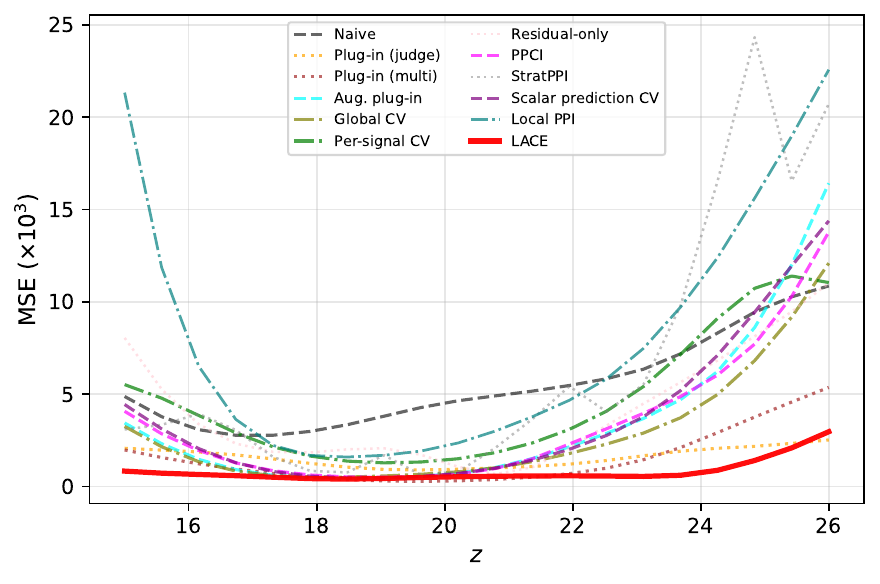}
\end{subfigure}
\hfill
\begin{subfigure}[b]{0.27\textwidth}
\centering
\includegraphics[width=\textwidth]{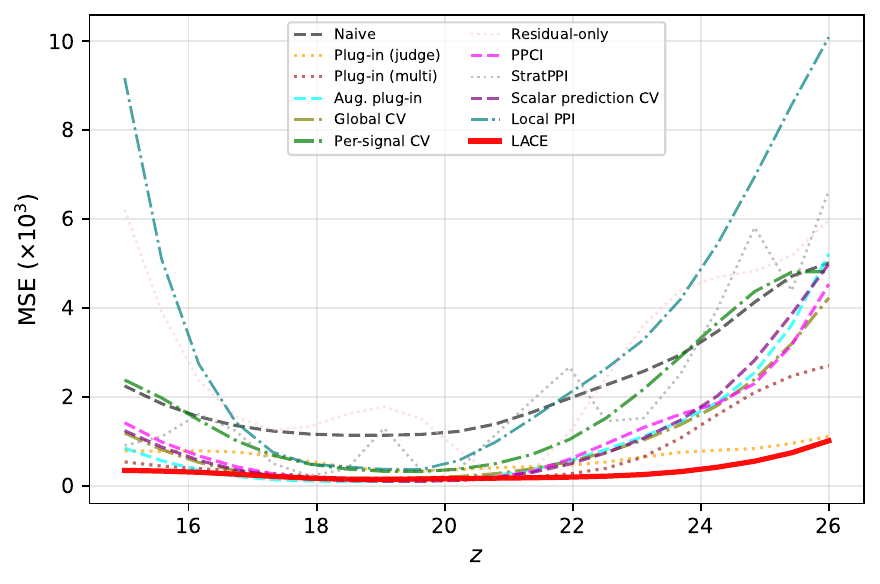}
\end{subfigure}
\par\smallskip
\begin{subfigure}[b]{0.12\textwidth}
\centering
\vspace{1.2cm}
\textbf{HellaSwag}
\end{subfigure}
\hfill
\begin{subfigure}[b]{0.27\textwidth}
\centering
\includegraphics[width=\textwidth]{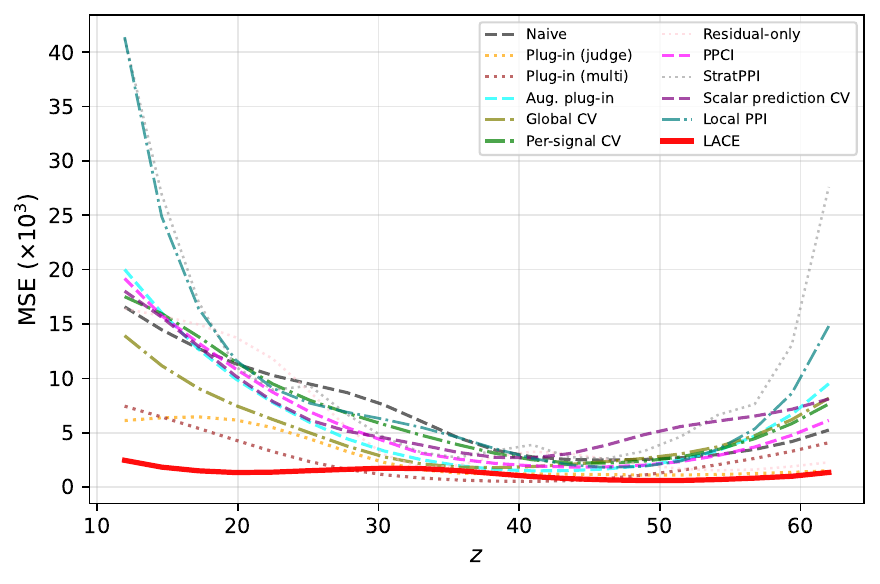}
\end{subfigure}
\hfill
\begin{subfigure}[b]{0.27\textwidth}
\centering
\includegraphics[width=\textwidth]{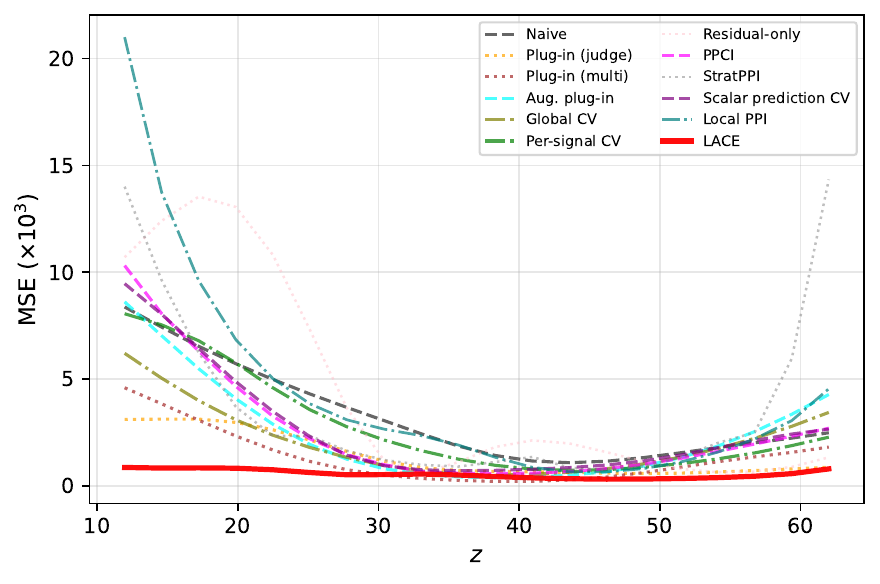}
\end{subfigure}
\hfill
\begin{subfigure}[b]{0.27\textwidth}
\centering
\includegraphics[width=\textwidth]{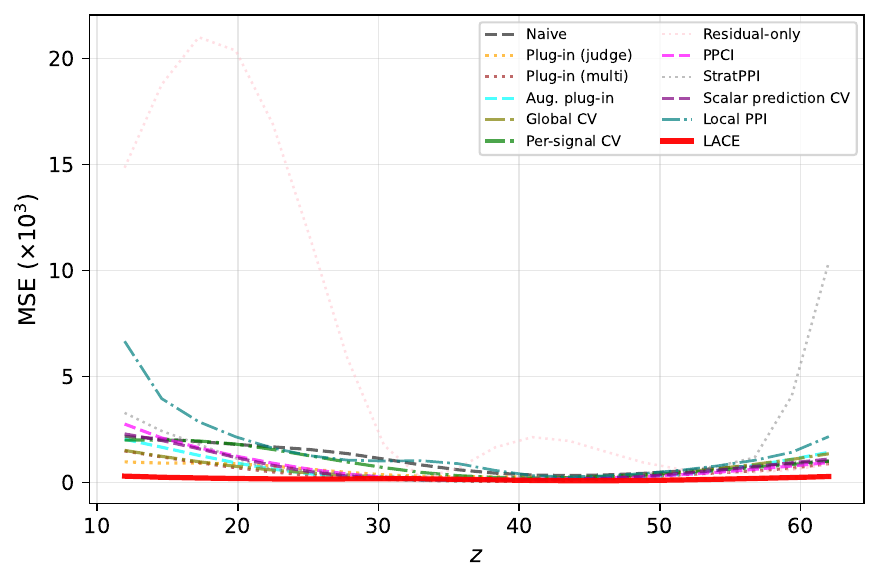}
\end{subfigure}
\par\smallskip
\begin{subfigure}[b]{0.12\textwidth}
\centering
\vspace{1.2cm}
\textbf{TruthfulQA}
\end{subfigure}
\hfill
\begin{subfigure}[b]{0.27\textwidth}
\centering
\includegraphics[width=\textwidth]{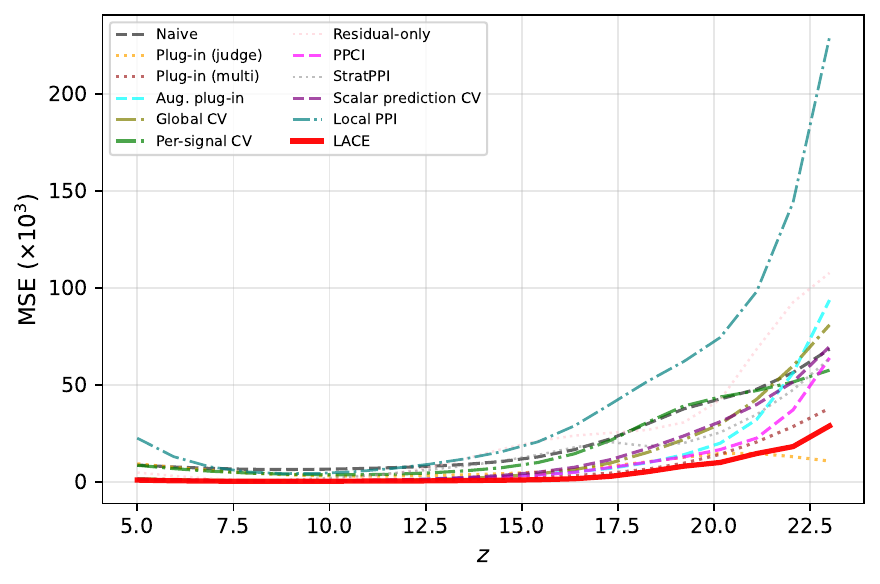}
\end{subfigure}
\hfill
\begin{subfigure}[b]{0.27\textwidth}
\centering
\includegraphics[width=\textwidth]{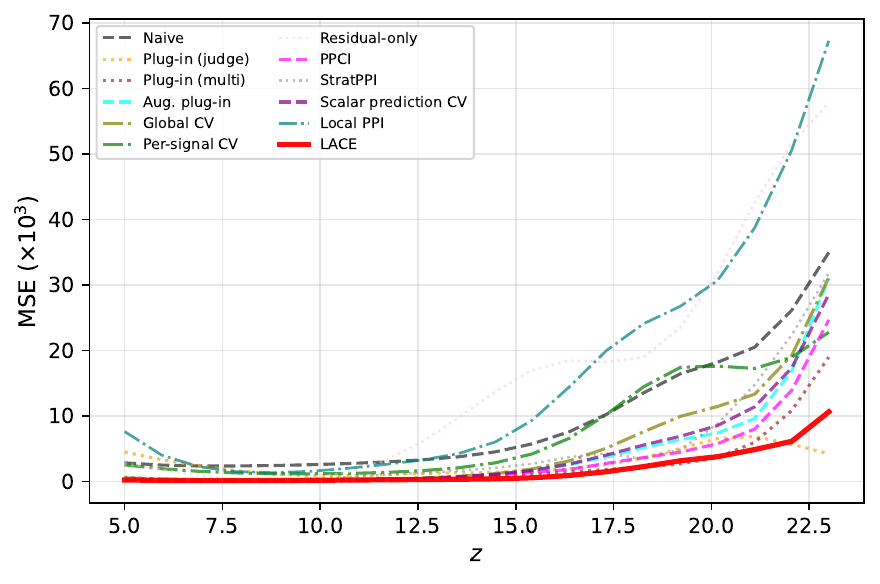}
\end{subfigure}
\hfill
\begin{subfigure}[b]{0.27\textwidth}
\centering
\includegraphics[width=\textwidth]{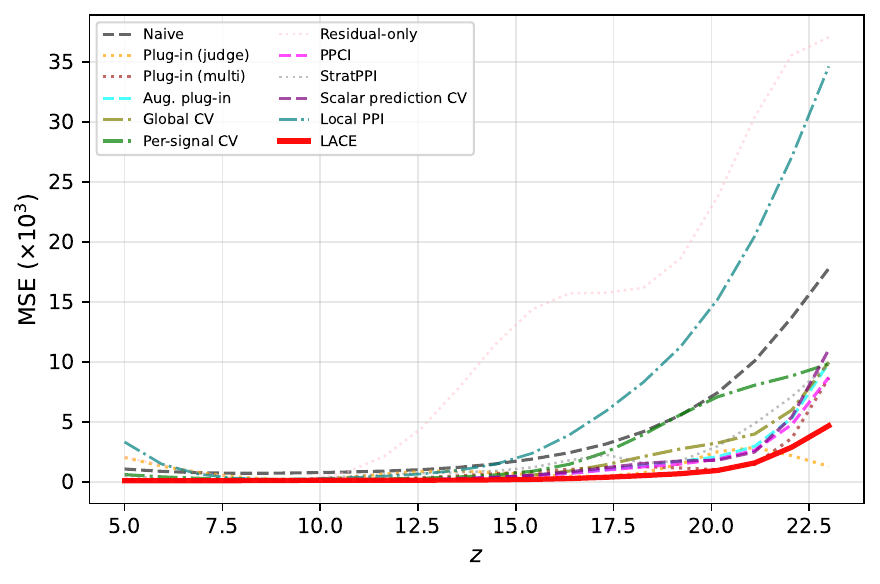}
\end{subfigure}
\par\smallskip
\begin{subfigure}[b]{0.12\textwidth}
\centering
\vspace{1.2cm}
\textbf{GSM8K}
\end{subfigure}
\hfill
\begin{subfigure}[b]{0.27\textwidth}
\centering
\includegraphics[width=\textwidth]{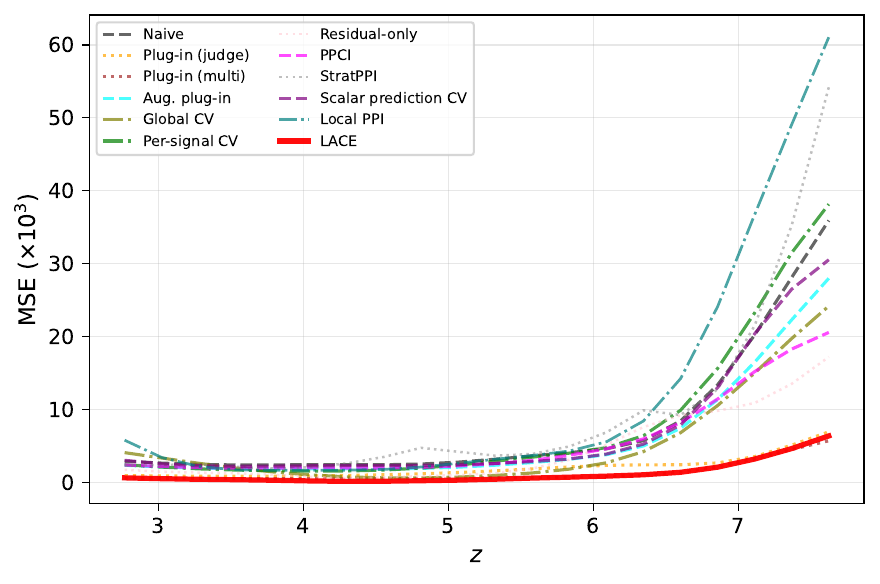}
\end{subfigure}
\hfill
\begin{subfigure}[b]{0.27\textwidth}
\centering
\includegraphics[width=\textwidth]{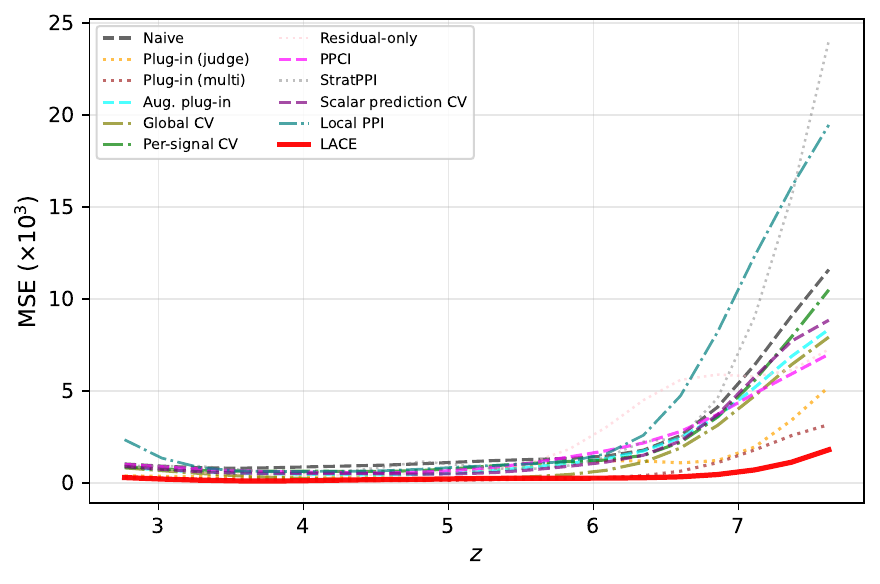}
\end{subfigure}
\hfill
\begin{subfigure}[b]{0.27\textwidth}
\centering
\includegraphics[width=\textwidth]{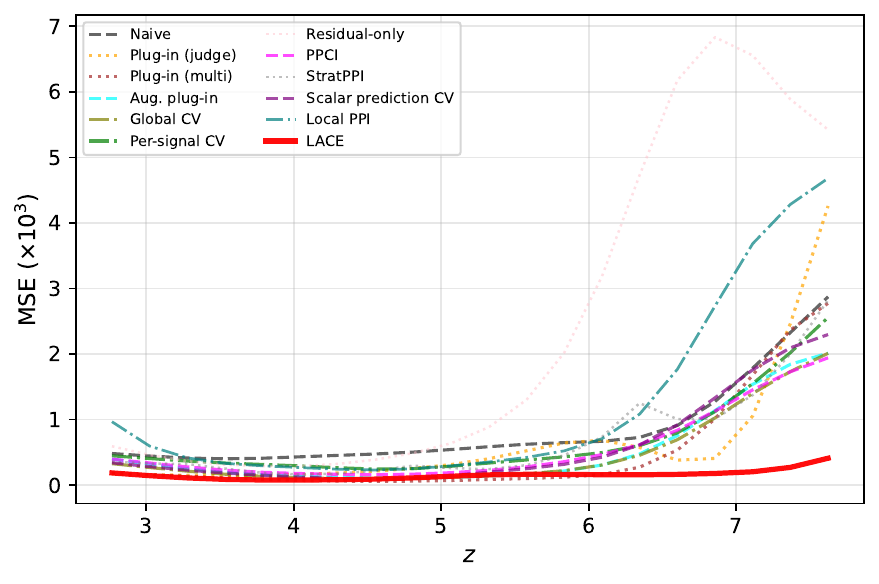}
\end{subfigure}
\par\smallskip
\begin{subfigure}[b]{0.12\textwidth}
\centering
\vspace{1.2cm}
\textbf{ARC}
\end{subfigure}
\hfill
\begin{subfigure}[b]{0.27\textwidth}
\centering
\includegraphics[width=\textwidth]{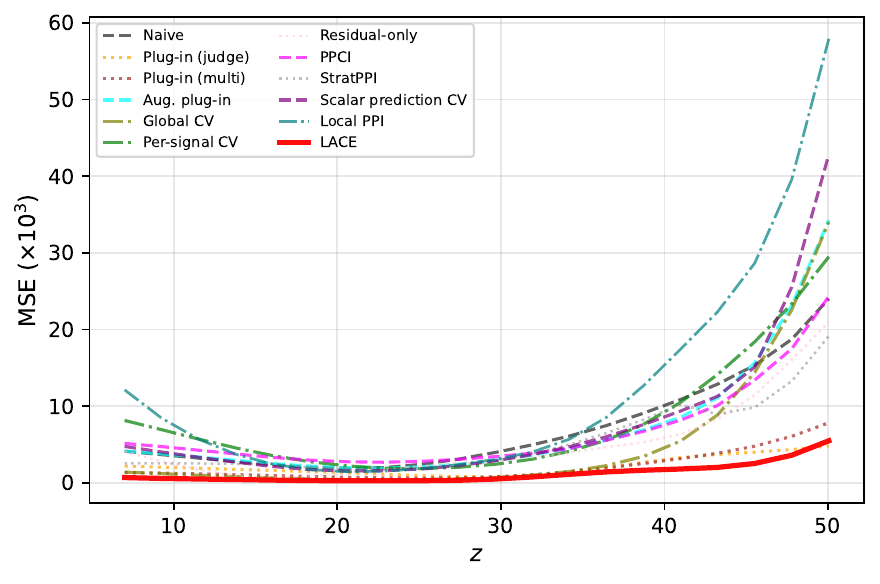}
\end{subfigure}
\hfill
\begin{subfigure}[b]{0.27\textwidth}
\centering
\includegraphics[width=\textwidth]{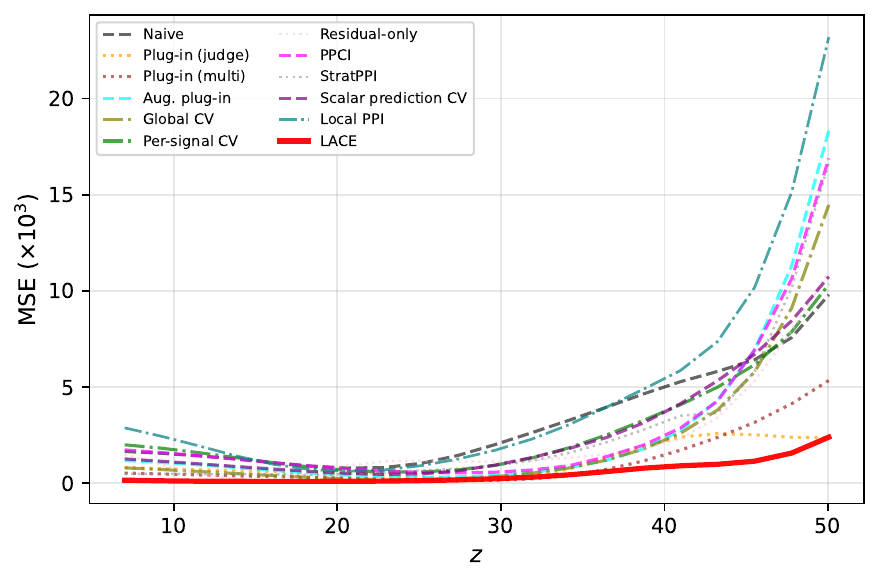}
\end{subfigure}
\hfill
\begin{subfigure}[b]{0.27\textwidth}
\centering
\includegraphics[width=\textwidth]{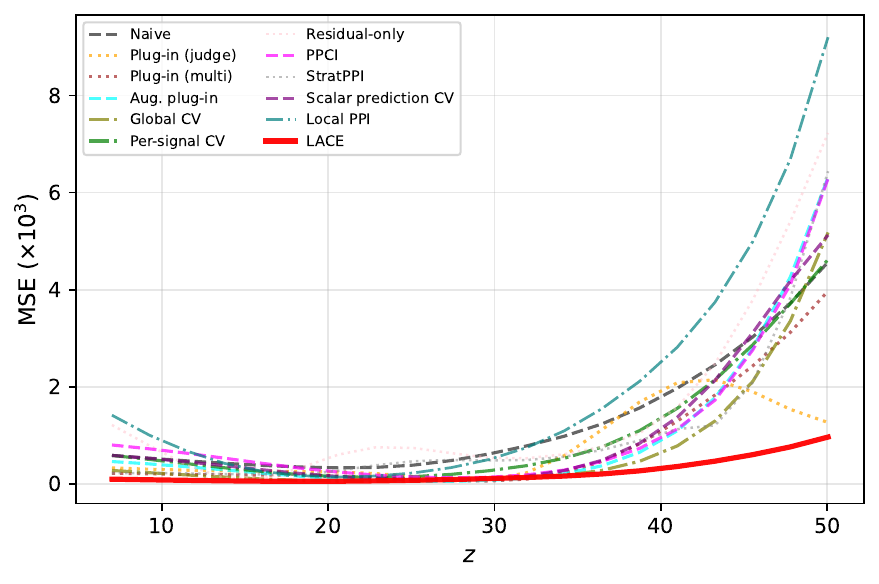}
\end{subfigure}
\caption{Pointwise MSE ($\times 10^3$) for Claude Haiku 3. Each row:
one dataset (label on left); columns are $n=50,100,200$.}
\label{fig:mse-haiku-3}
\end{figure}

\begin{figure}[htbp]
\centering
\begin{subfigure}[b]{0.12\textwidth}
\centering
\vspace{1.2cm}
\textbf{MATH-500}
\end{subfigure}
\hfill
\begin{subfigure}[b]{0.27\textwidth}
\centering
\includegraphics[width=\textwidth]{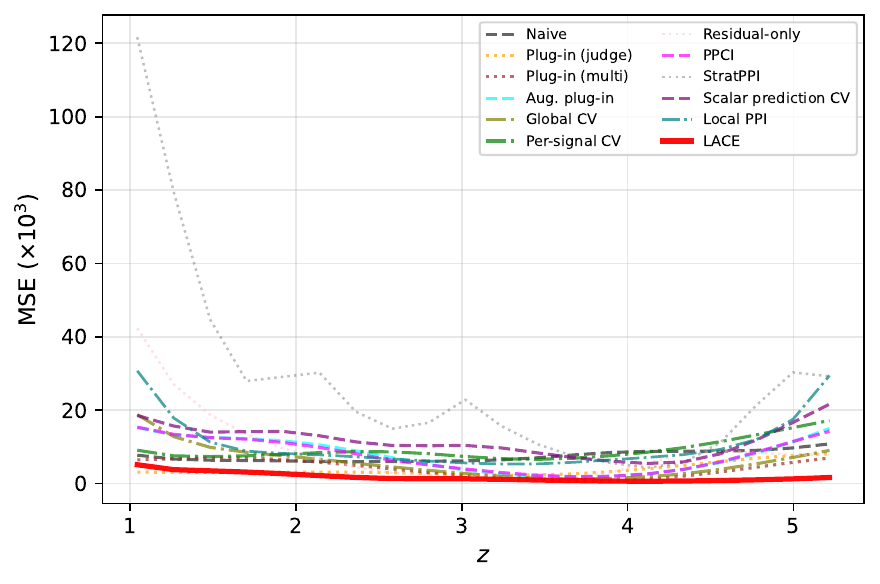}
\end{subfigure}
\hfill
\begin{subfigure}[b]{0.27\textwidth}
\centering
\includegraphics[width=\textwidth]{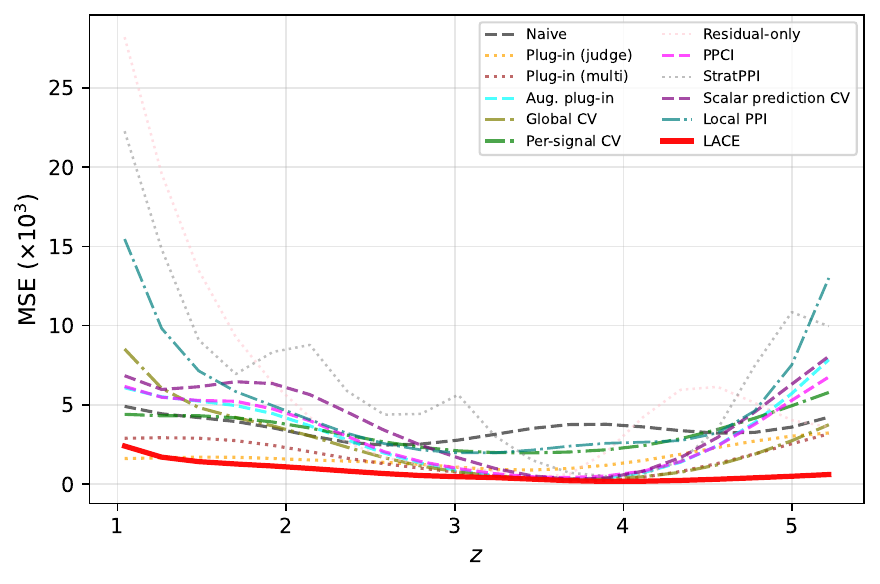}
\end{subfigure}
\hfill
\begin{subfigure}[b]{0.27\textwidth}
\centering
\includegraphics[width=\textwidth]{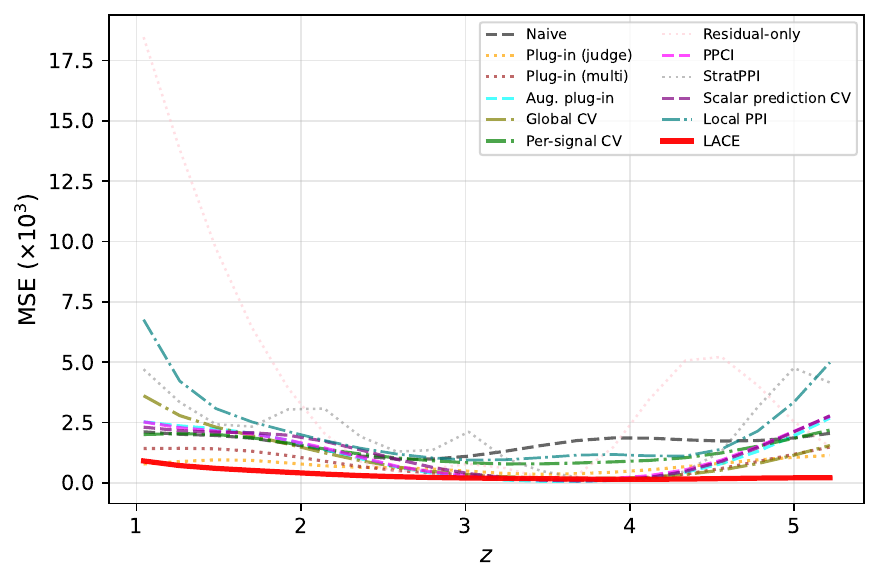}
\end{subfigure}
\par\smallskip
\begin{subfigure}[b]{0.12\textwidth}
\centering
\vspace{1.2cm}
\textbf{ScienceQA}
\end{subfigure}
\hfill
\begin{subfigure}[b]{0.27\textwidth}
\centering
\includegraphics[width=\textwidth]{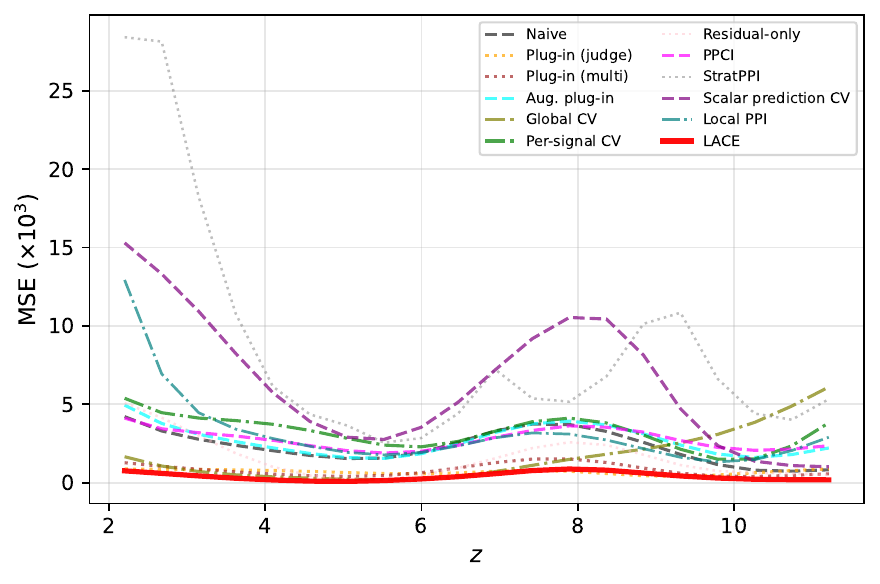}
\end{subfigure}
\hfill
\begin{subfigure}[b]{0.27\textwidth}
\centering
\includegraphics[width=\textwidth]{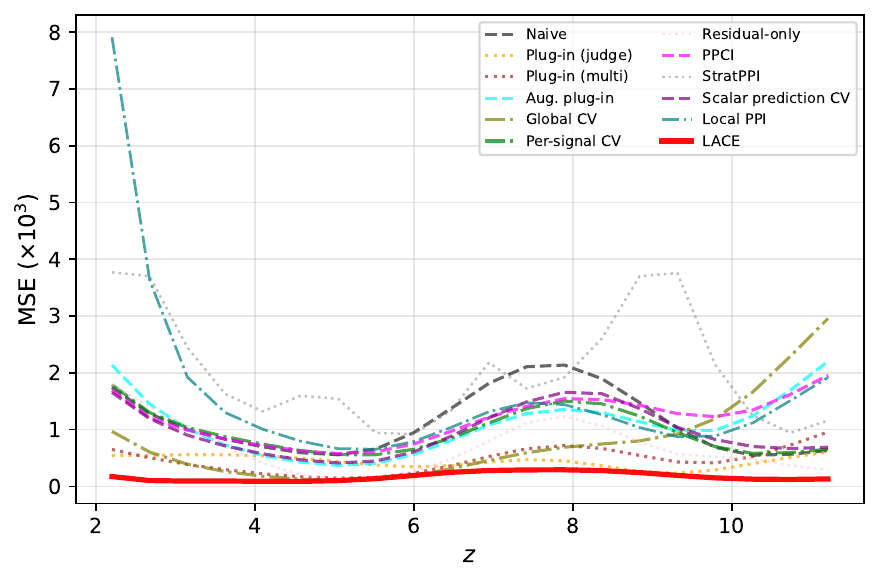}
\end{subfigure}
\hfill
\begin{subfigure}[b]{0.27\textwidth}
\centering
\includegraphics[width=\textwidth]{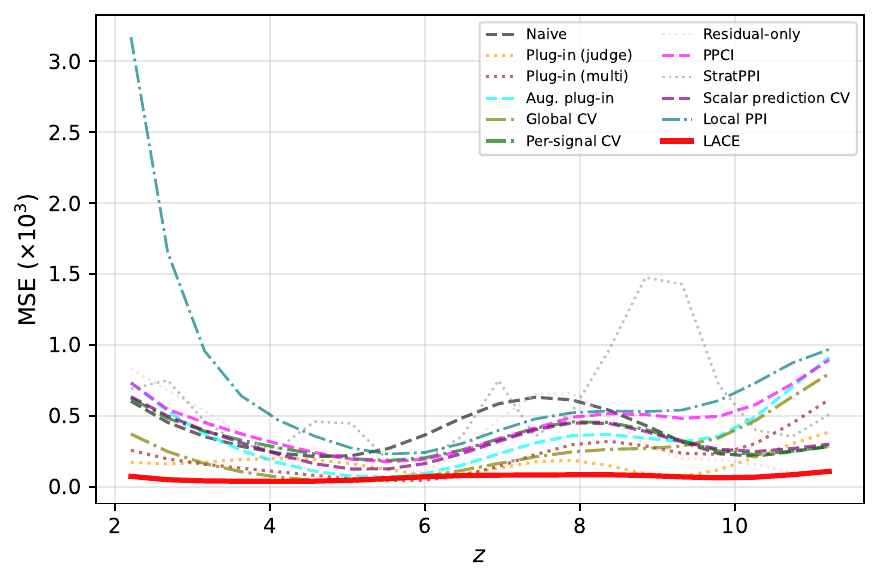}
\end{subfigure}
\par\smallskip
\begin{subfigure}[b]{0.12\textwidth}
\centering
\vspace{1.2cm}
\textbf{MMLU}
\end{subfigure}
\hfill
\begin{subfigure}[b]{0.27\textwidth}
\centering
\includegraphics[width=\textwidth]{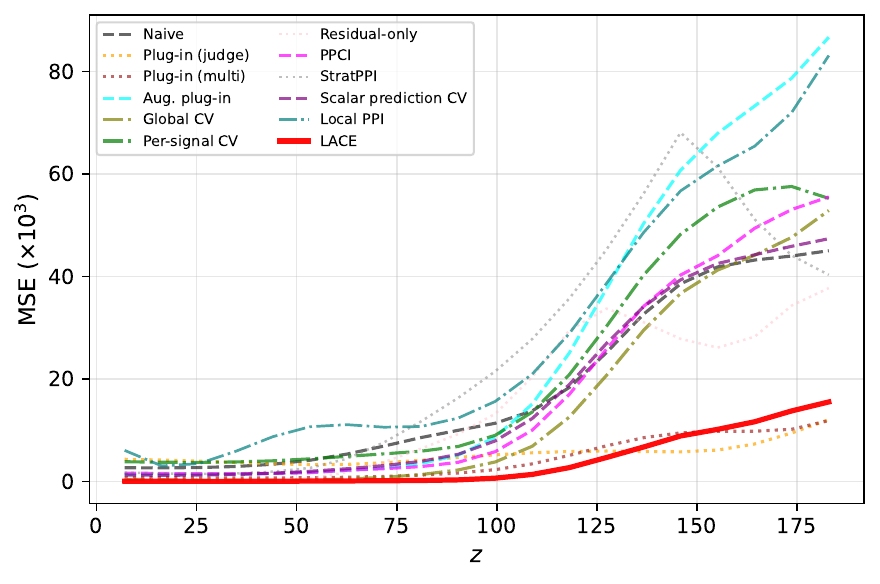}
\end{subfigure}
\hfill
\begin{subfigure}[b]{0.27\textwidth}
\centering
\includegraphics[width=\textwidth]{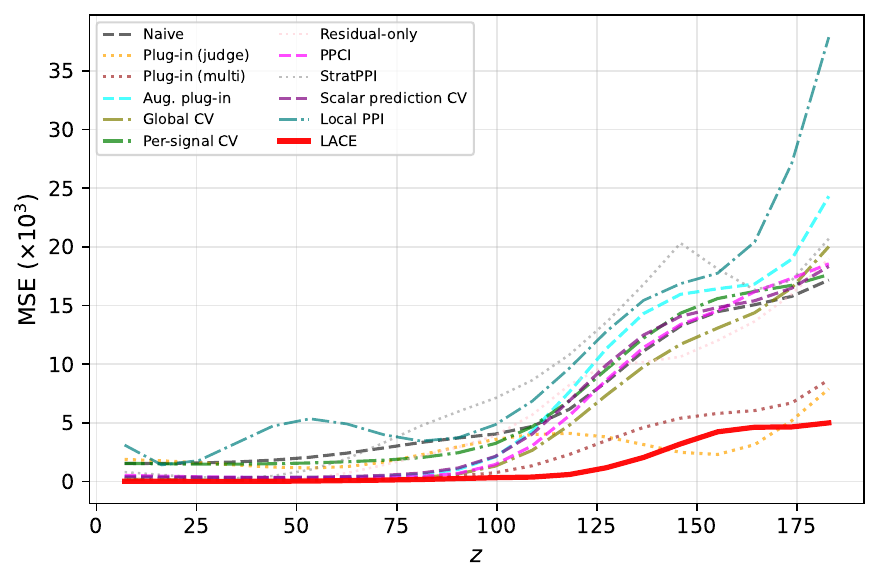}
\end{subfigure}
\hfill
\begin{subfigure}[b]{0.27\textwidth}
\centering
\includegraphics[width=\textwidth]{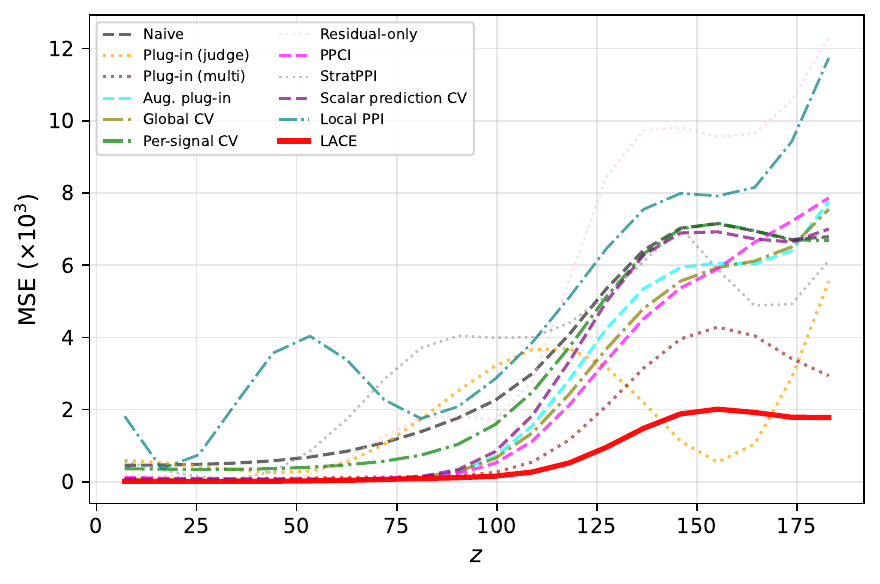}
\end{subfigure}
\par\smallskip
\begin{subfigure}[b]{0.12\textwidth}
\centering
\vspace{1.2cm}
\textbf{WinoGrande}
\end{subfigure}
\hfill
\begin{subfigure}[b]{0.27\textwidth}
\centering
\includegraphics[width=\textwidth]{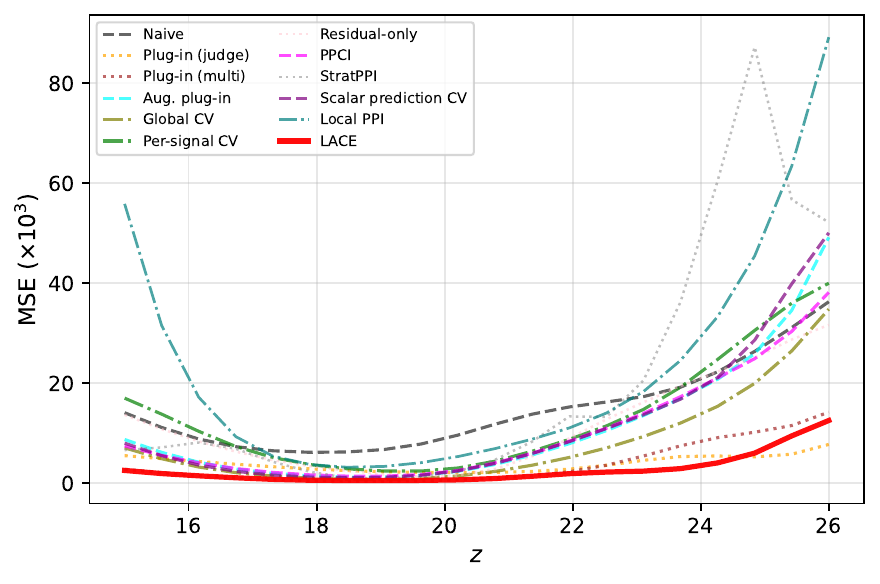}
\end{subfigure}
\hfill
\begin{subfigure}[b]{0.27\textwidth}
\centering
\includegraphics[width=\textwidth]{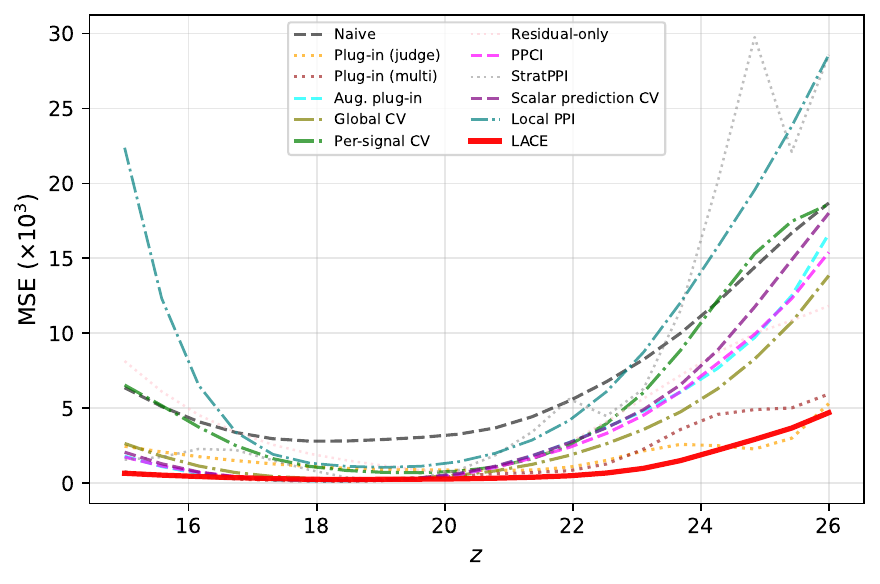}
\end{subfigure}
\hfill
\begin{subfigure}[b]{0.27\textwidth}
\centering
\includegraphics[width=\textwidth]{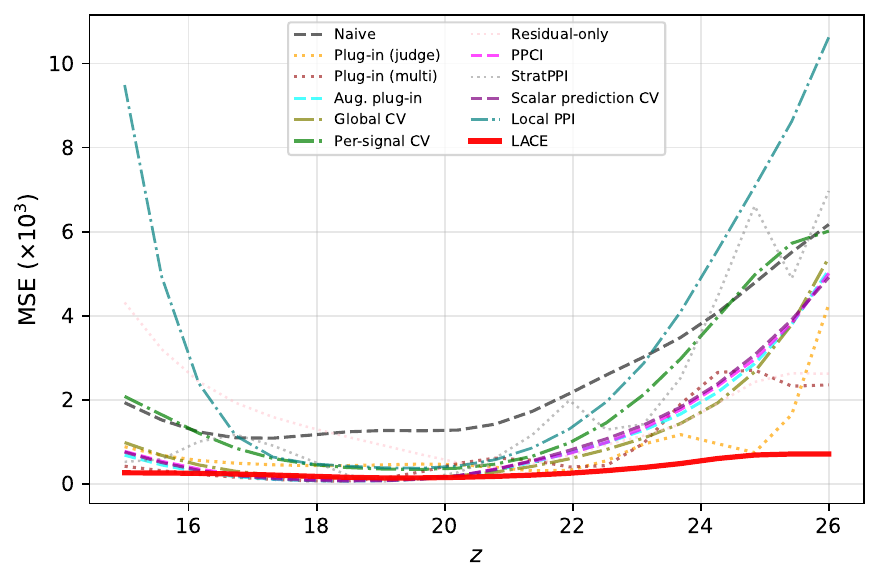}
\end{subfigure}
\par\smallskip
\begin{subfigure}[b]{0.12\textwidth}
\centering
\vspace{1.2cm}
\textbf{HellaSwag}
\end{subfigure}
\hfill
\begin{subfigure}[b]{0.27\textwidth}
\centering
\includegraphics[width=\textwidth]{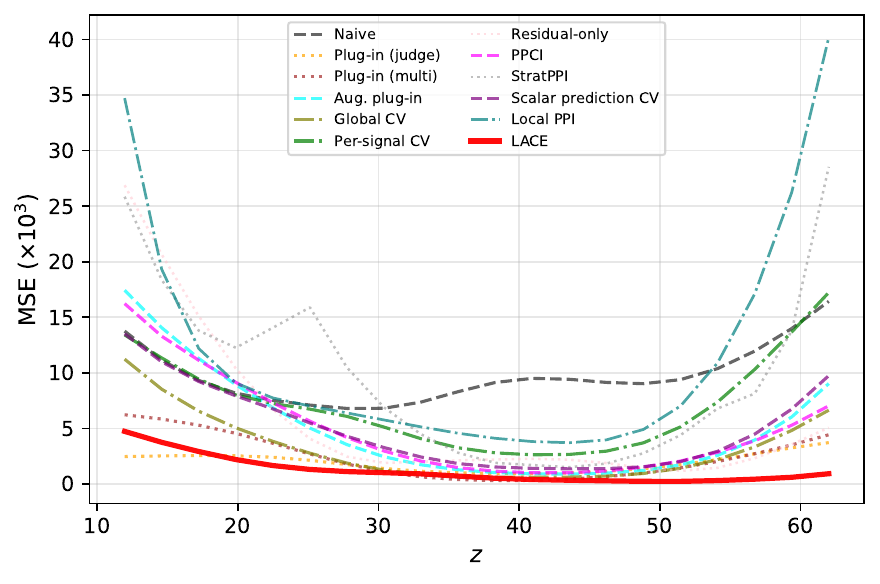}
\end{subfigure}
\hfill
\begin{subfigure}[b]{0.27\textwidth}
\centering
\includegraphics[width=\textwidth]{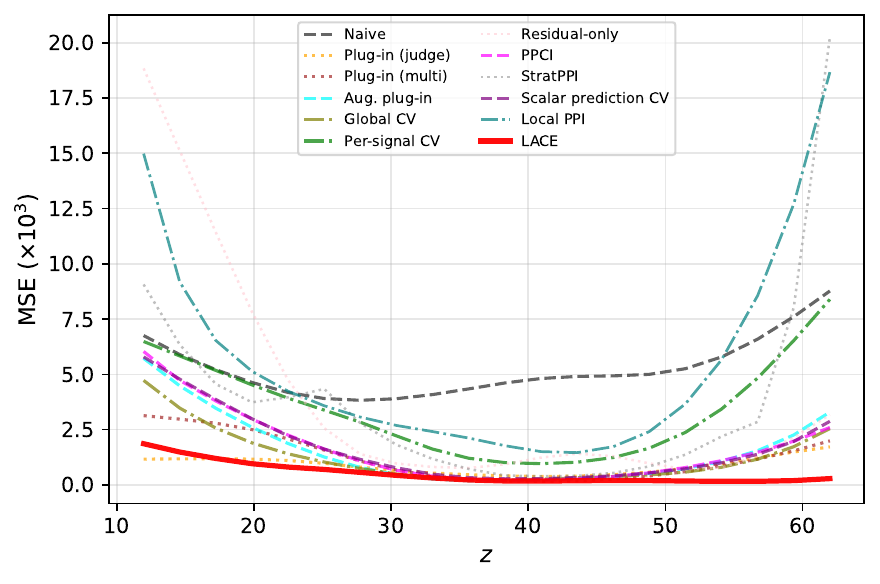}
\end{subfigure}
\hfill
\begin{subfigure}[b]{0.27\textwidth}
\centering
\includegraphics[width=\textwidth]{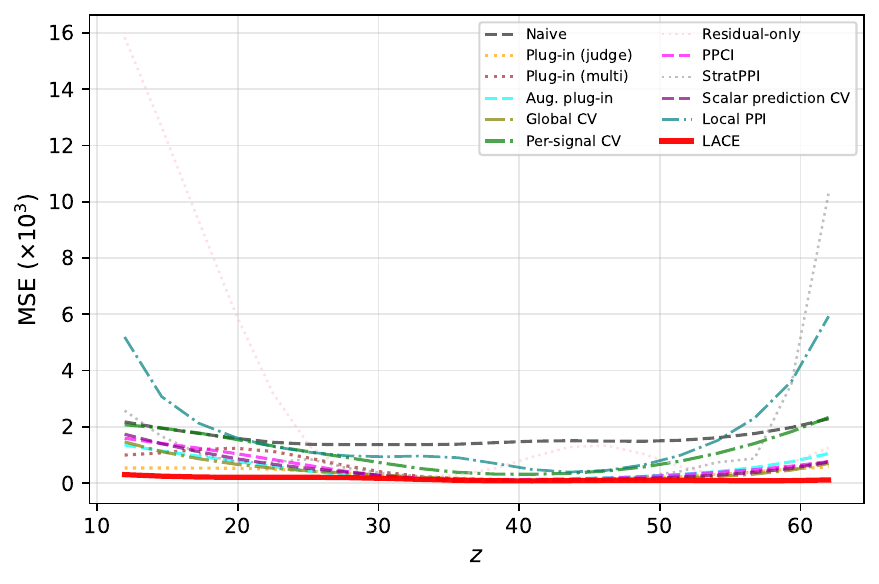}
\end{subfigure}
\par\smallskip
\begin{subfigure}[b]{0.12\textwidth}
\centering
\vspace{1.2cm}
\textbf{TruthfulQA}
\end{subfigure}
\hfill
\begin{subfigure}[b]{0.27\textwidth}
\centering
\includegraphics[width=\textwidth]{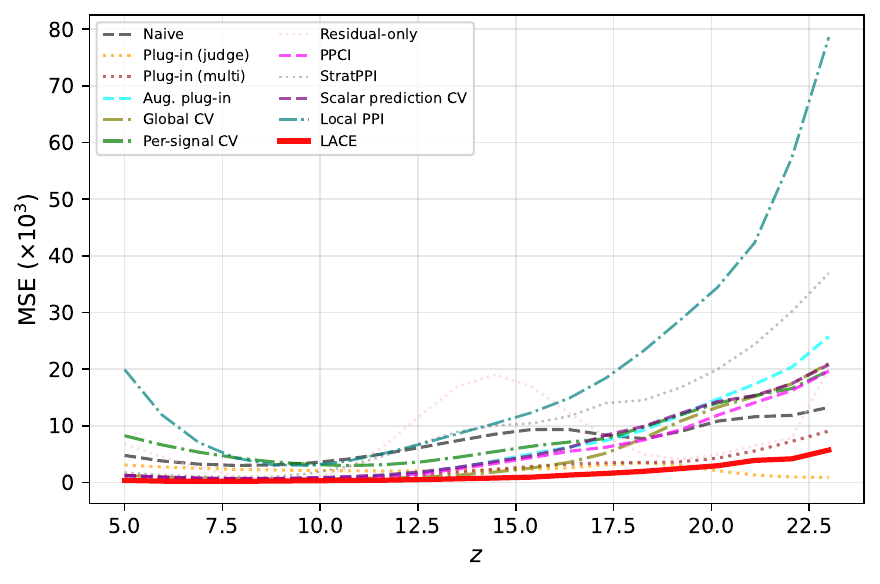}
\end{subfigure}
\hfill
\begin{subfigure}[b]{0.27\textwidth}
\centering
\includegraphics[width=\textwidth]{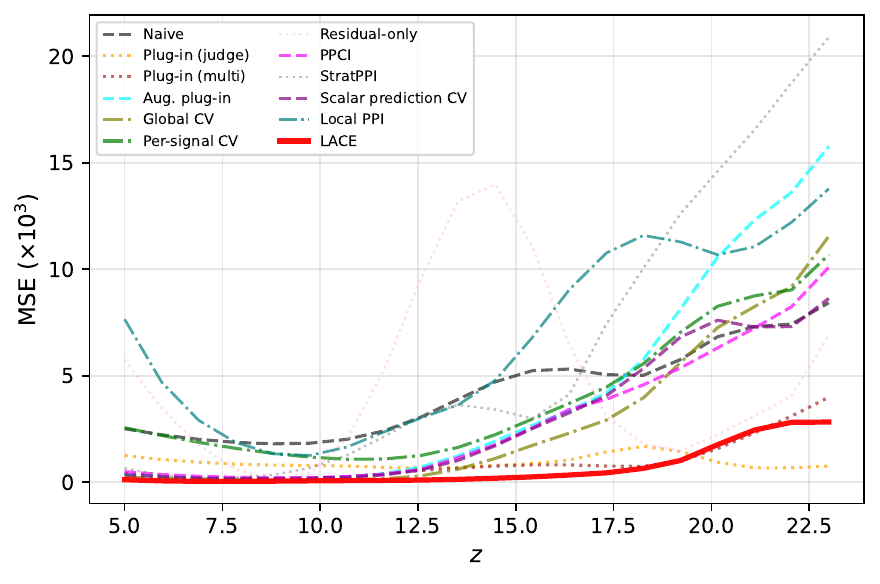}
\end{subfigure}
\hfill
\begin{subfigure}[b]{0.27\textwidth}
\centering
\includegraphics[width=\textwidth]{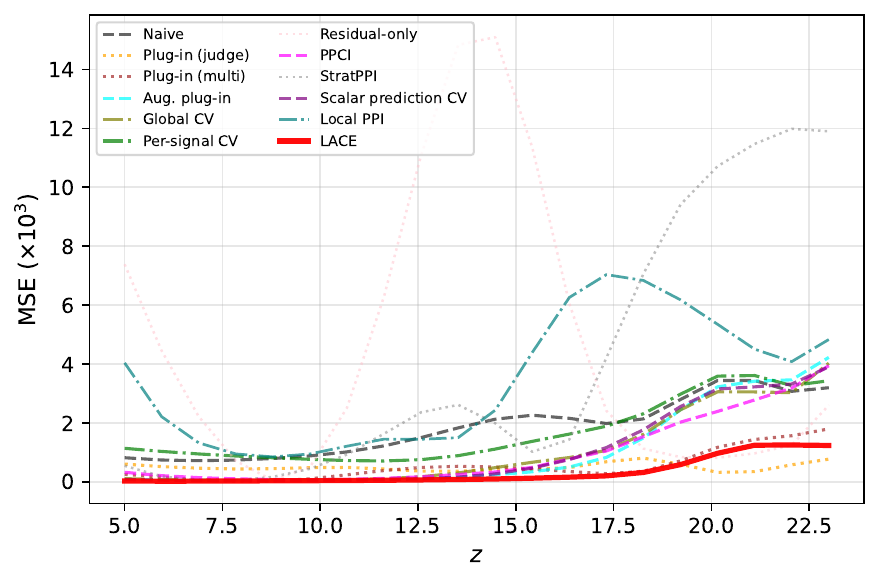}
\end{subfigure}
\par\smallskip
\begin{subfigure}[b]{0.12\textwidth}
\centering
\vspace{1.2cm}
\textbf{GSM8K}
\end{subfigure}
\hfill
\begin{subfigure}[b]{0.27\textwidth}
\centering
\includegraphics[width=\textwidth]{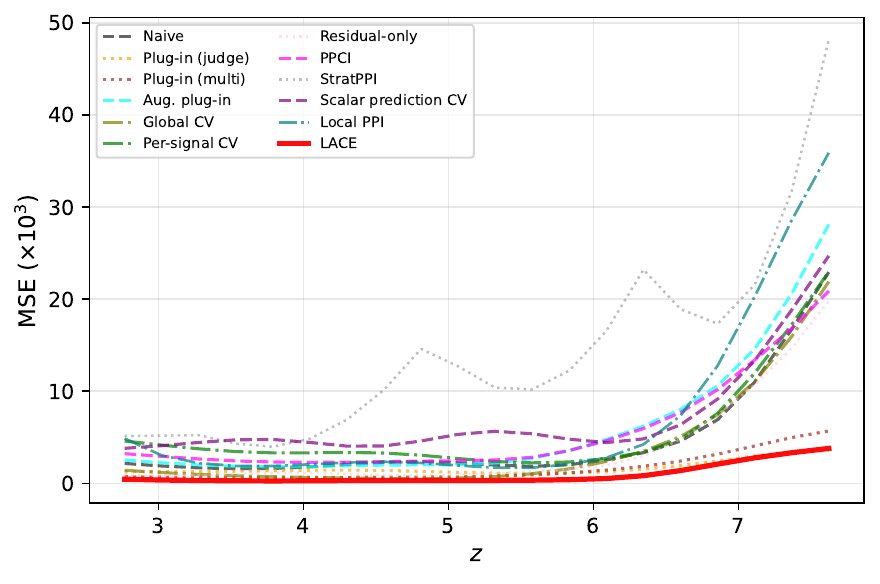}
\end{subfigure}
\hfill
\begin{subfigure}[b]{0.27\textwidth}
\centering
\includegraphics[width=\textwidth]{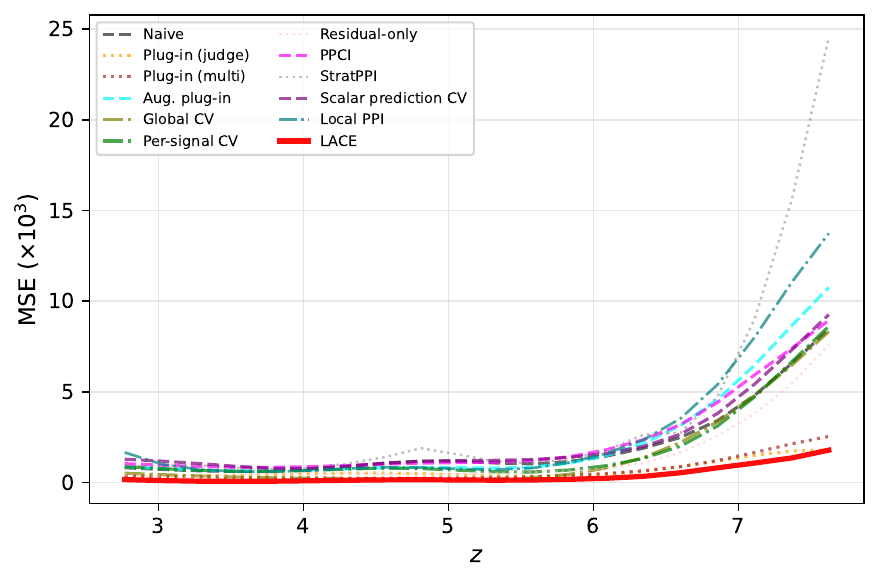}
\end{subfigure}
\hfill
\begin{subfigure}[b]{0.27\textwidth}
\centering
\includegraphics[width=\textwidth]{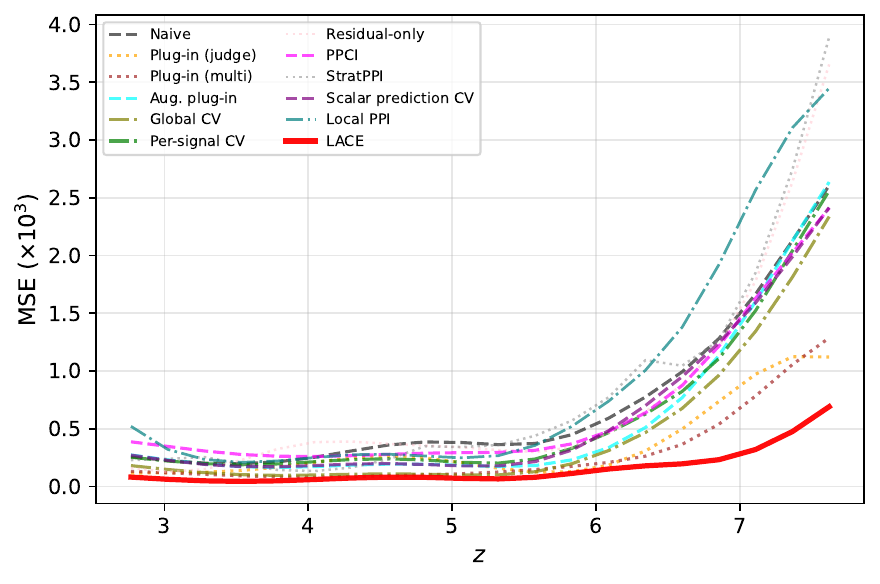}
\end{subfigure}
\par\smallskip
\begin{subfigure}[b]{0.12\textwidth}
\centering
\vspace{1.2cm}
\textbf{ARC}
\end{subfigure}
\hfill
\begin{subfigure}[b]{0.27\textwidth}
\centering
\includegraphics[width=\textwidth]{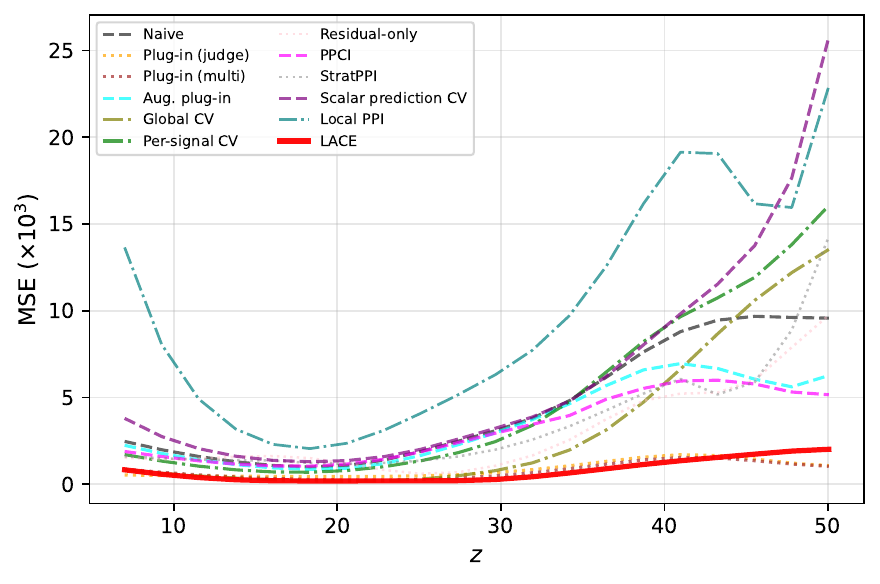}
\end{subfigure}
\hfill
\begin{subfigure}[b]{0.27\textwidth}
\centering
\includegraphics[width=\textwidth]{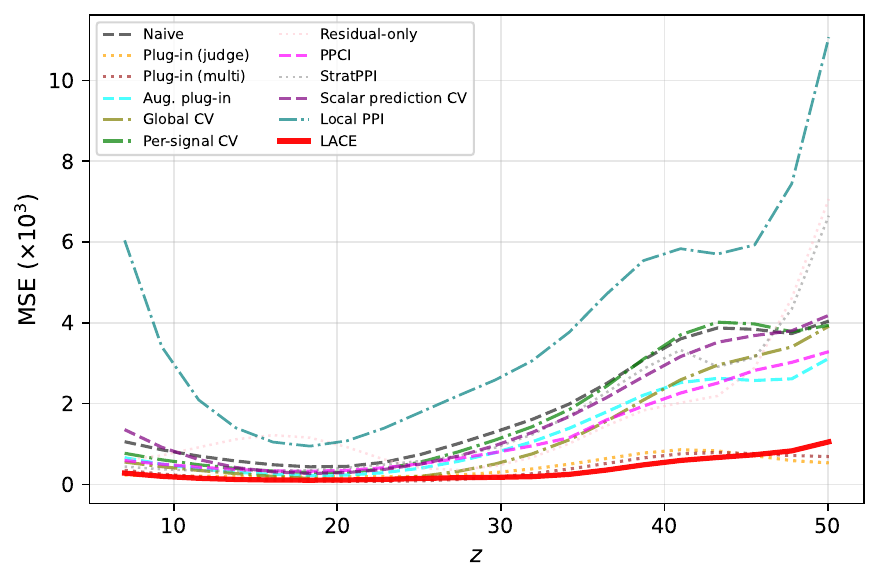}
\end{subfigure}
\hfill
\begin{subfigure}[b]{0.27\textwidth}
\centering
\includegraphics[width=\textwidth]{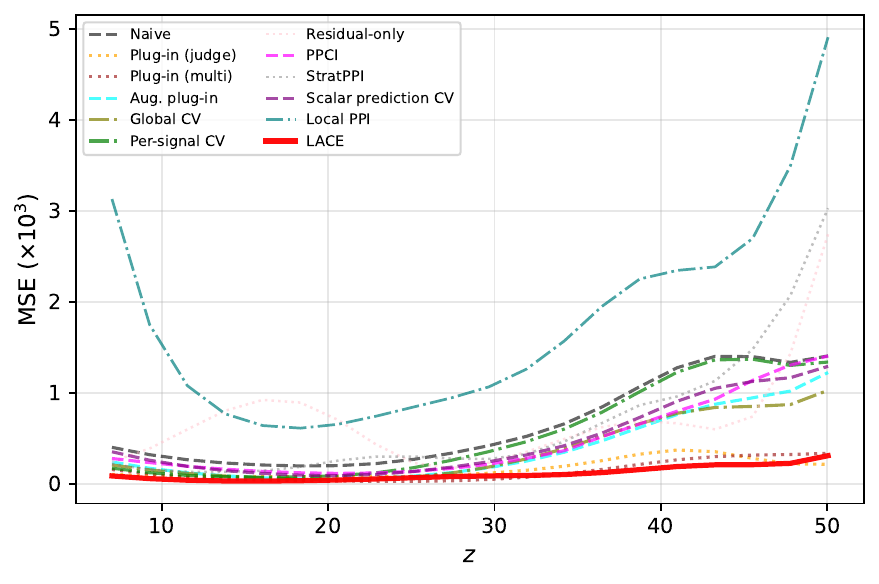}
\end{subfigure}
\caption{Pointwise MSE ($\times 10^3$) for Ministral 3B. Each row: one dataset (label on left), columns: $n=50, 100, 200$.}
\label{fig:mse-ministral-3b}
\end{figure}

\begin{figure}[htbp]
\centering
\begin{subfigure}[b]{0.12\textwidth}
\centering
\vspace{1.2cm}
\textbf{MATH-500}
\end{subfigure}
\hfill
\begin{subfigure}[b]{0.27\textwidth}
\centering
\includegraphics[width=\textwidth]{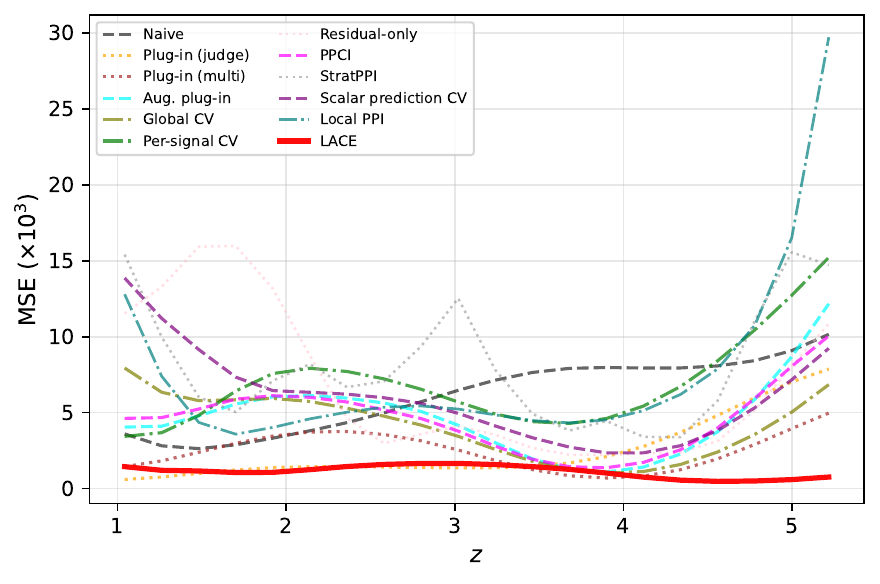}
\end{subfigure}
\hfill
\begin{subfigure}[b]{0.27\textwidth}
\centering
\includegraphics[width=\textwidth]{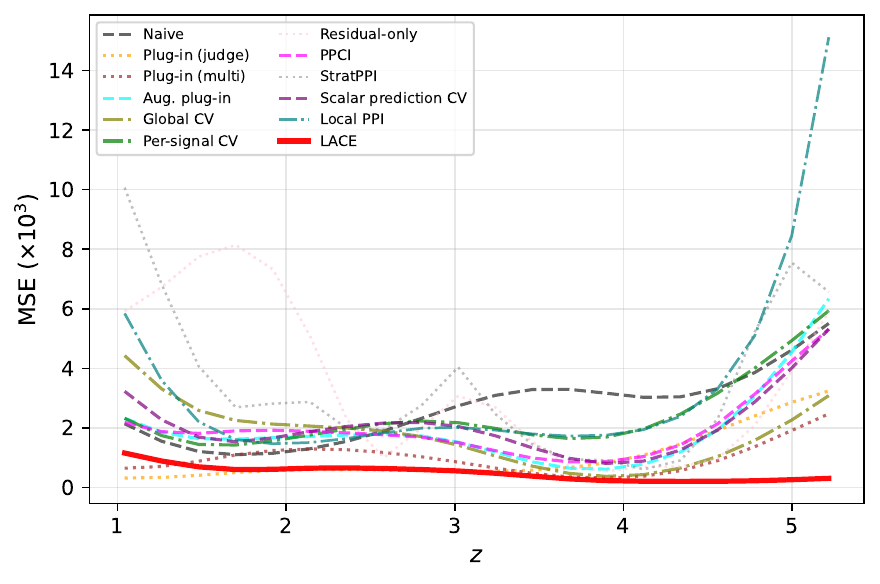}
\end{subfigure}
\hfill
\begin{subfigure}[b]{0.27\textwidth}
\centering
\includegraphics[width=\textwidth]{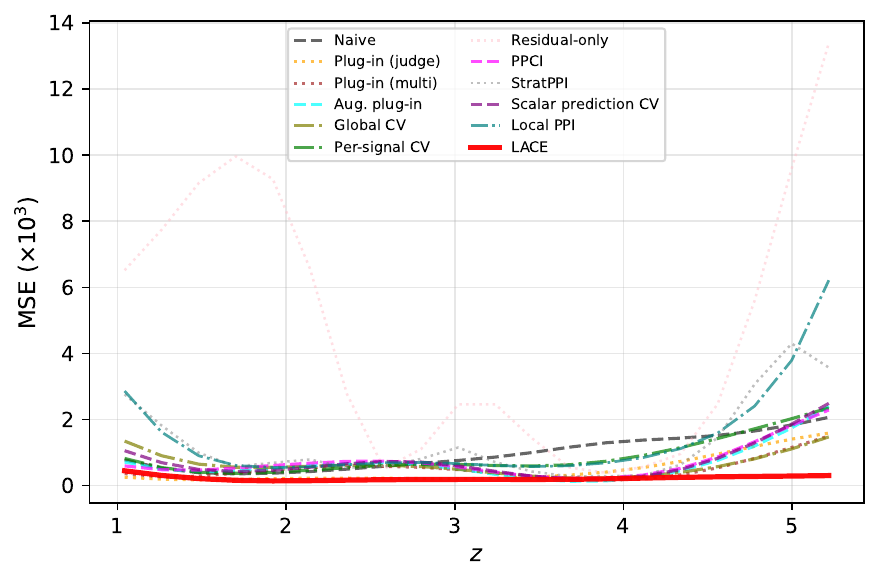}
\end{subfigure}
\par\smallskip
\begin{subfigure}[b]{0.12\textwidth}
\centering
\vspace{1.2cm}
\textbf{ScienceQA}
\end{subfigure}
\hfill
\begin{subfigure}[b]{0.27\textwidth}
\centering
\includegraphics[width=\textwidth]{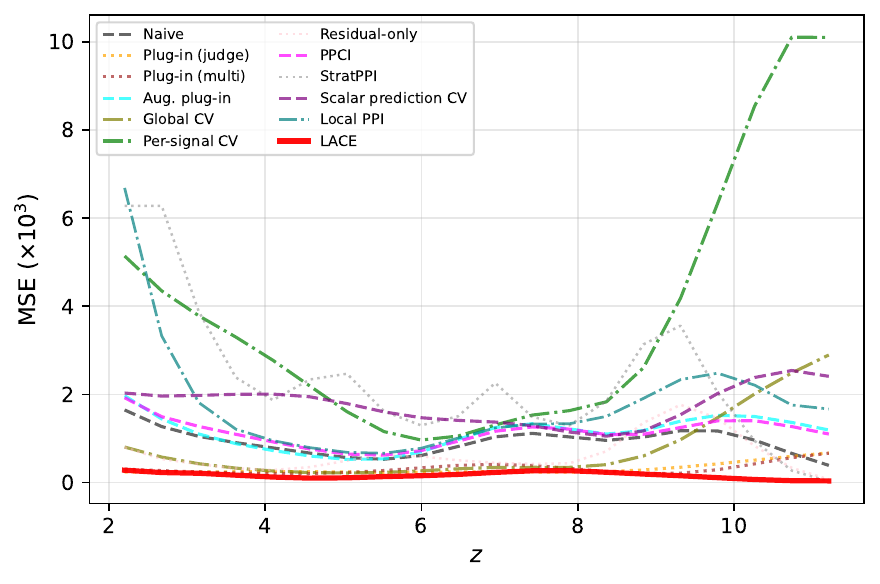}
\end{subfigure}
\hfill
\begin{subfigure}[b]{0.27\textwidth}
\centering
\includegraphics[width=\textwidth]{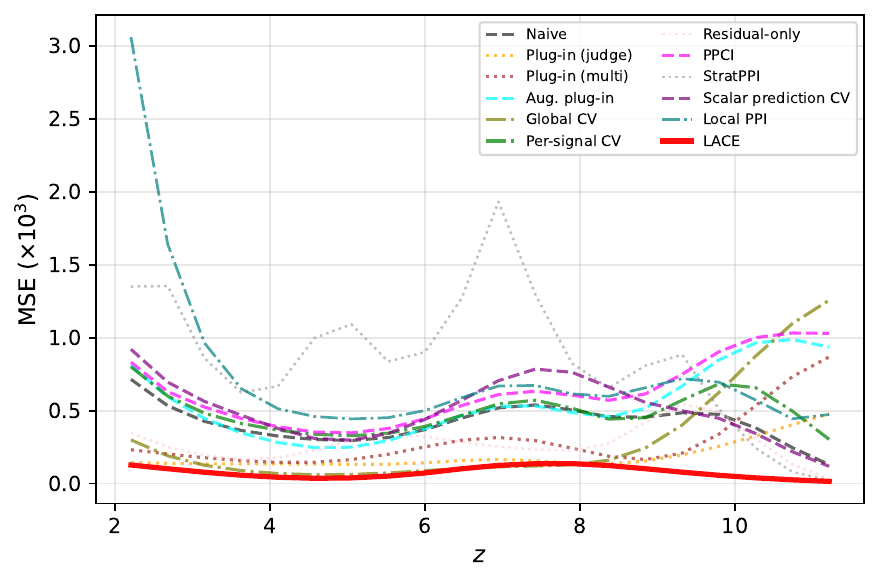}
\end{subfigure}
\hfill
\begin{subfigure}[b]{0.27\textwidth}
\centering
\includegraphics[width=\textwidth]{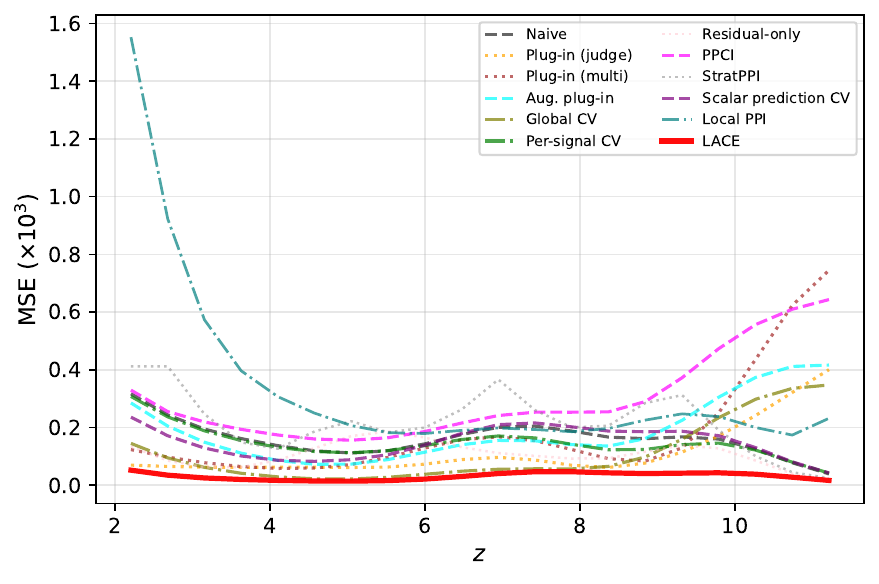}
\end{subfigure}
\par\smallskip
\begin{subfigure}[b]{0.12\textwidth}
\centering
\vspace{1.2cm}
\textbf{MMLU}
\end{subfigure}
\hfill
\begin{subfigure}[b]{0.27\textwidth}
\centering
\includegraphics[width=\textwidth]{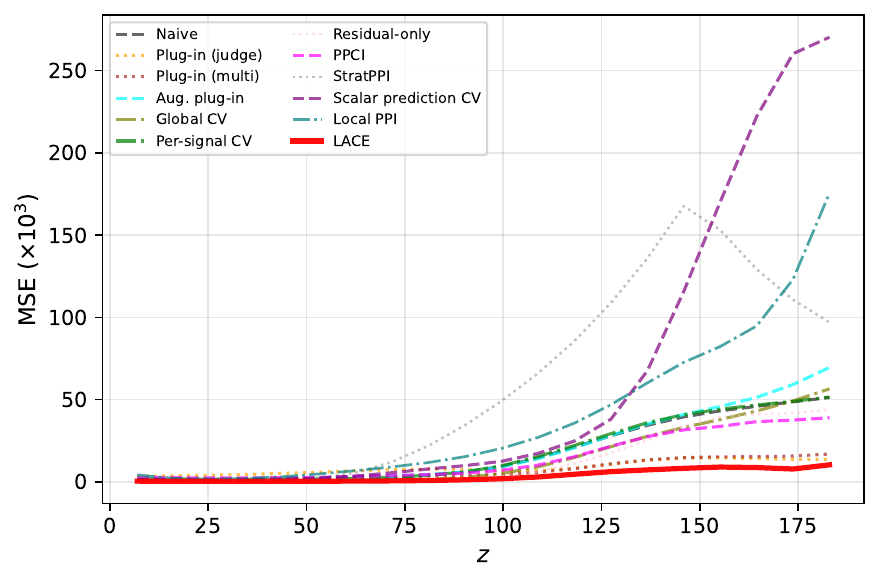}
\end{subfigure}
\hfill
\begin{subfigure}[b]{0.27\textwidth}
\centering
\includegraphics[width=\textwidth]{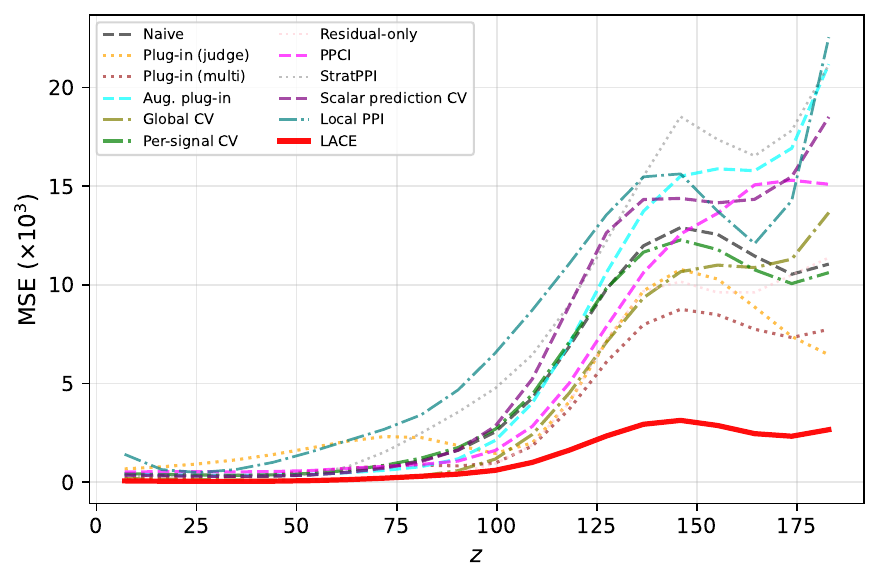}
\end{subfigure}
\hfill
\begin{subfigure}[b]{0.27\textwidth}
\centering
\includegraphics[width=\textwidth]{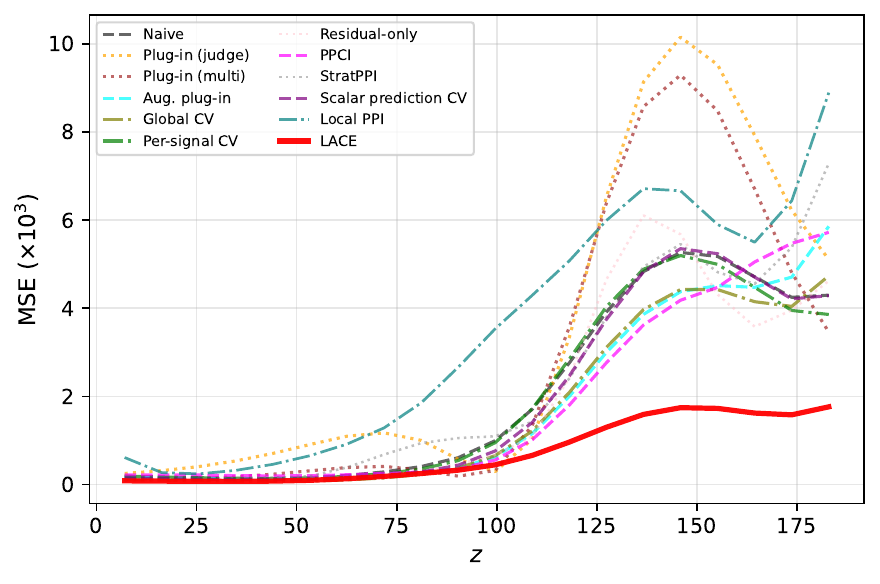}
\end{subfigure}
\par\smallskip
\begin{subfigure}[b]{0.12\textwidth}
\centering
\vspace{1.2cm}
\textbf{WinoGrande}
\end{subfigure}
\hfill
\begin{subfigure}[b]{0.27\textwidth}
\centering
\includegraphics[width=\textwidth]{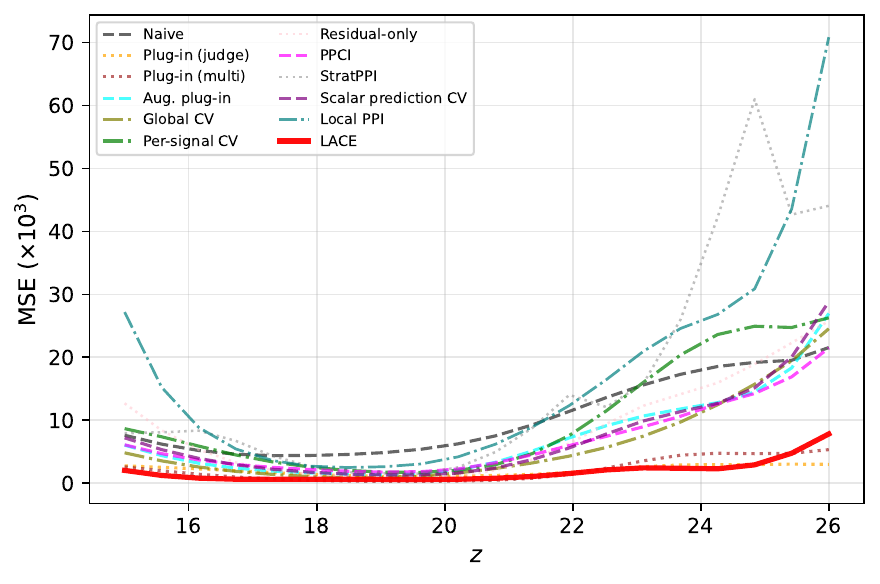}
\end{subfigure}
\hfill
\begin{subfigure}[b]{0.27\textwidth}
\centering
\includegraphics[width=\textwidth]{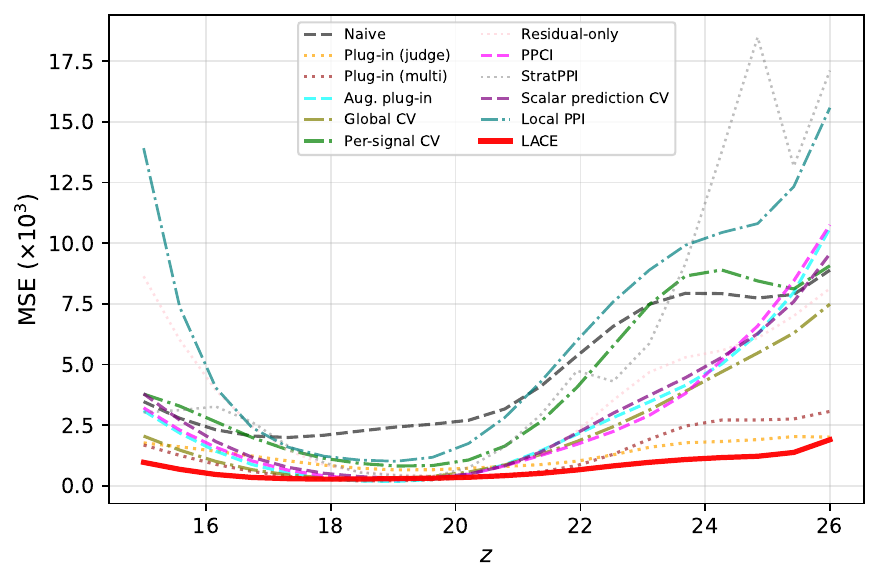}
\end{subfigure}
\hfill
\begin{subfigure}[b]{0.27\textwidth}
\centering
\includegraphics[width=\textwidth]{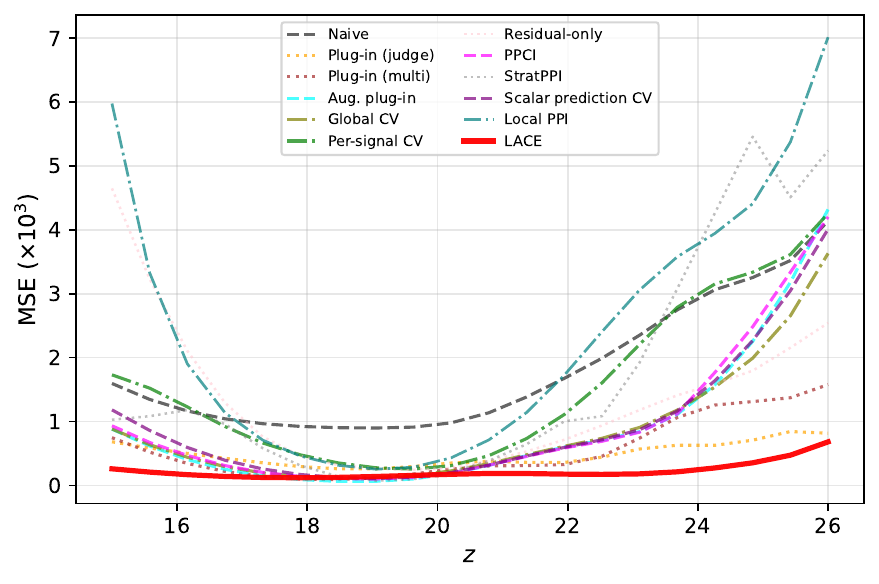}
\end{subfigure}
\par\smallskip
\begin{subfigure}[b]{0.12\textwidth}
\centering
\vspace{1.2cm}
\textbf{HellaSwag}
\end{subfigure}
\hfill
\begin{subfigure}[b]{0.27\textwidth}
\centering
\includegraphics[width=\textwidth]{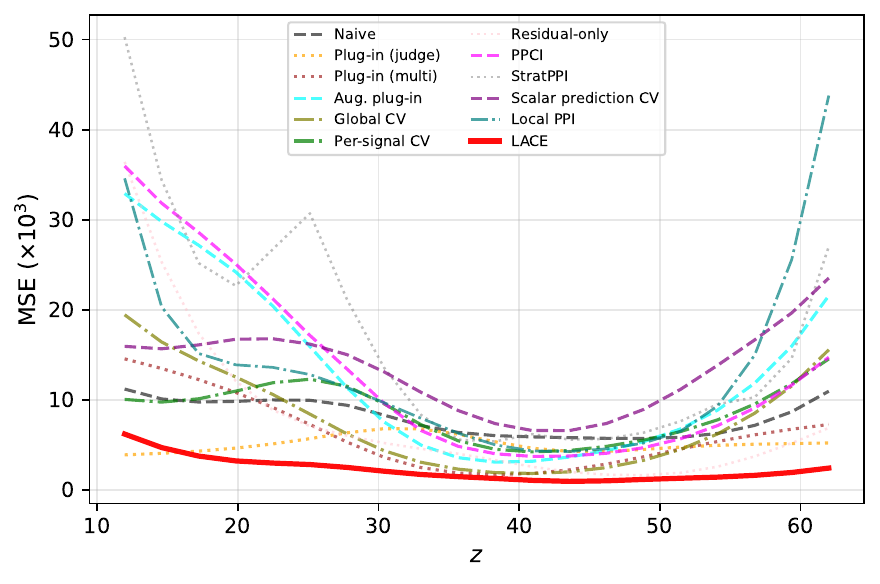}
\end{subfigure}
\hfill
\begin{subfigure}[b]{0.27\textwidth}
\centering
\includegraphics[width=\textwidth]{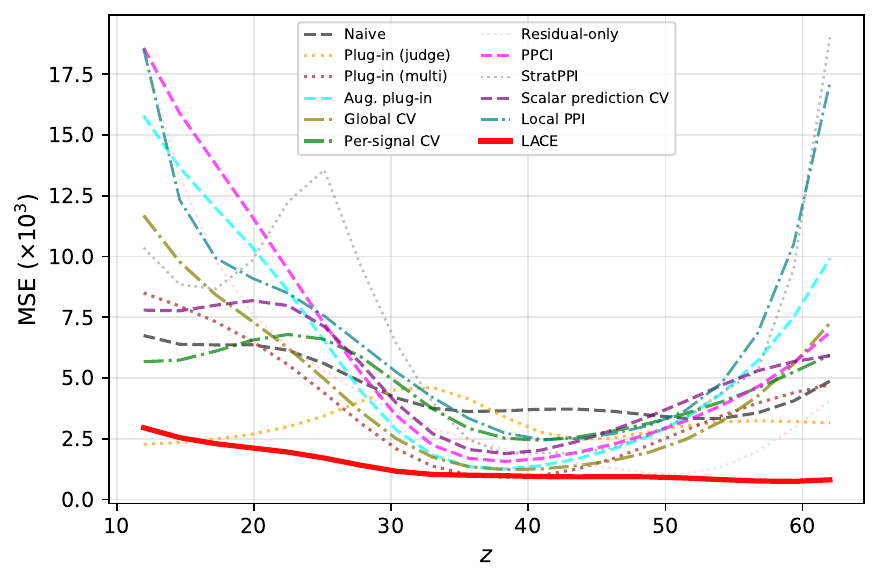}
\end{subfigure}
\hfill
\begin{subfigure}[b]{0.27\textwidth}
\centering
\includegraphics[width=\textwidth]{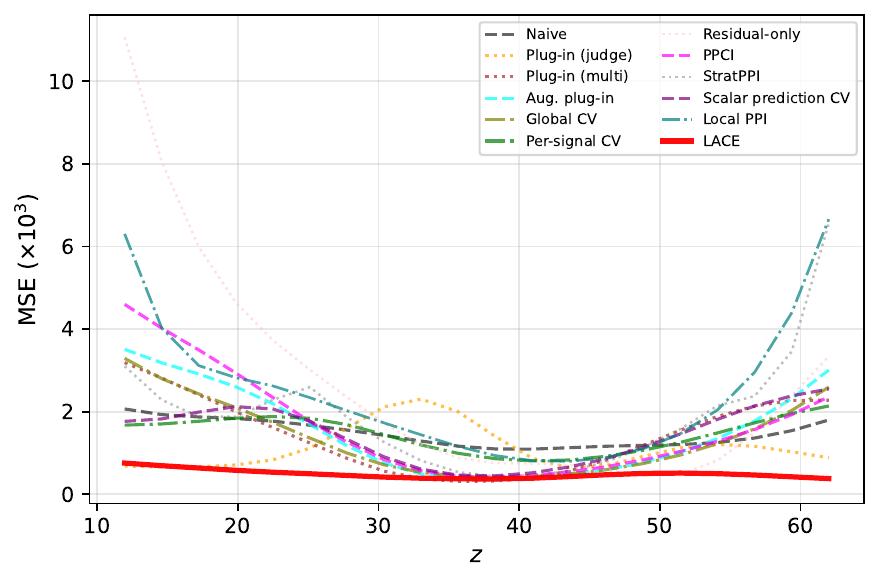}
\end{subfigure}
\par\smallskip
\begin{subfigure}[b]{0.12\textwidth}
\centering
\vspace{1.2cm}
\textbf{TruthfulQA}
\end{subfigure}
\hfill
\begin{subfigure}[b]{0.27\textwidth}
\centering
\includegraphics[width=\textwidth]{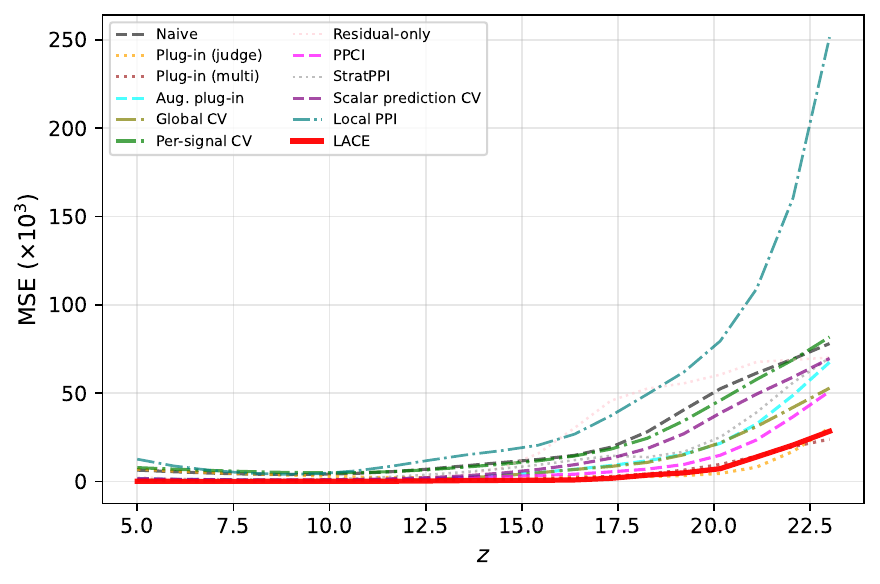}
\end{subfigure}
\hfill
\begin{subfigure}[b]{0.27\textwidth}
\centering
\includegraphics[width=\textwidth]{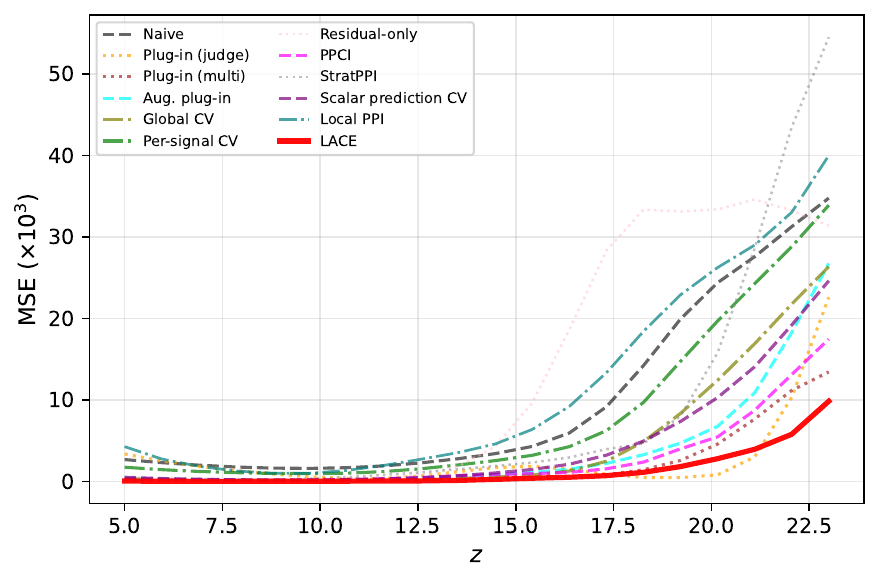}
\end{subfigure}
\hfill
\begin{subfigure}[b]{0.27\textwidth}
\centering
\includegraphics[width=\textwidth]{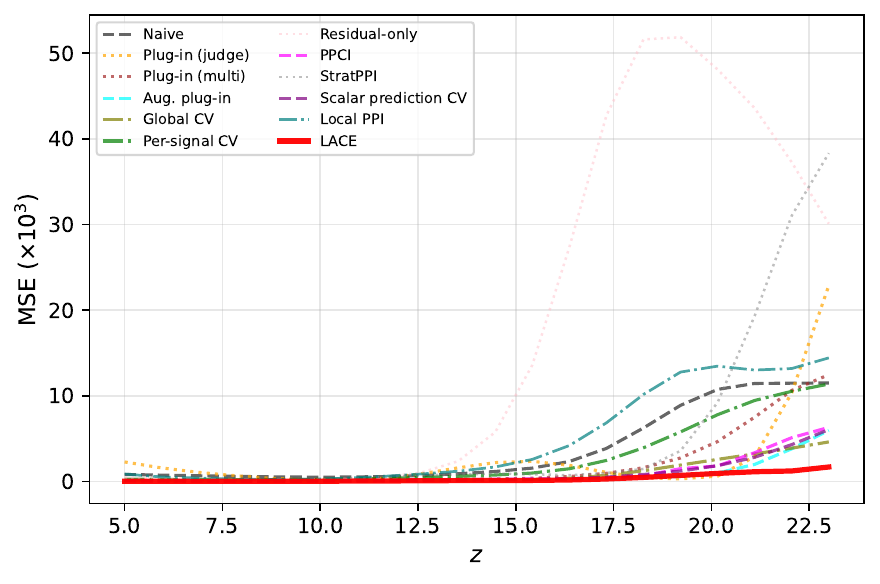}
\end{subfigure}
\par\smallskip
\begin{subfigure}[b]{0.12\textwidth}
\centering
\vspace{1.2cm}
\textbf{GSM8K}
\end{subfigure}
\hfill
\begin{subfigure}[b]{0.27\textwidth}
\centering
\includegraphics[width=\textwidth]{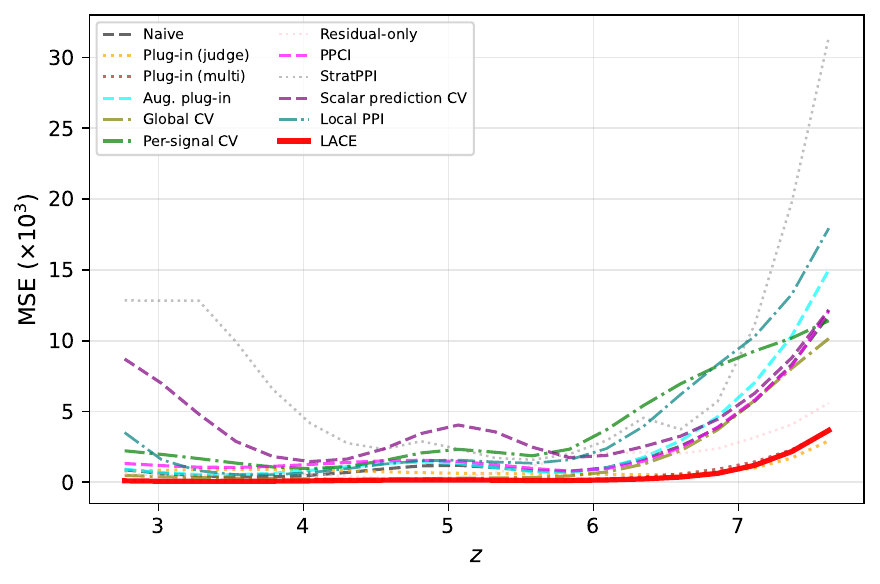}
\end{subfigure}
\hfill
\begin{subfigure}[b]{0.27\textwidth}
\centering
\includegraphics[width=\textwidth]{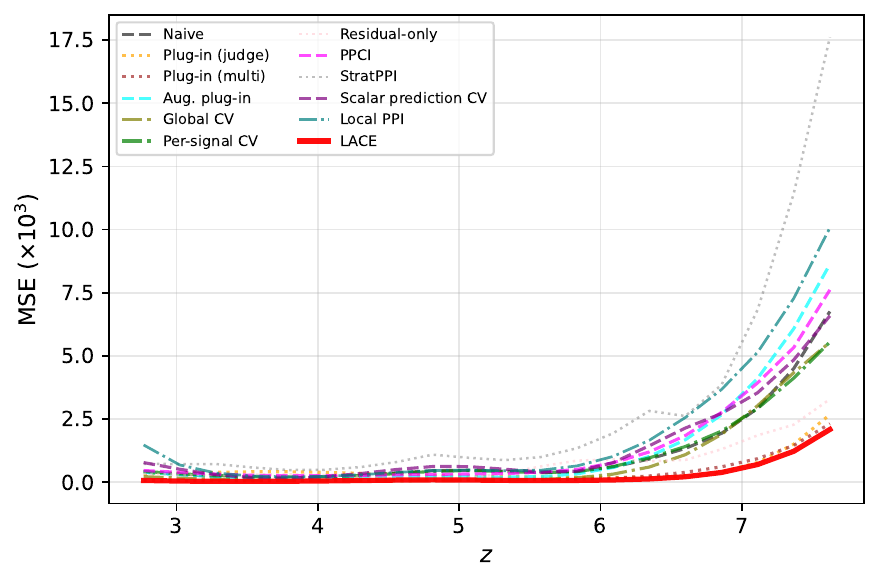}
\end{subfigure}
\hfill
\begin{subfigure}[b]{0.27\textwidth}
\centering
\includegraphics[width=\textwidth]{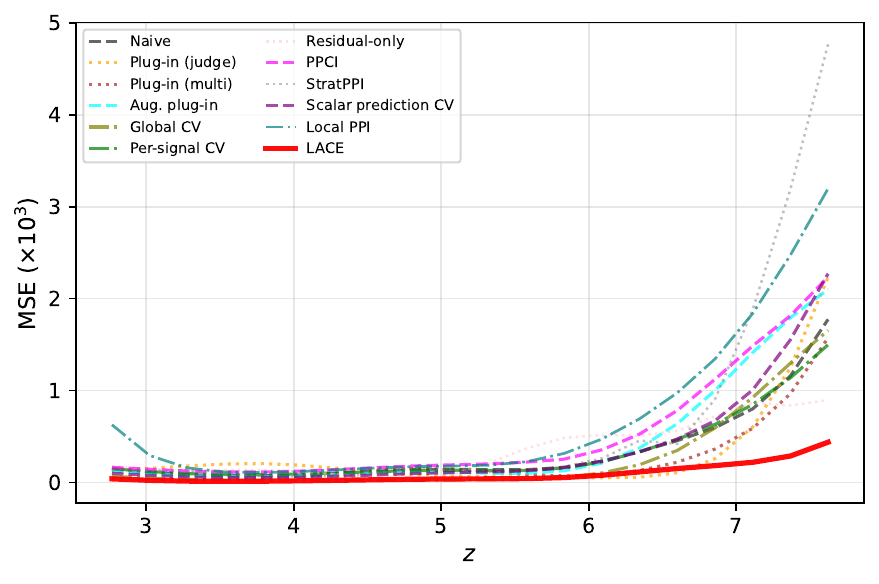}
\end{subfigure}
\par\smallskip
\begin{subfigure}[b]{0.12\textwidth}
\centering
\vspace{1.2cm}
\textbf{ARC}
\end{subfigure}
\hfill
\begin{subfigure}[b]{0.27\textwidth}
\centering
\includegraphics[width=\textwidth]{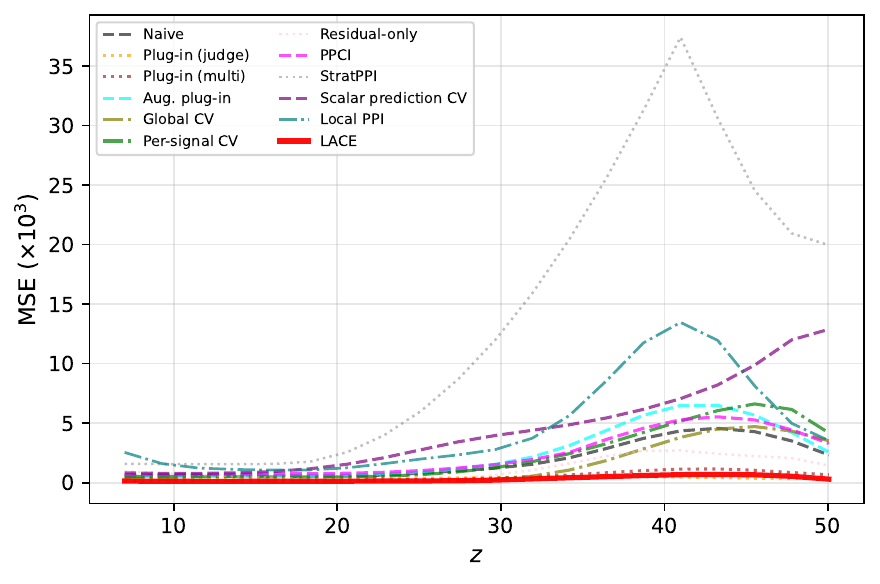}
\end{subfigure}
\hfill
\begin{subfigure}[b]{0.27\textwidth}
\centering
\includegraphics[width=\textwidth]{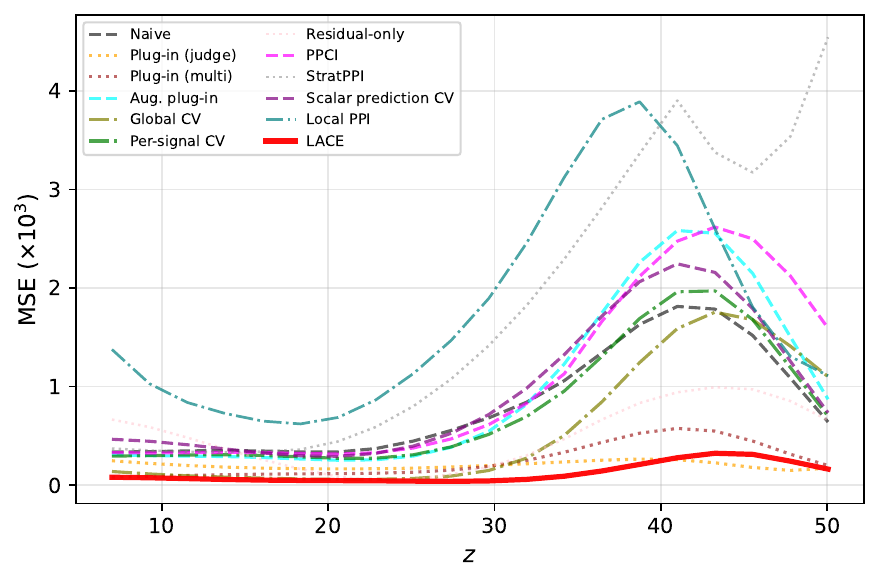}
\end{subfigure}
\hfill
\begin{subfigure}[b]{0.27\textwidth}
\centering
\includegraphics[width=\textwidth]{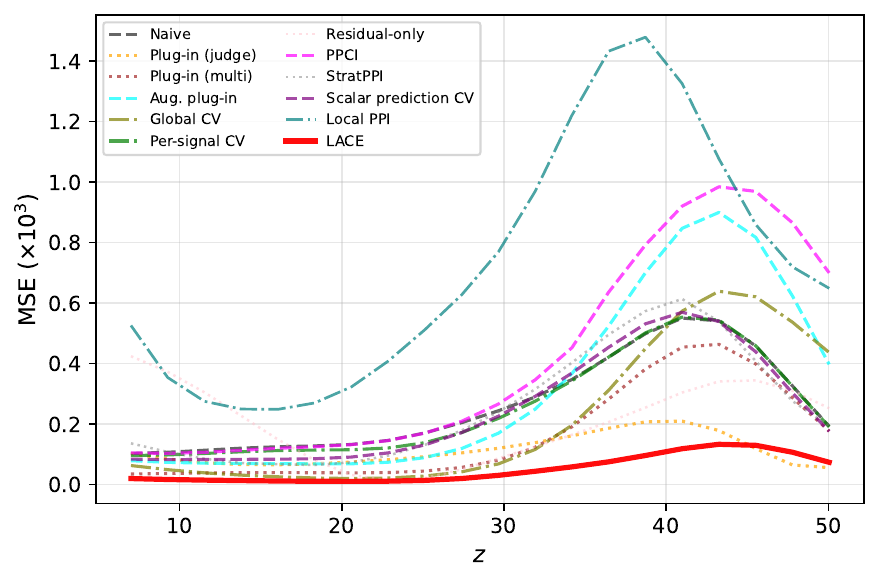}
\end{subfigure}
\caption{Pointwise MSE ($\times 10^3$) for Qwen3 32B. Each row: one dataset (label on left), columns: $n=50, 100, 200$.}
\label{fig:mse-qwen3-32b}
\end{figure}

\clearpage
\section{Additional Implementation Details}
\label{sec:implementation-details}
For continuous $Z$, the naive and \method{} estimators use the same
kernel so the comparison isolates the contribution of auxiliary signals. In
the real-data experiments, ordered discrete axes are jittered as described
below and all eight benchmarks use Gaussian smoothing. Discrete simulation
profiles use within-group means. Table~\ref{tab:ablation:ordinal} separately
reports triangular ordinal and exact within-group alternatives for the three
ordered-profile benchmarks.



The main implementation always uses $h=h_0$, $b=h$, and
$\lambda=0.3$; it neither enlarges the nuisance bandwidth nor changes
$\lambda$ adaptively. Bandwidth/ridge sensitivity and nested cross-validation
are reported as separate analyses below.

\paragraph{Continuous relaxation of discrete profiling axes.}
For benchmarks with naturally ordered discrete axes (MATH-500 difficulty level $g\in\{1,\ldots,5\}$,
ScienceQA grade $g\in\{2,\ldots,12\}$, GSM8K solution steps $g\in\{3,\ldots,10\}$), we construct
a continuous profiling variable $Z_i = g_i + U_i$ where $U_i\sim\mathrm{Uniform}(-0.4,\,0.4)$
independently across items. This preserves the ordering and group structure while enabling
Gaussian kernel smoothing with a data-driven bandwidth
$h = 1.5 \times 1.06\,\hat\sigma_Z\, M^{-1/5}$.
For benchmarks without a natural ordinal axis (MMLU, WinoGrande, HellaSwag, TruthfulQA, ARC),
we use token length (of the question, context, or sentence) as a continuous profiling variable
directly.

\subsection{\texorpdfstring{Robustness analyses}{Robustness analyses}}
\label{app:robustness}

\paragraph{Signal contributions.}
Leave-one-family-out ablations use all 24 cells and three budgets. Removing
the three pairwise-anchor signals retains only 42.1\%, 47.0\%, and 52.1\% of
full-\method{} efficiency at $n_{\mathrm{lab}}=50,100,200$, respectively,
and this family is the largest contributor in 23/24 cells at $n=100$.
Removing the judge mean, self-confidence, or judge-disagreement family
retains 74.0--89.3\% across budgets, showing that the gain is distributed
across signal families but is driven most strongly by pairwise comparisons.
Across all 432 individual-signal removals, the 5th, median, and 95th
percentiles of the percentage change in RE are $-56.41\%$, $-17.48\%$, and
$4.11\%$, respectively.

\paragraph{Ridge and bandwidth sensitivity.}
We evaluate the full $5\times5$ grid
$\lambda\in\{0.03,0.1,0.3,1,3\}$ and
$h/h_0\in\{0.5,0.75,1,1.5,2\}$ with $b=h$ on 40 matched splits;
Table~\ref{tab:sensitivity-grid} summarizes the grid. This grid is a stress
test, not the procedure used to obtain the reported results, and varying $h$
is not a pure robustness perturbation: a narrower bandwidth both changes the
smoothed target (the resolution of the estimand) and reduces the local
effective sample size. At the shared default, no cell has RE below one at any
budget. Considering every configuration, 9/24, 2/24, and 0/24 cells contain
any sub-unit RE at $n=50,100,200$; every failure occurs at a bandwidth
narrowed to $0.5h_0$ or $0.75h_0$ with $n\le100$, where weaker ridge values
amplify the deterioration. At the manuscript bandwidth $h_0$, every tested
$\lambda$ achieves RE above one in all 72 benchmark--model--budget cells
(360 configuration--cell comparisons; minimum RE 1.419), so the reported
improvement does not depend on the particular default $\lambda=0.3$; we do
not claim uniform improvement across arbitrary bandwidths or target
resolutions. For each ordered-profile cell, we define the sparse region
as the five evaluation points in the lowest quartile of full-pool kernel
effective sample size,
$n_{\mathrm{eff}}(z)=\{\sum_{i\in T}w_{i,T,h_0}(z)^2\}^{-1}$; this definition
uses no gold outcomes. Restricting MSE to these points, default median RE is
4.064, 4.787, and 4.313, with 0/9 default cells below one at each budget.

\begin{table}[htbp]
\centering
\footnotesize
\caption{Ridge--bandwidth stress grid
($\lambda\in\{0.03,0.1,0.3,1,3\}$, $h/h_0\in\{0.5,0.75,1,1.5,2\}$, 40 matched
splits). Each row summarizes, across the 24 cells, the cellwise minimum RE
over the entire 25-configuration grid. For each bandwidth, RE is
evaluated against the corresponding realized full-pool gold kernel profile,
so changing the bandwidth changes the target resolution.}
\label{tab:sensitivity-grid}
\textit{Panel A: Full-profile grid minima across 24 cells.}\\[-2pt]
\begin{tabular}{lccccc}
\toprule
Budget & Worst & 10th percentile & Median & Default below 1 & Any grid value below 1 \\
\midrule
$n{=}50$ & 0.037 & 0.403 & 1.076 & 0/24 & 9/24 \\
$n{=}100$ & 0.490 & 1.149 & 1.752 & 0/24 & 2/24 \\
$n{=}200$ & 1.394 & 1.544 & 2.043 & 0/24 & 0/24 \\
\bottomrule
\end{tabular}

\vspace{4pt}
\textit{Panel B: Sparse-region summary across nine ordered-profile cells.}\\[-2pt]
\begin{tabular}{lccc}
\toprule
Budget & Default median & Grid minimum & Default below 1 \\
\midrule
$n{=}50$ & 4.064 & 0.160 & 0/9 \\
$n{=}100$ & 4.787 & 0.615 & 0/9 \\
$n{=}200$ & 4.313 & 1.331 & 0/9 \\
\bottomrule
\end{tabular}
\end{table}

\paragraph{Within-$L$ three-fold cross-validation.}
As a separate robustness analysis of hyperparameter selection, we use
three-fold cross-validation to select each method's hyperparameter entirely
within every sampled labeled set $L$. For each benchmark--model cell, label
budget, and outer split, we partition $L$ into three approximately
outcome-balanced folds. For each candidate value and held-out fold, the method
is fitted using gold outcomes only from the other two folds; full-pool $Z$ and
auxiliary signals remain available only as unlabeled covariates. Candidates
are scored by the kernel-mass-weighted squared distance between the fitted
profile and the $h_0$-smoothed held-out gold profile, averaged across the
three folds. The selected estimator is then refit on all of $L$.
Each method receives an equal-size five-point grid for its applicable
parameter: \method{}, Global CV, Per-signal CV, and Residual-only select the
ridge penalty $\lambda\in\{0.03,0.1,0.3,1,3\}$; Plug-in (judge), Plug-in
(multi), Aug.\ plug-in, and Scalar prediction CV select the logistic
regularization $C\in\{0.03,0.1,0.3,1,3\}$; PPCI selects the multiplier
$m\in\{0.1,0.3,1,3,10\}$ applied to its L-curve-selected Tikhonov parameter,
with predictor $C=1$ fixed; StratPPI selects the number of strata
$G\in\{3,4,5,7,10\}$ with predictor $C=1$ fixed; Local PPI, which has no
internal regularization parameter, selects its local-linear bandwidth
multiplier in $\{0.5,0.75,1,1.5,2\}$ of $h_0$; and Naive varies no parameter.
The profile bandwidth remains $h_0$, \method{} uses $b=h$, and no gold
outcomes outside $L$ enter selection.
Table~\ref{tab:cv-robustness} reports the CV results; the corresponding
fixed-hyperparameter values are the rows of Table~\ref{tab:real:aggregate},
and every method stays close to its fixed-hyperparameter value. Across
$B=100$ matched outer splits, \method{} has geometric-mean RE 5.266, 5.646,
and 5.415 by budget (5.440 overall) and is best in all 72 cells. Thus the
main conclusion is unchanged when every method receives the same within-$L$
CV protocol.

\begin{table}[htbp]
\centering
\scriptsize
\setlength{\tabcolsep}{3pt}
\caption{Within-$L$ three-fold cross-validation. Entries are
geometric-mean RE $[95\%\ \mathrm{CI}]$ across the 24 benchmark--model cells;
intervals use the $B=100$ matched label splits, and the Overall column
treats the three nested budgets within each cell jointly, with the
label-split permutation as the joint resampling unit, as in
Table~\ref{tab:real:aggregate}. The fixed-hyperparameter
counterparts are the corresponding rows of
Table~\ref{tab:real:aggregate}. Target: realized full-pool gold kernel
profile; randomness: matched $L$-draws conditional on $T$.}
\label{tab:cv-robustness}
\resizebox{\textwidth}{!}{%
\begin{tabular}{lcccc}
\toprule
Method & $n{=}50$ & $n{=}100$ & $n{=}200$ & Overall \\
\midrule
Naive & $1.000\,[1.000,1.000]$ & $1.000\,[1.000,1.000]$ & $1.000\,[1.000,1.000]$ & $1.000\,[1.000,1.000]$ \\
Plug-in (judge) & $2.960\,[2.780,3.150]$ & $2.650\,[2.510,2.800]$ & $2.030\,[1.930,2.140]$ & $2.516\,[2.392,2.646]$ \\
Plug-in (multi) & $3.050\,[2.860,3.250]$ & $2.720\,[2.580,2.870]$ & $1.950\,[1.840,2.070]$ & $2.529\,[2.419,2.644]$ \\
Aug.\ plug-in & $1.030\,[0.981,1.082]$ & $1.190\,[1.136,1.247]$ & $1.460\,[1.381,1.543]$ & $1.214\,[1.174,1.256]$ \\
Global CV & $1.370\,[1.305,1.438]$ & $1.490\,[1.421,1.562]$ & $1.617\,[1.555,1.681]$ & $1.489\,[1.438,1.542]$ \\
Per-signal CV & $0.930\,[0.887,0.975]$ & $1.130\,[1.104,1.157]$ & $1.205\,[1.183,1.228]$ & $1.083\,[1.059,1.107]$ \\
Residual-only & $1.205\,[1.146,1.267]$ & $1.080\,[1.034,1.128]$ & $0.653\,[0.627,0.680]$ & $0.947\,[0.919,0.976]$ \\
Scalar prediction CV & $0.815\,[0.737,0.901]$ & $1.245\,[1.199,1.293]$ & $1.485\,[1.417,1.556]$ & $1.146\,[1.098,1.196]$ \\
Local PPI & $0.602\,[0.575,0.630]$ & $0.690\,[0.672,0.709]$ & $0.647\,[0.629,0.665]$ & $0.646\,[0.631,0.661]$ \\
PPCI & $1.205\,[1.143,1.270]$ & $1.295\,[1.238,1.355]$ & $1.380\,[1.318,1.445]$ & $1.291\,[1.252,1.331]$ \\
StratPPI & $0.735\,[0.658,0.821]$ & $0.865\,[0.788,0.949]$ & $0.920\,[0.852,0.993]$ & $0.836\,[0.777,0.900]$ \\
\method{} & $\mathbf{5.266}\,[4.953,5.599]$ & $\mathbf{5.646}\,[5.277,6.041]$ & $\mathbf{5.415}\,[5.141,5.705]$ & $\mathbf{5.440}\,[5.208,5.682]$ \\
\bottomrule
\end{tabular}%
}
\end{table}

\paragraph{Weaker judges.}
Holding candidate outcomes, $Z$, anchors, hyperparameters, and matched splits
fixed, we replace only the judge used for judge-derived signals. Median
\method{} RE with Llama-3.1-8B is 4.090, 4.150, and 3.940 by budget, with RE
at least one in 70/72 cells; GPT-OSS-20B gives 4.563, 4.970, and 4.441, with
RE at least one in 71/72 cells. Under the same protocol, the Claude Opus 4.6
judge used in the main experiments attains 5.565, 5.840, and 5.385, with RE at
least one in all 72 cells, so the weaker judges retain most of the efficiency
gain. The cost gap is large: each candidate-item uses nine judge calls, and
under provider-reported token usage (108{,}000 calls per judge, 12{,}000
candidate-items) at public hosted prices, the judge-call cost is
USD~0.00125 per item for Llama-3.1-8B (about 3.0\% of the Opus cost) and
USD~0.00057 for GPT-OSS-20B (about 1.4\%), versus USD~0.04107 for Opus 4.6;
these figures cover the nine judge calls only and exclude candidate and
anchor response generation, which is held fixed across the comparison.

\paragraph{Jointly weaker judge and anchor.}
We also replace Claude Opus 4.6, used as both the judge and the strong anchor,
with either Llama-3.1-8B or GPT-OSS-20B, while holding candidate outcomes,
$Z$, hyperparameters, and matched label splits fixed.
Table~\ref{tab:joint-weaker-signals} reports the resulting comparison.

\begin{table}[htbp]
\centering
\small
\setlength{\tabcolsep}{3pt}
\caption{Performance with jointly weaker judge and anchor models. Entries are
overall geometric-mean RE $[95\%\ \mathrm{CI}]$ across 72 cells; as in the
Overall column of Table~\ref{tab:real:aggregate}, the three nested budgets
within each cell are treated jointly, with the label-split permutation as the
joint resampling unit.
Target: realized full-pool gold kernel profile; randomness: matched
$L$-draws conditional on $T$.}
\label{tab:joint-weaker-signals}
\begin{tabular}{lccc}
\toprule
Method &
\shortstack{Opus 4.6 judge\\+ anchor} &
\shortstack{Llama-3.1-8B judge\\+ anchor} &
\shortstack{GPT-OSS-20B judge\\+ anchor} \\
\midrule
Naive & $1.000\,[1.000,1.000]$ & $1.000\,[1.000,1.000]$ & $1.000\,[1.000,1.000]$ \\
Plug-in (judge) & $2.630\,[2.270,3.047]$ & $1.574\,[1.530,1.619]$ & $1.896\,[1.843,1.950]$ \\
Plug-in (multi) & $2.647\,[2.277,3.078]$ & $1.899\,[1.835,1.966]$ & $2.281\,[2.202,2.363]$ \\
Aug.\ plug-in & $1.250\,[1.098,1.423]$ & $1.061\,[1.030,1.093]$ & $1.167\,[1.132,1.203]$ \\
Global CV & $1.566\,[1.516,1.618]$ & $1.254\,[1.223,1.286]$ & $1.423\,[1.386,1.461]$ \\
Per-signal CV & $1.054\,[1.042,1.066]$ & $0.942\,[0.926,0.958]$ & $1.003\,[0.985,1.021]$ \\
Residual-only & $0.906\,[0.833,0.985]$ & $0.716\,[0.696,0.736]$ & $0.837\,[0.813,0.862]$ \\
Scalar prediction CV & $1.122\,[1.050,1.199]$ & $0.976\,[0.940,1.014]$ & $1.061\,[1.019,1.104]$ \\
PPCI & $1.258\,[1.143,1.384]$ & $1.064\,[1.033,1.096]$ & $1.174\,[1.138,1.211]$ \\
StratPPI profile adaptation & $0.792\,[0.726,0.864]$ & $0.657\,[0.626,0.690]$ & $0.730\,[0.695,0.766]$ \\
\method{} & $\mathbf{5.528}\,[5.181,5.898]$ & $\mathbf{3.820}\,[3.672,3.974]$ & $\mathbf{4.457}\,[4.272,4.650]$ \\
\bottomrule
\end{tabular}
\end{table}

Under both replacements, \method{} remains first overall: its RE is
3.820 versus 1.899 for the next-highest method under Llama-3.1-8B, and 4.457
versus 2.281 under GPT-OSS-20B. At the cell level, \method{} remains the best
method in 70/72 cells under Llama-3.1-8B and in 71/72 cells under GPT-OSS-20B.
(The judge-only paragraph above reports median RE across cells, whereas
Table~\ref{tab:joint-weaker-signals} reports geometric means with 95\% CIs;
the two summaries are therefore not directly comparable.)

\paragraph{Exploratory pre-label stratification.}
Using only information available before labeling, we compare uniform random
labeling with proportional allocation over $Z$ deciles, judge-score deciles,
and a $5\times5$ crossed rank grid on 100 paired splits. For each cell we form
the ratio of mean stratified-sampling MSE to mean random-sampling MSE, then
take its median across the 24 cells. At $n=50$ and $n=100$, all three median
ratios exceed one (1.526--1.672 and 1.216--1.293, respectively). At $n=200$,
the ratios are 0.912, 0.912, and 0.774, with reductions exceeding 5\% in
14/24, 13/24, and 14/24 cells. Thus simple proportional stratification is
budget-dependent and can hurt when a small label budget is dispersed across
many strata; active, uncertainty-based, and coverage-based allocation remain
future work.

\paragraph{Kernel choice.}
On the 27 ordered-benchmark--candidate--budget cells, the jittered Gaussian
version has RE above one in 27/27 cells and triangular ordinal smoothing in
25/27 cells, including 9/9 at $n=200$. Table~\ref{tab:ablation:ordinal}
reports all matched-target values.


\begin{table}[h]
\centering
\small
\caption{Ordered-profile kernel ablation. Entries are LACE RE at
$n_{\mathrm{lab}}=50/100/200$ under jittered Gaussian smoothing,
triangular ordinal smoothing, and exact within-group estimation. The
jittered-Gaussian column is the main experimental protocol and reports the
same runs as Tables~3--26; the remaining two columns rerun the estimator
under the alternative locality definitions. Each
estimator is compared with the naive estimator for its own matched
full-pool target; the table therefore tests persistence of the efficiency
gain rather than ranking targets or kernels. If a compact-support method has
no labeled mass at a target level in a replication, that level is omitted
for that replication.}
\label{tab:ablation:ordinal}
\begin{tabular}{lccc}
\toprule
Benchmark / candidate & Jittered Gaussian & Triangular ordinal & Within group \\
\midrule
MATH-500 / Claude Haiku 3 & 6.14/5.78/6.07 & 4.82/4.38/4.41 & 1.75/3.73/7.27 \\
MATH-500 / Ministral 3B & 4.19/4.73/5.20 & 2.84/3.72/4.16 & 1.36/1.76/3.23 \\
MATH-500 / Qwen3 32B & 5.47/5.38/4.32 & 3.27/3.99/3.17 & 2.10/3.35/3.63 \\
ScienceQA / Claude Haiku 3 & 5.96/5.70/5.13 & 1.47/2.00/2.99 & 0.65/1.05/1.72 \\
ScienceQA / Ministral 3B & 5.72/6.56/5.67 & 1.78/2.72/3.14 & 0.61/0.70/1.59 \\
ScienceQA / Qwen3 32B & 5.71/5.90/5.12 & 1.22/2.49/2.70 & 0.34/0.52/1.45 \\
GSM8K / Claude Haiku 3 & 6.88/6.90/5.43 & 1.29/1.18/1.15 & 0.93/1.16/1.21 \\
GSM8K / Ministral 3B & 4.94/5.40/4.46 & 1.68/1.56/2.15 & 1.14/1.33/2.23 \\
GSM8K / Qwen3 32B & 6.20/5.90/3.85 & 0.93/0.79/1.81 & 0.83/0.99/1.48 \\
\bottomrule
\end{tabular}
\end{table}

\section{Simulation study}
\label{app:sim}

We report three complementary simulation analyses. First, Settings B
and C are finite-sample mechanism stress tests for ordered discrete and
unordered categorical profiles. They examine whether locally combining
multiple signals and retaining the unlabeled-pool centering matter when the
useful signal changes across groups. Second, a continuous Setting-A experiment
uses repeated pool sampling and a shrinking bandwidth to examine the local
efficiency formula of Theorem~\ref{thm:gain}, including the effect of estimating
the local coefficient from the same labeled sample. Third, a repeated-pool
study over $24$ DGP settings---eight benchmark-derived profile marginals
crossed with three candidate-indexed outcome regimes---compares practical
estimators at realistic label budgets against analytically known population
profiles.

\subsection{Discrete-profile mechanism stress tests}
\label{app:sim:setup}

\textbf{Data-generating process.}
For each item $i$ at profile value $Z_i=z$ we draw $Y_i\mid Z_i=z\sim\mathrm{Bernoulli}(\theta(z))$ and $S_{ik}=a_k(z)+\gamma_k(z)Y_i+\sigma_k(z)\xi_{ik}$ with $\xi_i\sim\mathcal N(0,I_K)$ i.i.d.\ across items, where $\theta(z)\in(0,1)$ is the prevalence and $(\gamma_k,a_k,\sigma_k)$ are the slope, offset, and residual scale of evaluator $k$.
This yields closed-form $\Sigma_{SS}(z)=\theta(z)\{1-\theta(z)\}\gamma(z)\gamma(z)^\top+\mathrm{diag}(\sigma(z))^2$, $\Sigma_{SY}(z)=\theta(z)\{1-\theta(z)\}\gamma(z)$, $\beta^\star(z)=\Sigma_{SS}(z)^{-1}\Sigma_{SY}(z)$, and $R^2(z)=\Sigma_{SY}(z)^\top\Sigma_{SS}(z)^{-1}\Sigma_{SY}(z)/\sigma_Y^2(z)$.
The two settings share $M=10{,}000$, $K=6$, $B=300$, and
$n_{\mathrm{lab}}\in\{500,\,1{,}000,\,1{,}500\}$, and differ in the profile
space and the structure of $(\gamma_k,\sigma_k,\theta,a_k)$:

\begin{itemize}[leftmargin=*,itemsep=4pt,topsep=2pt]
\item \textbf{Setting B (ordered discrete profile).}
The pool uses a balanced fixed design over $Z_i\in\{1,\ldots,5\}$, with
$M_g=2{,}000$ items in each group.
Exactly one specialist active at each level: $\gamma_k(g)=2$ for $(g,k)\in\{(1,1),(2,2),(3,3),(4,4),(5,1)\}$ and $0$ otherwise (level $5$ reuses evaluator $1$).
$\sigma=(\tfrac12,\tfrac12,\tfrac12,\tfrac12,\,1,\,1)$, $\theta=(0.3,\,0.4,\,0.5,\,0.6,\,0.7)$, and $a_k(g)\overset{\mathrm{i.i.d.}}{\sim}\mathcal N(0,1)$ frozen across replications.
Local weights are within-group means.

\item \textbf{Setting C (unordered categorical profile).}
The pool uses a balanced fixed design over $Z_i\in\{1,\ldots,10\}$, with
$M_g=1{,}000$ items in each group.
For each $g$, $r_g\in\{1,2\}$ specialist indices are drawn uniformly without replacement, with $\gamma_k(g)=2$ and $\sigma_k(g)=\tfrac12$ at specialists and $\gamma_k(g)=0$, $\sigma_k(g)=1$ elsewhere.
$\theta(g)\overset{\mathrm{i.i.d.}}{\sim}\mathrm{Uniform}(0.25,\,0.75)$ and $a_k(g)\overset{\mathrm{i.i.d.}}{\sim}\mathcal N(0,1)$, all frozen across replications.
Local weights are within-group means with the empty-category fallback $\widehat\theta(g)=\bar Y_L$.
\end{itemize}

The designs are frozen before the Monte Carlo experiment: group
counts, $\theta(g)$, offsets, and specialist assignments remain fixed across
replications, while $Y$, $S$, and the labeled subset $L$ are redrawn. The
reported risks are therefore conditional on these prespecified designs.

\textbf{Estimators.}
We compare feasible \method{} and its oracle-coefficient counterpart with the
gold-only naive estimator, judge-only and multivariate plug-in estimators,
augmented plug-in, Global CV, Per-signal CV, Residual-only, and Scalar
prediction CV. The estimator definitions match
Section~\ref{estimators_compared}. Feasible \method{}, Global CV, Per-signal
CV, and Residual-only use the same prespecified ridge $\lambda=0.3$. The
augmented plug-in and Scalar prediction CV use the same two-fold cross-fitted
labeled-set logistic predictor. PPCI is evaluated in the continuous-profile
repeated-pool comparison below; it is not included here because the unordered
categories in Setting C have no canonical profile distance for its RKHS.

We report
$\mathrm{RE}=\overline{\mathrm{MSE}}^{\mathrm{naive}}/
\overline{\mathrm{MSE}}$ over $B=300$ replications, with MSE unweighted across
groups against $\theta(g)$. Larger RE means smaller mean MSE. We compute RE as
a ratio of Monte Carlo mean MSEs, avoiding the Jensen-type bias that would
arise from averaging per-replication ratios.

\subsection{Results}
\label{app:sim:efficiency}

Tables~\ref{tab:sim:re:B} and~\ref{tab:sim:re:C} report the two
conditional-on-design stress tests.

\begin{table}[h]
\centering
\small
\caption{Setting B (ordered discrete): conditional-on-design relative
efficiency over $B=300$ replications, with
$\overline{\mathrm{MSE}}\times 10^3$ in parentheses. Ridge-regularized
estimators use the prespecified $\lambda=0.3$; bold marks the strongest
feasible estimator. MSE is evaluated against the fixed conditional
design profile $\theta(g)$.}
\label{tab:sim:re:B}
\begin{tabular}{lccc}
\toprule
Estimator & $n_{\mathrm{lab}}{=}500$ & $n_{\mathrm{lab}}{=}1{,}000$ & $n_{\mathrm{lab}}{=}1{,}500$ \\
\midrule
Naive & $1.00\times$ ($2.09$) & $1.00\times$ ($1.20$) & $1.00\times$ ($0.79$) \\
Plug-in (judge-only) & $0.17\times$ ($12.17$) & $0.10\times$ ($11.74$) & $0.07\times$ ($11.74$) \\
Plug-in (multivariate) & $0.14\times$ ($14.92$) & $0.08\times$ ($14.62$) & $0.05\times$ ($14.62$) \\
Augmented plug-in & $1.33\times$ ($1.57$) & $1.43\times$ ($0.84$) & $1.42\times$ ($0.56$) \\
Global CV & $1.32\times$ ($1.58$) & $1.34\times$ ($0.89$) & $1.31\times$ ($0.61$) \\
Per-signal CV ($S_1$) & $1.29\times$ ($1.62$) & $1.36\times$ ($0.88$) & $1.27\times$ ($0.62$) \\
Residual-only ablation & $0.01\times$ ($286.08$) & $<0.01\times$ ($282.04$) & $<0.01\times$ ($284.26$) \\
Scalar prediction CV & $1.27\times$ ($1.65$) & $1.43\times$ ($0.84$) & $1.37\times$ ($0.58$) \\
\method{} (oracle $\beta^\star$) & $3.73\times$ ($0.56$) & $3.75\times$ ($0.32$) & $2.97\times$ ($0.27$) \\
\method{} (feasible) & $\mathbf{3.04}\times$ ($0.69$) & $\mathbf{3.22}\times$ ($0.37$) & $\mathbf{2.64}\times$ ($0.30$) \\
\bottomrule
\end{tabular}
\end{table}

\begin{table}[h]
\centering
\small
\caption{Setting C (unordered categorical): conditional-on-design relative
efficiency over $B=300$ replications, with
$\overline{\mathrm{MSE}}\times 10^3$ in parentheses. Ridge-regularized
estimators use the prespecified $\lambda=0.3$; bold marks the strongest
feasible estimator. MSE is evaluated against the fixed conditional
design profile $\theta(g)$.}
\label{tab:sim:re:C}
\begin{tabular}{lccc}
\toprule
Estimator & $n_{\mathrm{lab}}{=}500$ & $n_{\mathrm{lab}}{=}1{,}000$ & $n_{\mathrm{lab}}{=}1{,}500$ \\
\midrule
Naive & $1.00\times$ ($4.77$) & $1.00\times$ ($2.17$) & $1.00\times$ ($1.51$) \\
Plug-in (judge-only) & $0.16\times$ ($28.93$) & $0.08\times$ ($28.16$) & $0.05\times$ ($28.24$) \\
Plug-in (multivariate) & $0.24\times$ ($19.55$) & $0.12\times$ ($18.51$) & $0.08\times$ ($18.48$) \\
Augmented plug-in & $1.28\times$ ($3.72$) & $1.25\times$ ($1.73$) & $1.28\times$ ($1.18$) \\
Global CV & $1.31\times$ ($3.65$) & $1.25\times$ ($1.74$) & $1.26\times$ ($1.19$) \\
Per-signal CV ($S_1$) & $1.43\times$ ($3.33$) & $1.40\times$ ($1.55$) & $1.36\times$ ($1.11$) \\
Residual-only ablation & $0.03\times$ ($163.98$) & $0.01\times$ ($161.90$) & $0.01\times$ ($162.33$) \\
Scalar prediction CV & $1.29\times$ ($3.69$) & $1.22\times$ ($1.77$) & $1.30\times$ ($1.16$) \\
\method{} (oracle $\beta^\star$) & $5.00\times$ ($0.95$) & $3.77\times$ ($0.57$) & $3.35\times$ ($0.45$) \\
\method{} (feasible) & $\mathbf{3.22}\times$ ($1.48$) & $\mathbf{3.09}\times$ ($0.70$) & $\mathbf{2.96}\times$ ($0.51$) \\
\bottomrule
\end{tabular}
\end{table}

Feasible \method{} has the highest RE among feasible methods at every
budget in both discrete-profile settings. The larger oracle--feasible gap in
Setting C at $n_{\mathrm{lab}}=500$ is consistent with the harder local
coefficient problem: ten categories leave only about $50$ labeled observations
per category for estimating a six-dimensional coefficient.

The remaining rows isolate the intended mechanisms. A single Global CV
cannot follow a useful signal direction that changes across groups, while
Per-signal CV discards the complementary specialists. The judge-only and
multivariate plug-ins use globally parameterized logistic predictors and are
misspecified under these group-varying signal slopes, falling below the naive
estimator at all budgets. Finally, removing the unlabeled-pool centering in
Residual-only causes a much larger error, illustrating the role of the
centering term in Proposition~\ref{prop:identity}.

\subsection{Gain formula in its asymptotic bandwidth regime}
\label{app:sim:thm4}

\paragraph{Continuous specialist DGP.}
For each replication, $Z_i\overset{\mathrm{i.i.d.}}{\sim}
\mathrm{Uniform}[0,1]$ and $Y_i\mid Z_i=z\sim\mathrm{Bernoulli}(1/2)$. Five
specialist signals have triangular slopes
$\gamma_k(z)=2(1-5|z-\mu_k|)_+$ at
$\mu_k=(2k-1)/10$, $k=1,\ldots,5$, and the sixth is pure noise with
$\gamma_6\equiv0$. We set
$\sigma=(1/2,1/2,1/2,1/2,1/2,1)$ and
$a_k(z)=c_k\sin(\pi z)$ for $c=(1,-1,1,-1,1,0)$, and evaluate on
$\{0.10,0.15,\ldots,0.90\}$. The induced conditional first and second moments
are continuous at every evaluation point despite the triangular kinks, and
$B_h(z)=0$ because $\theta_0(z)\equiv1/2$.

The fixed-bandwidth Setting-A control uses $h=0.12$, whereas

Theorem~\ref{thm:gain} is a pointwise, shrinking-bandwidth statement. We
therefore run a dedicated continuous Setting-A simulation in which the entire
pool is independently redrawn in every replication (500--2{,}400 replications
per pool size). We inject the population coefficient $\beta^\star(z)$ to
isolate the variance formula from coefficient-estimation error, hold the label
fractions fixed at $\pi\in\{0.05,0.10,0.15\}$, and grow the pool to
$M=320{,}000$ with the shrinking bandwidth $h=M^{-1/3}$, so that $h\to0$ and
$nh\to\infty$ as required by the theorem; the fixed bandwidth
$h=0.12$ is retained as a diagnostic control. In this DGP $\theta_0(z)=0.5$,
$f_Z(z)=1$, and the evaluation grid is equally weighted, so the naive leading
variance is constant across the grid and the profile-level theoretical RE is
the correctly aggregated pointwise prediction rather than an unweighted
approximation to a heterogeneous variance profile.

We also evaluate the feasible estimator from
Equation~\eqref{eq:lace-feasible} in the same paired replications. It uses
coefficient bandwidth $b=h$ and the prespecified vanishing ridge sequence
$\lambda=1/(nh)$, which satisfies the sufficient conditions in
Assumption~\ref{ass:nuisance}; the same labeled observations estimate
$\widehat\beta(z)$ and the final profile, and no Monte Carlo outcome is used
for tuning.

\begin{table}[htbp]
\centering
\footnotesize
\caption{Theorem~\ref{thm:gain} in its asymptotic regime, at
$M=320{,}000$ ($h=0.0146$). Entries are profile-level RE with paired Monte
Carlo $95\%$ CIs. MSE is evaluated against the population profile
$\theta_0(z)=1/2$. The feasible column uses $b=h$ and
$\lambda=1/(nh)$.}
\label{tab:sim:thm4}
\resizebox{\textwidth}{!}{%
\begin{tabular}{ccccc}
\toprule
\shortstack{Label\\fraction $\pi$} &
\shortstack{Theorem~\ref{thm:gain}\\prediction} &
\shortstack{Empirical RE (oracle)\\$h=M^{-1/3}$ ($95\%$ CI)} &
\shortstack{Empirical RE (feasible)\\$h=M^{-1/3}$ ($95\%$ CI)} &
\shortstack{Empirical RE (oracle)\\fixed $h$ ($95\%$ CI)} \\
\midrule
0.05 & 3.245 & 3.216 $(3.101, 3.332)$ & 3.159 $(3.045, 3.272)$ & 2.332 $(2.194, 2.470)$ \\
0.10 & 2.902 & 2.916 $(2.818, 3.014)$ & 2.904 $(2.807, 3.002)$ & 2.197 $(2.070, 2.324)$ \\
0.15 & 2.625 & 2.610 $(2.522, 2.698)$ & 2.601 $(2.512, 2.689)$ & 1.994 $(1.878, 2.109)$ \\
\bottomrule
\end{tabular}
}
\end{table}

All three predictions fall inside the shrinking-bandwidth profile-level
intervals. Because Theorem~\ref{thm:gain} is pointwise, we also compare the
predicted and empirical gain at all $17$ evaluation points: the oracle
pointwise intervals contain the prediction at $15/17$, $17/17$, and $16/17$
points for $\pi=0.05,0.10,0.15$, respectively.
The feasible profile-level intervals also contain all three
predictions, and its pointwise intervals contain the corresponding prediction
at $15/17$, $16/17$, and $16/17$ points. At $M=320{,}000$, its RE is within
$1.80\%$, $0.39\%$, and $0.36\%$ of the oracle RE. Across the reported pool
sizes, the mean squared coefficient error decreases monotonically from
$0.00922$ to $0.000843$, from $0.00611$ to $0.000435$, and from $0.00475$ to
$0.000296$ for $\pi=0.05,0.10,0.15$, respectively.
The fixed-bandwidth control remains substantially below the prediction
even as $M$ increases (at
$\pi=0.05$ its RE changes only from $2.282$ to $2.332$ as $M$ grows from
$5{,}000$ to $320{,}000$, agreeing with the corresponding fixed-bandwidth
oracle result of about $2.33\times$). This isolates the gap between the
fixed-bandwidth result and the pointwise prediction as a bandwidth-regime
effect: holding $h=0.12$ fixed
mixes nearby regions whose signal coefficients vary with $z$, whereas
Theorem~\ref{thm:gain} is a shrinking-bandwidth, pointwise result.
Figure~\ref{fig:thm4-pointwise} shows the pointwise comparison directly: at
$M=320{,}000$ the oracle and feasible curves track the predicted gain
$1/\{1-(1-\pi)R^2(z)\}$ across the evaluation grid, reproducing its peaks at
the specialist centers and its troughs between them, whereas the
fixed-bandwidth diagnostic control sits well below the prediction, consistent
with the bandwidth-regime explanation above.

\begin{figure}[htbp]
\centering
\includegraphics[width=\textwidth]{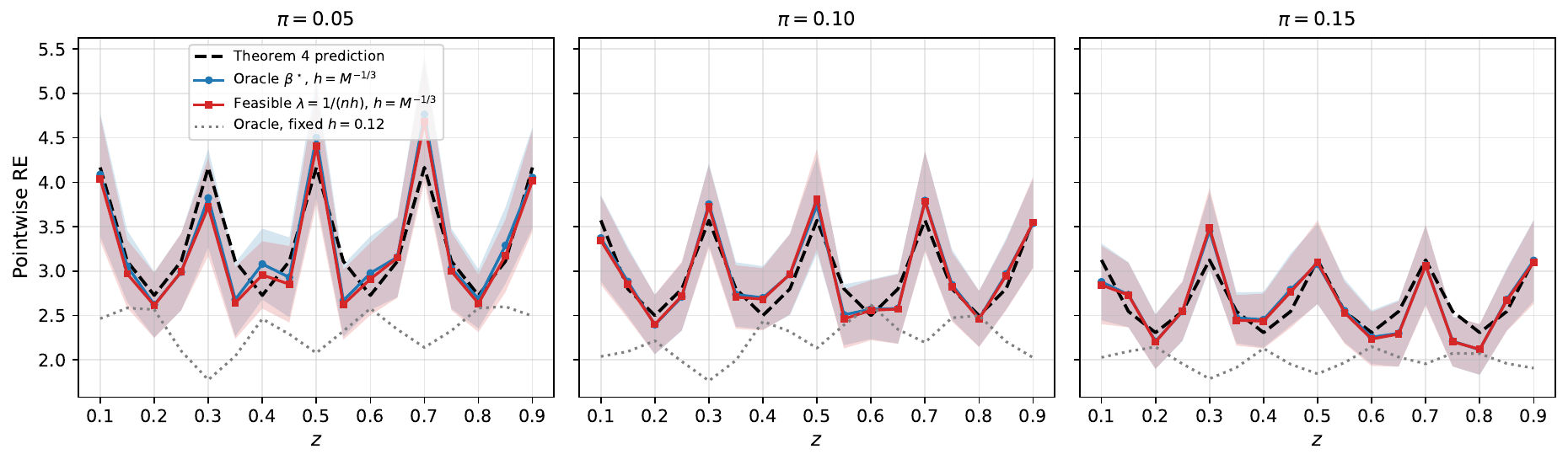}
\caption{Pointwise gain at $M=320{,}000$ for
$\pi\in\{0.05,0.10,0.15\}$. Each panel plots, across the $17$ evaluation
points, the Theorem~\ref{thm:gain} prediction $1/\{1-(1-\pi)R^2(z)\}$
(dashed, computed from the DGP's analytically known population
$R^2(z)$), the oracle-coefficient empirical RE under the shrinking bandwidth
$h=M^{-1/3}$ (circles), and the feasible estimator with vanishing ridge
$\lambda=1/(nh)$ (squares); shaded regions are paired Monte Carlo $95\%$
intervals. The fixed-bandwidth control $h=0.12$ (dotted) stays below the
prediction throughout. The oracle intervals contain the prediction at
$15/17$, $17/17$, and $16/17$ points and the feasible intervals at $15/17$,
$16/17$, and $16/17$ points.}
\label{fig:thm4-pointwise}
\end{figure}

\subsection{Superpopulation risk under repeated pool resampling}
\label{app:sim:superpop}

The fixed-pool profile MSE in the main experiments is not, by itself, an
empirical validation of the superpopulation gain theorem. We therefore add a
complementary finite-sample experiment that uses the theorem's two-stage
sampling framework and population target at realistic label budgets. The
reported quantity remains integrated finite-sample MSE, rather than the
pointwise asymptotic variance in Theorem~\ref{thm:gain}; direct validation of
that variance formula is provided by the shrinking-bandwidth experiment above.

\paragraph{The 24 prespecified data-generating settings.}
The settings form the Cartesian product
\[
24=8\ \text{benchmark-derived profile marginals}\times
3\ \text{candidate-indexed outcome regimes}.
\]
Let $d\in\{1,\ldots,8\}$ index the benchmark profile marginal and
$m\in\{1,2,3\}$ index the candidate outcome regime, so a setting is
$c=(d,m)$. The benchmark index determines $P_{Z,d}$, while the candidate
index selects a distinct conditional outcome law $p_m$. Thus the candidate
index changes the probability law rather than merely the random seed. No real
candidate outcome or judge output is reused. Concretely, benchmark $d$ takes
the $N_d$ observed source profile values
$z^{\mathrm{src}}_{d1},\ldots,z^{\mathrm{src}}_{dN_d}$ (with $N_d=500$
throughout) and converts each to
its empirical midrank in $[0,1]$: item $i$ receives
\[
z_{di}=
\frac{\#\{j:z^{\mathrm{src}}_{dj}<z^{\mathrm{src}}_{di}\}
 +\tfrac12\#\{j:z^{\mathrm{src}}_{dj}=z^{\mathrm{src}}_{di}\}}{N_d},
\]
that is, the fraction of items $j$ in benchmark $d$ ranked strictly below item
$i$, counting ties as one half. Let $P_{Z,d}$ be uniform on
$\{z_{di}\}_{i=1}^{N_d}$ and set $P_{Z,c}=P_{Z,d}$ for $c=(d,m)$.
The eight benchmark indices therefore differ through their
prespecified profile marginals and tie structures. Within each benchmark,
the candidate index selects
\[
p_m(z)=\mu_m+0.15\cos(2\pi z),\qquad
(\mu_1,\mu_2,\mu_3)=(0.35,0.50,0.65),
\]
\[
Y\mid Z=z,c=(d,m)\sim\operatorname{Bernoulli}\{p_m(z)\}.
\]
The three offsets represent lower-, middle-, and higher-accuracy
outcome regimes and ensure $p_m(z)\in[0.20,0.80]$. All settings retain the
same locally heterogeneous signal mechanism below, but their joint laws
differ through $P_{Z,d}$ and $p_m$.
With $X=2Y-1$, the six raw auxiliary signals are
\[
S^{\mathrm{raw}}=a(z)+\gamma(z)X+D\varepsilon,\qquad
\varepsilon\sim\mathcal N(0,I_6),
\]
\[
a(z)=0.75\sin(2\pi z)(1,-1,1,-1,1,0)^\top,
\]
\[
\begin{aligned}
\gamma(z)=\bigl(&0.5\cos 2\pi z,\;2\sin 2\pi z,\;2\cos 4\pi z,\\
                &2\sin 4\pi z,\;2\cos 6\pi z,\;0\bigr)^\top .
\end{aligned}
\]
\[
D=\operatorname{diag}(0.25,0.25,0.25,0.25,0.25,0.50).
\]
The first coordinate is the prespecified primary signal, coordinates $2$--$5$
are complementary smooth specialists whose useful direction changes with
$z$, and coordinate $6$ is pure noise. Each coordinate is centered and scaled
using its fixed population mean and standard deviation under DGP
setting $c=(d,m)$, before Monte Carlo sampling. All settings have a
one-dimensional ordered profile; ordered-discrete and categorical kernel choices are assessed
separately in Table~\ref{tab:ablation:ordinal}.

The sampling law, evaluation target, and loss are as follows.
For every DGP setting $c=(d,m)$, budget $n\in\{50,100,200\}$, and replication
$b=1,\ldots,B$ with $B=500$, we set $M_n=10n$, draw a fresh pool
$T_{cnb}=\{(Z_i,Y_i,S_i)\}_{i=1}^{M_n}$ i.i.d.\ from setting $c$, and sample
$L_{cnb}\subset T_{cnb}$ uniformly without replacement with $|L_{cnb}|=n$.
Every method receives the same full-pool $(Z,S)$ and gold $Y$ only on the same
$L_{cnb}$. Because the pool size $M_n=10n$ differs across budgets, the three
budgets use independent pools of sizes $500$, $1{,}000$, and $2{,}000$; unlike
the main experiments, the labeled sets are not nested across budgets.

The population target is analytically known:
$\theta_{0,c}(z)=\mathbb E_c(Y\mid Z=z)=p_m(z)$. Let $Q_d$ be the quantile
function of $P_{Z,d}$. We evaluate on the $G=20$ support points
$z_{dg}=Q_d\{0.05+0.90(g-1)/(G-1)\}$ with uniform grid weight, using the
Gaussian kernel and
\[
h_{d,n}=1.5(1.06)\operatorname{sd}_d(Z)M_n^{-1/5},\qquad b_{d,n}=h_{d,n}.
\]
For method $r$, the replication-level error is
\[
e_{cnb,r}=\frac1G\sum_{g=1}^{G}
\{\widehat\theta_{cnb,r}(z_{dg})-\theta_{0,c}(z_{dg})\}^2.
\]
The resulting risk includes benchmark-pool sampling, label subsampling, and
finite-bandwidth approximation error.

All estimators follow the definitions in
Section~\ref{estimators_compared}, with every hyperparameter fixed in advance.
\method{} and the vector or scalar control-variate variants use
$\lambda=0.3$; logistic predictors use $C=1$, with two-fold cross-fitting for
prediction-based rectifiers; scalar prediction CV uses no additional ridge;
StratPPI uses five equal-width strata; Local PPI uses local-linear bandwidth
$h_{d,n}$; and PPCI uses a Mat\'ern-$5/2$ kernel on $|z-z'|$ with its
$25$-point L-curve Tikhonov rule. No method is tuned using an
outer-replication error or a gold outcome outside its labeled subset.

It remains to describe how the setting-level results are aggregated and how
the intervals are formed. Within DGP setting $c$, budget
$n$, and method $r$,
relative efficiency is the ratio of Monte Carlo mean errors,
\[
\operatorname{RE}_{c,n,r}
=
\frac{B^{-1}\sum_b e_{cnb,\mathrm{naive}}}
     {B^{-1}\sum_b e_{cnb,r}}.
\]
Budget columns are geometric means across the $24$ DGP settings and
Overall is the geometric mean across all $72$ setting--budget combinations.
For each of $5{,}000$ bootstrap draws and each setting--budget pair $(c,n)$,
we independently resample the replication indices $\{1,\ldots,B\}$ with
replacement and recompute every RE using the selected replications. Within a
setting--budget pair, the same selected indices are used for every method,
making the intervals paired because all methods saw the same pool and labels.
Resampling is independent across budgets, matching the independent pools used
at the three budgets. The DGP settings are not resampled, so the
intervals quantify Monte Carlo uncertainty conditional on the $24$
prespecified DGP settings.

\begin{table}[htbp]
\centering
\footnotesize
\setlength{\tabcolsep}{3pt}
\caption{Repeated-pool superpopulation risk over $24$ DGP
settings formed by eight benchmark-derived profile marginals and three
candidate-indexed outcome regimes
($B=500$, $M_n=10n$, and labeled fraction $\pi=0.1$).
Budget columns are geometric-mean RE across the $24$ settings and Overall is
the geometric mean across all $72$ setting--budget combinations. Brackets are paired
Monte Carlo $95\%$ percentile intervals from $5{,}000$ bootstrap resamples of
the $B=500$ replications; ``Best'' counts setting--budget combinations where
the method is top.
MSE is evaluated against the population target
$\theta_{0,c}(z)=p_m(z)$.}
\label{tab:sim:superpop}
\resizebox{\textwidth}{!}{%
\begin{tabular}{lccccc}
\toprule
Method & RE $n{=}50$ & RE $n{=}100$ & RE $n{=}200$ & Overall & Best \\
\midrule
Naive & $1.000\,[1.000,1.000]$ & $1.000\,[1.000,1.000]$ & $1.000\,[1.000,1.000]$ & $1.000\,[1.000,1.000]$ & 0/72 \\
\method{} & $\mathbf{2.235}\,[2.212,2.260]$ & $\mathbf{2.328}\,[2.306,2.353]$ & $\mathbf{1.849}\,[1.833,1.867]$ & $\mathbf{2.127}\,[2.112,2.144]$ & 72/72 \\
Global CV & $0.953\,[0.948,0.958]$ & $0.974\,[0.971,0.978]$ & $0.990\,[0.988,0.992]$ & $0.972\,[0.970,0.975]$ & 0/72 \\
Per-signal CV & $1.168\,[1.164,1.173]$ & $1.178\,[1.174,1.182]$ & $1.138\,[1.134,1.141]$ & $1.161\,[1.158,1.164]$ & 0/72 \\
Residual-only & $0.824\,[0.812,0.837]$ & $0.579\,[0.569,0.587]$ & $0.354\,[0.349,0.360]$ & $0.553\,[0.546,0.559]$ & 0/72 \\
Plug-in (judge) & $1.063\,[1.053,1.073]$ & $0.758\,[0.750,0.765]$ & $0.493\,[0.488,0.498]$ & $0.735\,[0.729,0.741]$ & 0/72 \\
Plug-in (multi) & $0.881\,[0.874,0.887]$ & $0.672\,[0.667,0.678]$ & $0.466\,[0.462,0.471]$ & $0.651\,[0.647,0.655]$ & 0/72 \\
Aug.\ plug-in & $0.618\,[0.611,0.626]$ & $0.768\,[0.761,0.776]$ & $0.912\,[0.906,0.917]$ & $0.757\,[0.752,0.761]$ & 0/72 \\
Scalar prediction CV & $0.453\,[0.440,0.467]$ & $0.708\,[0.695,0.721]$ & $0.969\,[0.961,0.977]$ & $0.677\,[0.669,0.686]$ & 0/72 \\
Local PPI & $0.499\,[0.493,0.506]$ & $0.462\,[0.456,0.469]$ & $0.374\,[0.369,0.380]$ & $0.442\,[0.437,0.447]$ & 0/72 \\
StratPPI & $0.155\,[0.151,0.161]$ & $0.335\,[0.328,0.343]$ & $0.552\,[0.544,0.559]$ & $0.306\,[0.302,0.311]$ & 0/72 \\
PPCI & $0.727\,[0.718,0.736]$ & $0.835\,[0.826,0.843]$ & $0.936\,[0.929,0.944]$ & $0.828\,[0.823,0.834]$ & 0/72 \\
\bottomrule
\end{tabular}%
}
\end{table}

Under this finite-sample superpopulation risk, \method{} attains
geometric-mean REs of $2.235\times$, $2.328\times$, and $1.849\times$ at
budgets $50$, $100$, and $200$, respectively, and $2.127\times$ overall. It is
the best-performing method in all $72$ setting--budget
combinations; the next-best overall method is Per-signal CV at
$1.161\times$. The smaller margin than in the fixed-pool result is
consistent with the additional benchmark-pool variation and finite-bandwidth
approximation error included by this population-target criterion.

The decline at the largest budget is not caused by a changing labeled
fraction, since $\pi=0.1$ throughout. RE is a ratio of two budget-specific
finite-sample integrated risks and need not be monotone in $n$; here both
$M_n$ and the bandwidth change with the budget while the ridge penalty remains
fixed. This behavior does not contradict the pointwise asymptotic gain result.
Together with the shrinking-bandwidth experiment above, the repeated-pool
experiment shows that the practical estimator's advantage persists when both
the benchmark pool and the labeled subset are freshly sampled, while the
original real-data experiment answers the distinct fixed-pool question.

\newpage

\section{Cheap Evaluator Signal Collection}
\label{app:prompts}

This appendix documents the exact prompts and procedures used to collect the $K=6$ cheap
evaluator signals described in Section~\ref{sec:experiments}. For each model $m$ and item $i$,
signals are collected via API calls to a separate judge model (Opus~4.6). The evaluated model
itself provides the self-reported confidence signal within its original response, without an
additional API call.

\subsection{Pointwise Judge Signal ($S^{\mathrm{judge}}_{im}$)}

We query the judge model with three paraphrased prompts and average the returned scores:
$S^{\mathrm{judge}}_{im} = 3^{-1}\sum_{r=1}^3 G_{imr}$.
The variance across the three scores yields the disagreement signal
$S^{\mathrm{disagree}}_{im} = 3^{-1}\sum_{r=1}^3(G_{imr} - S^{\mathrm{judge}}_{im})^2$,
which measures sensitivity to prompt paraphrasing rather than calibrated judge uncertainty and
serves as a covariate in the local ridge regression.

\begin{tcolorbox}[
    title=\textbf{Pointwise Judge: Variant 1 (confidence rating)},
    colback=blue!5!white,
    colframe=blue!40!black,
    coltitle=black,
    colbacktitle=blue!10,
    fonttitle=\footnotesize,
    fontupper=\scriptsize,
    arc=6pt,
    boxrule=1pt,
    left=8pt, right=8pt,
    top=6pt, bottom=6pt,
    enhanced,
    breakable
]
Look at this solution carefully. Rate your confidence that the final answer is correct.\\
\vspace{0.5\baselineskip}

Question: \{question\}\\
\vspace{0.5\baselineskip}

Solution: \{response\}\\
\vspace{0.5\baselineskip}

Respond with ONLY a number between 0.0 and 1.0.
\end{tcolorbox}

\begin{tcolorbox}[
    title=\textbf{Pointwise Judge: Variant 2 (correctness evaluation)},
    colback=blue!5!white,
    colframe=blue!40!black,
    coltitle=black,
    colbacktitle=blue!10,
    fonttitle=\footnotesize,
    fontupper=\scriptsize,
    arc=6pt,
    boxrule=1pt,
    left=8pt, right=8pt,
    top=6pt, bottom=6pt,
    enhanced,
    breakable
]
Evaluate whether this solution arrives at the correct answer. Consider the reasoning quality and final answer.\\
\vspace{0.5\baselineskip}

Question: \{question\}\\
\vspace{0.5\baselineskip}

Candidate answer: \{response\}\\
\vspace{0.5\baselineskip}

Output a single number 0.0 (certainly wrong) to 1.0 (certainly correct):
\end{tcolorbox}

\begin{tcolorbox}[
    title=\textbf{Pointwise Judge: Variant 3 (expert grading)},
    colback=blue!5!white,
    colframe=blue!40!black,
    coltitle=black,
    colbacktitle=blue!10,
    fonttitle=\footnotesize,
    fontupper=\scriptsize,
    arc=6pt,
    boxrule=1pt,
    left=8pt, right=8pt,
    top=6pt, bottom=6pt,
    enhanced,
    breakable
]
You are an expert grader. Judge the correctness of this response.\\
\vspace{0.5\baselineskip}

Problem: \{question\}\\
\vspace{0.5\baselineskip}

Student response: \{response\}\\
\vspace{0.5\baselineskip}

Probability the final answer is correct (output only a number 0.0--1.0):
\end{tcolorbox}

\subsection{Pairwise Comparison Signal ($S^{\mathrm{pair},h}_{im}$)}

For each of three anchor models $a_h$ ($h=1,2,3$: weak, medium, strong), the judge compares
the evaluated model's response against the anchor's response on the same item. The comparison
is repeated with answer order swapped to mitigate position bias. The pairwise score is
$S^{\mathrm{pair},h}_{im} = (w + 0.5\,t)/2$, where $w$ and $t$ count wins and ties across the
two order-swapped calls.

\begin{tcolorbox}[
    title=\textbf{Pairwise Comparison Prompt},
    colback=green!5!white,
    colframe=green!40!black,
    coltitle=black,
    colbacktitle=green!10,
    fonttitle=\footnotesize,
    fontupper=\scriptsize,
    arc=6pt,
    boxrule=1pt,
    left=8pt, right=8pt,
    top=6pt, bottom=6pt,
    enhanced,
    breakable
]
Compare these two solutions to the question below.\\
Which solution is more likely to have the correct final answer?\\
\vspace{0.5\baselineskip}

Question: \{question\}\\
\vspace{0.5\baselineskip}

Solution A:\\
\{response\_a\}\\
\vspace{0.5\baselineskip}

Solution B:\\
\{response\_b\}\\
\vspace{0.5\baselineskip}

Reply with ONLY one of: ``A'', ``B'', or ``TIE''.
\end{tcolorbox}

\noindent
For each anchor, two calls are made: one with the evaluated model as Solution~A and anchor
as Solution~B, and one with the order swapped. This produces two verdicts per anchor,
yielding $S^{\mathrm{pair},h}_{im} \in \{0, 0.25, 0.5, 0.75, 1\}$.

\subsection{Self-Reported Confidence ($S^{\mathrm{conf}}_{im}$)}

The evaluated model reports its own confidence during answer generation. We append the
following suffix to every benchmark prompt:

\begin{tcolorbox}[
    title=\textbf{Confidence Elicitation Suffix},
    colback=orange!5!white,
    colframe=orange!40!black,
    coltitle=black,
    colbacktitle=orange!10,
    fonttitle=\footnotesize,
    fontupper=\scriptsize,
    arc=6pt,
    boxrule=1pt,
    left=8pt, right=8pt,
    top=6pt, bottom=6pt,
    enhanced,
    breakable
]
After your answer, rate your confidence from 0.0 to 1.0. Format: Confidence: X.XX
\end{tcolorbox}

\noindent
This signal requires no additional API call beyond the original model response, but the elicitation
suffix adds output tokens and may affect the generated answer. The confidence value is extracted
via regex matching on the model output.

\subsection{Signal Summary}

\begin{table}[h]
\centering
\caption{Summary of the $K=6$ cheap evaluator signals collected per item.}
\begin{tabular}{llcl}
\toprule
Signal & Description & API calls & Range \\
\midrule
$S^{\mathrm{judge}}$ & Mean of 3 pointwise judge scores & 3 & $[0,1]$ \\
$S^{\mathrm{disagree}}$ & Variance of 3 pointwise judge scores & 0 (derived) & $[0, 2/9]$ \\
$S^{\mathrm{pair},1}$ & Pairwise vs.\ weak anchor (Ministral~3B) & 2 & $\{0,0.25,0.5,0.75,1\}$ \\
$S^{\mathrm{pair},2}$ & Pairwise vs.\ medium anchor (Haiku~3) & 2 & $\{0,0.25,0.5,0.75,1\}$ \\
$S^{\mathrm{pair},3}$ & Pairwise vs.\ strong anchor (Opus~4.6) & 2 & $\{0,0.25,0.5,0.75,1\}$ \\
$S^{\mathrm{conf}}$ & Self-reported confidence & 0 additional & $[0,1]$ \\
\midrule
\multicolumn{2}{l}{Total per item} & 9 & $K=6$ signals \\
\bottomrule
\end{tabular}
\end{table}

\end{document}